\documentclass[Afour,sageh,times]{sagej}
\usepackage[utf8]{inputenc}

\usepackage[utf8]{inputenc}
\usepackage{graphicx}

\usepackage{cite}

\usepackage{soul}

\usepackage{subcaption}
\usepackage{multirow}

\usepackage{graphicx}  
\usepackage{caption}   
\usepackage{ragged2e}  

\usepackage[caption=false,font=footnotesize,position=top]{subfig}
\usepackage{xcolor}
\usepackage{import}
\usepackage{booktabs}
\usepackage{pifont}

\usepackage{lipsum}
\usepackage[normalem]{ulem}

\usepackage{optidef}
\usepackage{amsmath}
\usepackage{bm}
\usepackage{bbold}

\usepackage{array}
\usepackage{color}
\usepackage{colortbl}

\usepackage{algpseudocode}
\usepackage[linesnumbered,ruled,vlined]{algorithm2e}

\usepackage{amsmath,amssymb,amsfonts,amsthm,bbm}
\usepackage{stmaryrd}
\usepackage{mathrsfs}
\usepackage{array}
\usepackage[caption=false, font=footnotesize, position=top]{subfig}
\usepackage[usestackEOL]{stackengine}
\usepackage{textcomp}
\usepackage{stfloats}
\usepackage{url}
\usepackage{verbatim}
\usepackage{graphicx}
\usepackage{cite}
\usepackage{enumitem}
\usepackage{accents}
\usepackage{cases}
\usepackage{setspace}
\usepackage{optidef}

\newcommand{\AB}[1]{\textcolor{black}{#1}}
\newcommand{\ABB}[1]{\textcolor{black}{#1}}
\newcommand{\ab}[1]{\textcolor{black}{#1}}
\newcommand{\xmark}{\ding{55}}%
\newcommand{\cmark}{\ding{51}}%

\usepackage[font=small]{caption}

\newtheorem{definition}{Definition}
\newtheorem{assumption}{Assumption}
\newtheorem{remark}{Remark}
\newtheorem{theorem}{Theorem}

\newtheorem{problem}{Problem}

\def\volumeyear{2025}
\begin{document}

\runninghead{Bonetti, Proia, Guidetti, and Sabattini}

\title{A Traffic Management System for Large and Heterogeneous Vehicles in Narrow Industrial Environments}

\author{Alessandro Bonetti\affilnum{1}, Silvia Proia\affilnum{1}, Simone Guidetti\affilnum{2}, and Lorenzo Sabattini\affilnum{1}}

\affiliation{\affilnum{1}Department of Sciences and Methods for Engineering (DISMI), University of Modena and Reggio Emilia, Italy.\\\affilnum{2}Gruppo TecnoFerrari S.p.a. con socio unico, Italy.}

\corrauth{Silvia Proia, Department of Sciences and Methods for Engineering (DISMI), University of Modena and Reggio Emilia, Italy}

\email{silvia.proia@unimore.it}

\begin{abstract}
The coordination of Automated Guided Vehicles (AGVs) in high-density industrial environments represents a critical challenge within Logistics 4.0, as traditional traffic management methods often lead to inefficiencies \AB{caused by negotiation-based priority assignment}. \AB{To overcome the resulting limitations, this paper presents an innovative AGV traffic management system based on a Lifelong Multi-Agent Path Finding (L-MAPF) algorithm operating on roadmaps generated with Non-Uniform Rational B-Splines (NURBS) curves. The approach guarantees locally optimal coordination and ensures safe operation of large and heterogeneous AGVs. Building on this concept,}
the proposed framework integrates a modified version of the Bounded Horizon Conflict Based Search (CBS) technique within a Rolling Horizon Conflict Resolution strategy, utilizing an extended time horizon for each agent to enable effective conflict resolution in corridors identified by a topological map.
\AB{In contrast to state-of-the-art methods for AGV fleet traffic management}, the proposed solution is designed for real-world, non-standardized \AB{(i.e., non
grid-like)} industrial settings characterized by narrow bidirectional corridors and high-traffic density, where AGVs of various sizes and capabilities operate simultaneously. 
\ab{Key contributions include an anytime conflict resolution strategy with adaptive time horizon regulation, an execution layer for safe and standard-compliant interaction with real AGVs, and an advanced mechanism for deadlock detection and resolution.} Experimental results obtained in realistic industrial environments demonstrate higher throughput, with improvements of up to 11\% over a conventional rule-based traffic management system, a state-of-the-art industrial method, and a priority-based L-MAPF variant, while maintaining continuous operation and improved efficiency.
\end{abstract}

\keywords{Multi-AGV System, Traffic Management, Lifelong Multi-Agent Path Finding, Conflict Based Search, Deadlock.}

\maketitle
\section{Introduction} 
\label{intro}

In the Logistics 4.0 paradigm \citep{winkelhaus2020logistics}, which aims at creating intelligent, interoperable, and autonomous environments, the problem of managing and controlling fleets of mobile robots is attracting enormous interest \citep{proia2022safe,azadeh2019robotized}.

In order to develop an automated industrial system, Automated Guided Vehicles (AGVs) play a crucial role due to their ability to optimize material handling tasks, improve inventory management, enhance safety, and increase operational efficiency \citep{articleAGV}.

Effective AGV operation requires addressing two primary concerns: task assignment and traffic management \citep{robotics10020072}. On the one hand, task assignment consists of dynamically assigning AGVs to specific missions, such as transporting goods between different locations, while considering factors like vehicle availability, task urgency, and overall system efficiency. 
On the other hand, traffic management focuses on generating safe and efficient paths for AGVs and coordinating their movement throughout the environment to ensure smooth operation. Key aspects include establishing the right-of-way and managing AGV interactions in shared spaces, with the objective of preventing collisions and deadlocks while minimizing downtime.


However, to boost productivity and operational effectiveness, traditional rule-based traffic management systems are no longer sufficient to meet the growing demands of modern logistics operations. Specifically, rule-based coordination on AGV paths can lead to inefficiencies, such as bottlenecks \AB{and inappropriate priority handling}, reducing overall efficiency \citep{10132864}. To overcome \AB{the above} limitations, research and industrial sectors are increasingly turning to more advanced, data-driven traffic management solutions that can adapt in real-time to changing conditions and improve performance without relying on fixed rules \citep{DERYCK2020152}. In this context, the application of Multi-Agent Path Finding (MAPF) algorithms is essential to facilitate the navigation of AGVs \AB{\citep{WAGNER20151,dergachev2021distributed, OKUMURA2022103752}.}
\AB{The MAPF problem, a well-established combinatorial challenge in robotics and artificial intelligence, focuses on identifying collision-free trajectories for agents with predetermined start and goal locations within a specified environment \citep{zhang2024priority}.} Traditional MAPF approaches primarily address a static, ``one-shot" version of the problem. However, to handle dynamic scenarios where tasks change over time \AB{and where operating conditions are affected by uncertainties such as vehicle alarms, manual interventions, or safety scanner activations}, the Lifelong MAPF (L-MAPF) framework is introduced. Specifically, the lifelong approach continuously resolves MAPF instances in real-time, accounting for a series of tasks throughout the operational lifespan of the agents \citep{li2021lifelong}. By coordinating the simultaneous movement of multiple AGVs, L-MAPF algorithms ideally prevent collisions and deadlocks in dynamic environments.


\begin{figure}
     \centering
\captionsetup{justification=justified}
     \begin{subfigure}[b]{\linewidth}
         \centering
         \includegraphics[width=0.9\textwidth]{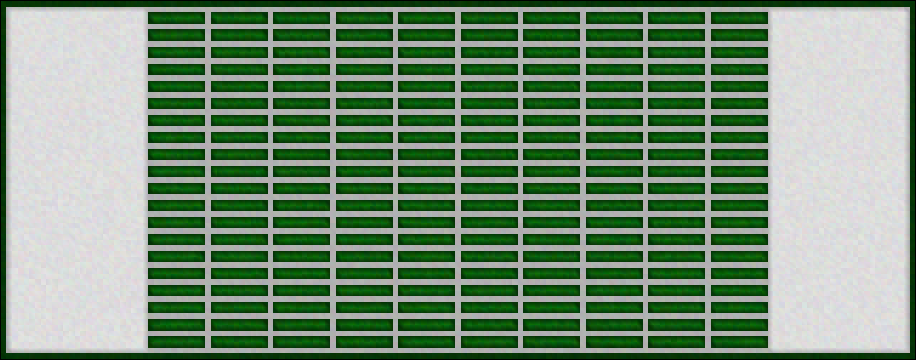}
         \caption{$ $}
         \label{fig:1.2}
     \end{subfigure}
     \hfill
     \begin{subfigure}[b]{\linewidth}
         \centering
         \includegraphics[width=0.9\textwidth]{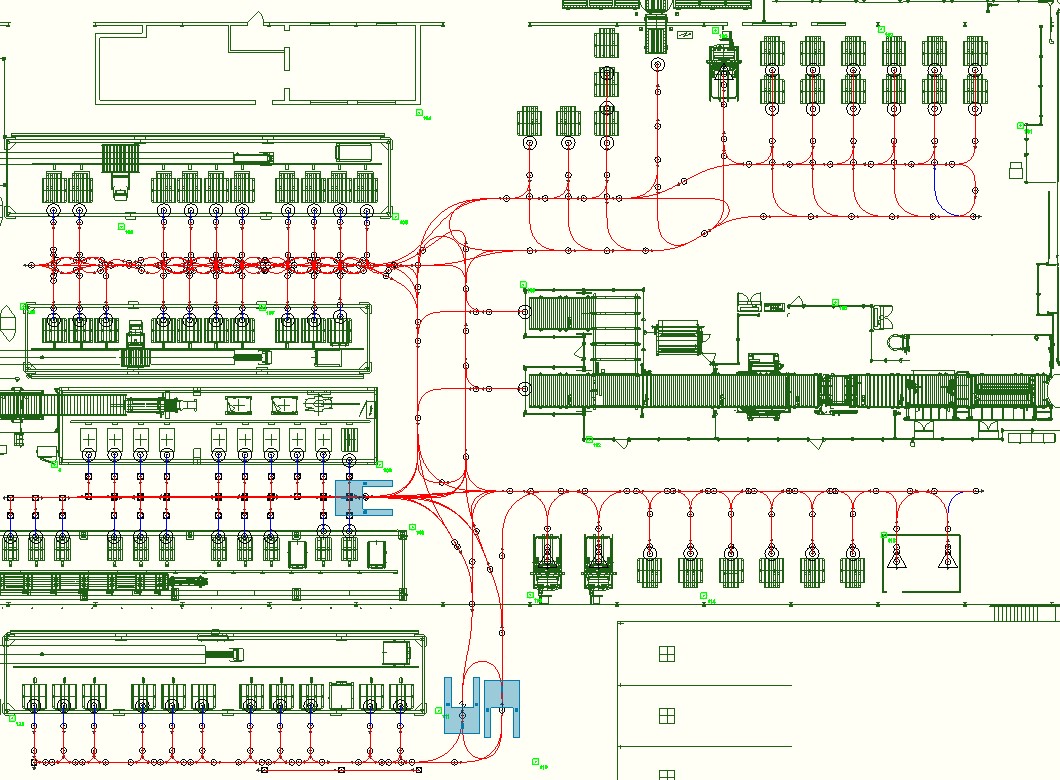}
         \caption{$ $}
         \label{fig:1.1}
     \end{subfigure}
        \caption{
        The figure illustrates a standardized warehouse environment modeled as a gridmap of vehicle-sized cells \citep{stern2019mapf} (a) compared to a non-standardized complex layout, which depicts a plant of our use case derived from a real application (b). Specifically, (b) presents the plant configuration, where green lines represent stations, machinery, battery chargers, and other relevant features within the layout, red lines indicate the roadmap, and blue shapes represent AGV footprints.}
        \label{fig:intro}
        \vspace{-0.5cm}
\end{figure}

Nevertheless, the majority of L-MAPF algorithms have been applied to standardized, \AB{i.e., grid-like,} environments \citep{song2023anytime,stern2019mapf} (see Fig.\ref{fig:1.2}), and rely on \AB{assumptions} that may limit their applicability to more complex systems typically encountered in industrial settings \citep{gao2023review}, \AB{such as homogeneous vehicles and unit-time actions.} \ABB{The latter implies that either all agents move at identical speeds over uniform edges, or that motion profiles are artificially adjusted so that each traversal consumes exactly one discrete time step \citep{andreychuk2022multi}. While this abstraction simplifies theoretical analysis, it significantly constrains real-world applicability. Industrial AGVs usually operate on roadmaps, where traversal time naturally results from the chosen velocity profile and the segment length. Hence, enforcing unit-time actions prevents accurate modeling of heterogeneous traversal times and may lead to inefficient coordination.} 
Additionally, the inherent constraints of real-world implementations, coupled with unpredictable conditions, can still lead to deadlocks \citep{app11146494}. Therefore, to effectively mitigate the above issues, even the most advanced L-MAPF systems must integrate mechanisms for detecting and resolving deadlocks, \AB{ensuring conflict-free and continuous AGV operation.}

In this paper, we address the challenges of traffic management for large and heterogeneous AGVs operating in complex, non-standardized industrial environments (see Fig.\ref{fig:1.1}). \ABB{The ultimate goal of the present work is to design and validate an integrated, deployable architecture to support AGV operations in automated warehouses and intralogistics plants, advancing the state-of-the-art in practical multi-agent coordination.} \ABB{In fact,} although we present a complete traffic management system, our primary focus is on fleet coordination. 
\ABB{Specifically, we propose a novel coordination strategy based on an L-MAPF algorithm designed to periodically compute collision-free trajectories that meet the demands of real-world scenarios. To ensure safe execution of the L-MAPF solutions under execution uncertainties, we introduce a path allocator module that bridges the gap between discrete trajectories and continuous AGV motion.}
Additionally, we introduce a cutting-edge strategy for deadlock detection and resolution that ensures continuous operational flow. Extensive testing in both simulations and real-world environments demonstrates real-time capability of the proposed solution and significant improvements in throughput compared to \ABB{a} traditional rule-based coordination \ABB{strategy, a state-of-the-art industrial solution, and an alternative L-MAPF method}.

\AB{The remainder of this paper is structured as follows.
Section \ref{cont} summarizes the related works and sheds light on the main contributions of this work, positioning them with respect to the state-of-the-art. Section \ref{pre} presents the problem statement and introduces key concepts from graph theory used to model the environment, as well as the fundamental notions required to represent the AGVs, together with the related assumptions. The architecture of the proposed traffic management system is then presented in Section \ref{Traffic Management System Architecture}. The environment model, organized into two distinct layers, is described in Section \ref{EnvironmentModel}. Section \ref{PathandTrajectoryDefinition} details the path generation and the trajectory definition. The proposed methodology, including the mathematical formalization of the L-MAPF coordinator, is provided in Section \ref{lampfcoordinator}. Section \ref{enabler} illustrates the path allocator, which ensures the safe execution of the coordinated trajectories, while Section \ref{DeadlockDetectorandHandler} discusses the deadlock detection and resolution module. The experimental setup and the corresponding results are presented in Section \ref{set}. Finally, concluding remarks are provided in Section \ref{Concl}.}


\begin{table*}[t!]
\centering
\vspace{2mm}
\caption{Summary of works related to coordination strategies for AGV traffic management.}
\label{table1}
\resizebox{\textwidth}{!}{%
\begin{tabular}{@{}cccccccccc@{}}
\toprule
\textbf{Algorithm} & \textbf{Ref. No}  & \textbf{Max} & \textbf{Graph} & \textbf{\AB{Local}} & \textbf{\AB{Anytime}} & \textbf{Large \&} & \textbf{Non-Standardized}& \textbf{High-Traffic} & \textbf{Real} \\ 
\textbf{Type} &   & \textbf{Simulated Vehicles} &  & \textbf{\AB{Optimality}} & \textbf{\AB{Strategy}} & \textbf{Heterogeneous Agents} & \textbf{Environment}& \textbf{Density} & \textbf{Implementation} \\ 
\midrule
 & \citet{digani2015ensemble}  & 30 & Topological map + Roadmap & \AB{\xmark} & \AB{\xmark} & \xmark & \xmark & \xmark & \xmark
 \\

\textbf{Rule-based} & \citet{10419190}& 10 & Roadmap & \AB{\xmark} & \AB{\xmark} & \xmark & \xmark & \cmark & \xmark 
 \\

 & \citet{10132864}  & 8& Topological map + Roadmap & \AB{\xmark} & \AB{\xmark}& \xmark & \cmark & \cmark & \cmark \\

& \citet{9355019}  & 3 & Roadmap & \AB{\xmark} & \AB{\xmark} & \cmark& \xmark & \xmark & \xmark\\

\midrule
 & \citet{9782398}  & 20 & Gridmap & \AB{\xmark} & \AB{\xmark} & \xmark & \xmark & \xmark & \cmark
  \\

  & \citet{XIE2024102945}  & 20 & Gridmap & \AB{\xmark} & \AB{\xmark} & \xmark & \xmark & \xmark & \xmark
\\
  
 \textbf{Search-based}& \citet{9171550}  & 1008 & Topological map + Roadmap & \AB{\cmark} & \AB{\xmark} & \xmark & \xmark & \cmark & \xmark
  \\
& \AB{\citet{song2023anytime} }& \AB{150} & \AB{Topological map + Gridmap} & \AB{\cmark} & \AB{\cmark} & \AB{\xmark} & \AB{\xmark} & \AB{\cmark} & \AB{\xmark }\\

& \textit{ours} & 20 & Topological map + Roadmap & \AB{\cmark} & \AB{\cmark} & \cmark & \cmark & \cmark & \cmark\\
\bottomrule
\end{tabular}
} 
\end{table*}

\section{Related Works and Paper Contributions}
\label{cont} 

\subsection{Related Works}
\label{relwork}

In the scientific and industrial fields, the efficient coordination of high-density AGV fleets in irregular and non-standardized industrial settings presents significant challenges due to the constrained solution space and limited system configurations available to ensure both optimal and continuous operation while avoiding collisions and deadlocks. Typically, in real AGV systems, large and heterogeneous vehicles are restricted to following a predefined set of virtual or physical paths, known as a roadmap. This is particularly necessary for large, heavy vehicles operating in confined or irregular spaces, such as narrow corridors, where safety and robustness must be ensured.

Over the past decade, a variety of studies have emerged in the literature concerning the coordination of AGV fleets, which are generally classified into two main categories based on their solvers: rule-based and search-based methods (see Table \ref{table1}) \citep{gao2023review}.

Rule-based coordination builds upon and enhances the traditional concept of industrial traffic management systems by utilizing traffic and negotiation rules to assign priority \AB{and thus to solve conflicts between vehicles}. For instance, in~\citet{digani2015ensemble}, conflicts among AGVs are managed using a hybrid approach that integrates a priority-based negotiation mechanism with a resource allocation strategy, where resources are defined as intersection areas within a structured and organized industrial layout. Despite simulations being conducted with up to thirty AGVs, vehicles are homogeneous and the coordination is performed within fairly standardized settings \AB{without providing any formal guarantee of optimality}. \AB{The experiments are also limited to low-traffic conditions, with no more than four vehicles per sector.}


An alternative method, proposed by \citet{10419190}, presents an improved dynamic resource reservation that exploits multiple dynamic reservations of shared resource points combined with a set of traffic rules that detects, classifies, and solves conflicts as they eventually arise. However, such rules lack general applicability, and the achieved coordination \AB{relies on heuristics}. Furthermore, the approach has only been tested in highly simplified environments with homogeneous vehicles and has not yet been implemented in real-world scenarios.

To overcome the above limitations, \citet{10132864} introduces a Coordination Diagram to solve conflicts by establishing negotiation rules for each timestep within a given time horizon. The proposed space-time approach enables the application of simpler rules that can be easily adapted to real-world, non-standardized, high-traffic environments. However, this coordination strategy has only been tested with eight homogeneous vehicles and does not guarantee \AB{any degree of optimality}.

A more adaptive approach is illustrated in \citet{9355019}, where the authors combine Behavior Trees with Reinforcement Learning to adaptively choose the most effective rule-based strategy from a set of available optional strategies according to diverse, dynamic, and complex situations in Industry 4.0 settings. Despite its potential, this study is limited to a specific and simplified environment involving only three operational vehicles, indicating that further developments are necessary for effective AGV coordination in real industrial scenarios.

Therefore, rule-based solvers exhibit high-speed computation and are capable of addressing medium-scale problems, \AB{but they provide no formal performance guarantees~\citep{gao2023review}}.
\AB{They also require substantial expert configuration to ensure consistent coordination across different industrial plants.}

On the other hand, search-based methods eliminate the need for predefined rules and employ algorithms to methodically address the challenge of vehicle coordination. For instance, in \citet{9782398}, a Coloured Petri Net is employed to coordinate a fleet of AGVs, effectively eliminating active deadlocks and preventing passive ones. This approach has been successfully simulated with up to twenty vehicles and tested in real facilities. However, it \AB{does not guarantee optimality and} is limited to standardized environments modeled as gridmaps. 

\AB{Building on the existing literature,~\citet{XIE2024102945} introduces a centralized planner based on an improved conflict-free A* algorithm for AGV fleet coordination. The proposed algorithm successfully manages the coordination of twenty vehicles, but it does not guarantee optimality and operates solely within a standardized environment that do not reflect real industrial constraints.}

For achieving optimal navigation of AGVs, traffic management of AGV fleets must rely on algorithms specifically designed to address the L-MAPF problem. 
\AB{One remarkable example, presented in~\citet{9171550} has effectively applied this methodology to an AGV coordination framework. In particular, the work employs a hierarchical approach to predict traffic flow within sectors of a topological map and to plan vehicle paths accordingly.}
\AB{Conflicts are resolved in a decentralized manner through the use of Conflict Based Search (CBS) and Cooperative A* algorithms, enabling} the coordination of a large number of agents, potentially up to a thousand, in a \AB{locally} optimal manner.
\AB{However, the considered fleet is homogeneous, and the environment remains highly standardized, consisting of sectors featuring a single intersection connected by unidirectional lanes without intersecting stations. Each sector contains a roadmap represented as a fully discretized graph composed of unit-time segments, where both intersections and lane portions are modeled as edges with uniform traversal duration.}
Furthermore, a significant simplifying assumption is introduced: ``for each new task, the pick-up station and its corresponding working station must differ from those of any unaccomplished tasks already assigned''. In numerous industrial settings, particularly in operations involving palletizers and pallet wrappers, it is common practice to install fewer machines than the number of AGVs in operation. This is due to the high operating speeds of the machines and the imperative to minimize costs. Consequently, coordinating two or more AGVs is often necessary to serve the same working station, ensuring a smooth and efficient workflow. As a result, this method struggles to coordinate vehicles in irregular spaces with narrow, bidirectional corridors and fails to efficiently manage situations where multiple vehicles need to service the same destination station. 

\AB{Building on the Rolling Horizon Conflict Resolution (RHCR) framework \citep{li2021lifelong}, \citet{song2023anytime} introduces an Anytime L-MAPF algorithm on topological maps. The approach first computes locally bounded sub-optimal solutions using Enhanced CBS (ECBS) on a gridmap and then refines them iteratively by applying CBS to the topological map under a fixed computation timeout. In this manner, fast initial solutions are generated and progressively improved over time, enhancing scalability under high-density conditions. However, the proposed framework achieves higher coordination efficiency only in highly standardized environments composed exclusively of corridors, and it further relies on the simplifying assumption that each action has unit time duration. Moreover, the evaluation is limited to simulation, where up to one hundred fifty homogeneous agents are coordinated through a periodic replanning interval of ten seconds. Such an update cycle remains incompatible with the responsiveness required in dynamic industrial operations characterized by frequent task reassignment, human intervention, and uncertainty. Although the method provides a mechanism for gradual refinement of sub-optimal solutions, it still lacks real-time adaptability and neglects heterogeneity in kinematic constraints and motion capabilities.}

Hence, search-based methods can mitigate deadlock formation and can theoretically attain optimal performance using L-MAPF algorithms. However, current solutions in the literature fall short in effectively coordinating fleets of large and heterogeneous AGVs within non-standardized environments characterized by narrow, bidirectional corridors, columns, and other irregularities, particularly under conditions of high-traffic density. Many existing approaches rely on strong assumptions that are not typically applicable to complex industrial settings.

More broadly, existing L-MAPF algorithms exhibit several structural limitations, including the common assumption of homogeneous agents and unit-time actions (\citet{li2021lifelong,madar2022leveraging,9981785,10.5555/3091125.3091243,damani2021primal}, \ABB{\citet{11127445}}). \ABB{Moreover}, most formulations operate on gridmaps, where each cell represents a vehicle-sized unit, which considerably constrains the operational space for larger AGVs such as forklifts. The resulting discretization often leads to inefficiencies and limits applicability in non-standardized industrial environments characterized by narrow corridors, irregular geometries, and obstacles including columns or machinery. \ABB{The underlying assumptions, although beneficial for theoretical analysis, computational tractability, and formal guarantees, ultimately limit the applicability of L-MAPF algorithms in real-world coordination scenarios involving heterogeneous fleets and complex continuous roadmaps.}

\ABB{Moving toward the MAPF domain, several solutions can be found in the literature that relax core assumptions, such as unit-time actions and grid-based representations.} \ABB{A notable study is the work of \citet{andreychuk2022multi}, which introduces Continuous Time CBS (CCBS), a continuous-time MAPF formulation that allows agents to follow continuous trajectories in both space and time, without relying on discrete grid representations or unit-timesteps. The authors claim optimality and completeness under their modeling assumptions. However, some recent discussions in the literature have pointed out potential theoretical limitations of CCBS \citep{li2025cbs}, and the approach remains computationally demanding, which may restrict its applicability under strict real-time constraints, particularly in lifelong operational settings.}

\ABB{To address scalability in continuous-time MAPF on roadmaps with variable-duration actions, \citet{9811344} propose a prioritized planning framework termed Prioritized Safe-Interval Path Planning with Continuous Time Conflicts (PSIPP/CTCs).
The method introduces CTCs, defined as pairs of roadmap elements associated with time intervals that lead to collisions. By precomputing and storing CTCs offline, PSIPP/CTCs enables efficient safe-interval reasoning and achieves impressive scalability. However, the formulation assumes homogeneous agents without explicitly modeling kinematic capabilities. Moreover, as a prioritized approach, PSIPP/CTCs does not provide any optimality guarantees and may suffer from priority-order sensitivity, which can become critical in dense, highly interactive, and complex traffic scenarios.} \ABB{Similarly, \citet{zhou2025loosely} propose a loosely synchronized rule-based MAPF framework designed to handle asynchronous actions of variable duration. Although the method scales to large agent populations under runtime constraints, it explicitly relies on rule-based coordination and trades optimality for scalability, providing no optimality guarantees. Moreover, the formulation models agents as points occupying graph vertices rather than physical entities with geometric footprints.} 

\ABB{A further relevant contribution is provided by \citet{wen2022cl}, which extends classical grid-based MAPF to continuous workspaces for car-like robots with kinodynamic constraints. Their Car-Like CBS (CL-CBS) combines a high-level body conflict tree with Spatiotemporal Hybrid-State A* (SHA*) as the low-level planner. Although the method has been demonstrated in simulation and validated on physical robots, its computational complexity remains significant for real-time operation in moderately dense environments. Moreover, the limited set of motion primitives may restrict flexibility, and the absence of collision checking during analytical expansions phase of SHA* may raise safety concerns in precise industrial maneuvers, such as docking or operation in tight spaces near the goal.} \ABB{Recently, to explicitly account for kinodynamic constraints and smooth speed profiles on curve-based roadmaps, \citet{yan2024multi} introduce a three-level MAPF-based motion planner that integrates Priority Based Search (PBS) \citep{ma2019searching}, Safe Interval Path Planning (SIPP) \citep{phillips2011sipp}, and a Bézier-curve-based optimization layer. The method demonstrates improved solution quality and scalability compared to baselines in a traffic intersection environment. However, it relies on prioritized search and therefore does not provide any global optimality guarantees. Moreover, the framework models homogeneous agents and has only been tested in a simplified intersection environment.}

\ABB{Overall, recent MAPF research has achieved significant progress in relaxing modeling assumptions, including grid-based abstractions, unit-time actions, and simplified agent constraints and geometries. 
However, existing formulations typically do not jointly address all these aspects within a unified framework and therefore do not provide coordination architectures that simultaneously support heterogeneous agent geometries and capabilities, non-grid industrial layouts, variable-duration actions, execution uncertainty, optimality guarantees, and lifelong task allocation under real-time constraints.
Consequently, a gap remains between advances in MAPF theory and integrated traffic management architectures capable of reliable deployment in large-scale industrial AGV systems.}

\subsection{Contributions}

As illustrated in Section \ref{relwork}, the literature lacks solutions capable of coordinating \ABB{in real-time} large and heterogeneous AGVs in non-standardized, high-traffic environments, e.g., the scenario depicted in Fig. \ref{fig:1.1}, while ensuring \AB{locally} optimal performance. On the one hand, rule-based methods are capable of effectively addressing problems of up to medium scale. However, their reliance on static and unyielding rules prevents them from ensuring general applicability or providing any formal performance guarantees. This limitation becomes particularly critical in traditional AGV traffic management systems, where coordination efficiency directly affects the overall productivity of plants and warehouses. Additionally, the majority of works either rely on oversimplifications or lack real-world implementation, making them unsuitable for use in operational facilities. 

On the other hand, search-based methods can systematically coordinate fleets of AGVs and provide guarantees on solution quality through L-MAPF algorithms. However, applicability remains limited in real-world environments, as most methods rely on restrictive assumptions such as homogeneous agents and unit-time action durations. The resulting simplifying assumptions, together with grid-based environment representations, make theoretical analysis feasible but do not capture the complexity of industrial scenarios involving large, heterogeneous vehicles operating on continuous roadmaps within non-standardized environments.


\ab{Although recent MAPF research has progressively relaxed individual assumptions, e.g., through formulations that move beyond unit-time actions and grid-based representations \citep{andreychuk2022multi,9811344,zhou2025loosely} or kinodynamic extensions for car-like agents \citep{wen2022cl,yan2024multi}, 
most existing approaches focus on isolated variants of the problem and therefore do not account for the combined impact of multiple realistic industrial constraints on coordination complexity.}

Motivated by the above considerations, this paper extends our previous work \citep{bonetti2024agv} and introduces an integrated AGV traffic management architecture tailored to real industrial environments. \ab{The main contribution is formulated at the system level and addresses the preservation of real-time solvability and locally optimal coordination under simultaneous integration of heterogeneous agent geometries and kinematics, continuous roadmap representations, non-uniform traversal times, and rolling-horizon replanning in high-density traffic conditions. Combining all requirements introduces strong interdependencies that invalidate assumptions underlying classical MAPF and L-MAPF formulations and prevent direct applicability of existing approaches under realistic industrial constraints.}

The proposed system relies on an L-MAPF algorithm operating on roadmaps constructed using Non-Uniform Rational B-Splines (NURBS), ensuring geometric continuity and compatibility with non-holonomic constraints. A modified version of Bounded Horizon CBS is integrated within a rolling-horizon conflict resolution (RHCR) framework, \ab{where the planning horizon is extended at the individual agent level to achieve effective conflict resolution in corridors identified through a topological abstraction.}


Unlike our previous work \citep{bonetti2024agv}, which, to the best of the authors’ knowledge, is the only one in the related literature addressing AGV coordination with guaranteed \AB{locally} optimal performance in real industrial environments, \ab{the present study introduces additional components and extensions required for deployment in operational settings. The contribution consists in defining an architectural decomposition that supports operation under realistic industrial conditions, together with the algorithmic extensions required to ensure consistency across the different functional components of the system, including environment modeling, planning, execution, and conflict management. Each component addresses a specific limitation of existing approaches and contributes to overall system performance. The main contributions are summarized as follows:}


\noindent


\textit{i)} introducing an anytime extension of Bounded Horizon CBS that preserves real-time responsiveness while progressively improving coordination quality toward eventual optimality and completeness. \ab{The proposed strategy computes locally optimal bounded-horizon solutions under strict computational constraints and progressively extends the planning horizon as additional computation becomes available. Differently from conventional anytime MAPF approaches, which typically refine globally suboptimal solutions, the proposed method preserves local optimality at every iteration while converging toward the solution produced by standard CBS;}



\noindent
\textit{ii)} introducing a path allocator module that formalizes the execution layer of the traffic management system\ab{, enabling  interaction between discrete L-MAPF coordination plans and continuous, uncertain motion of industrial vehicles while ensuring safe and standard-compliant operation with real AGVs. The proposed module fills a gap in existing AGV traffic management literature, where execution mechanisms connecting coordination plans and physical vehicle behavior are generally not explicitly modeled;}

\noindent
\textit{iii)} presenting a novel deadlock detection and resolution strategy to enhance the system’s robustness, particularly in dynamic environments where tasks may change unexpectedly. This strategy is based on another of our works \citep{10.1007/978-3-031-76428-8_50} and further elaborated upon in the current manuscript. Our approach ensures real-time adaptability, allowing AGVs to efficiently navigate high-traffic areas and minimize operational interruptions;

\noindent
\textit{iv)} \ABB{adopting and integrating an environment model that supports heterogeneous AGV operation and better captures the geometric characteristics of spatial interactions, ensuring precise collision detection in confined areas without compromising computational efficiency;}

\noindent
\textit{v)} 
validating the approach in real-world industrial facilities and benchmarking it against a traditional rule-based traffic management system, a state-of-the-art industrial solution, and a comparative L-MAPF method. The results demonstrate superior performance in terms of throughput, optimality of the proposed coordination framework, and effectiveness of the deadlock detection and resolution strategy. Additionally, our solution consistently achieves real-time performance in high-traffic, cluttered environments and adapts well to medium-scale and standardized settings, offering flexibility across various logistics and production scenarios.

\section{Preliminaries}
\label{pre}

In this section, we outline \AB{the problem statement (Section~\ref{ps})}, the key concepts to model the environment (Section~\ref{def}), and assumptions that ensure the feasibility of the proposed approach (Section~\ref{ass}).

\color{black}
\subsection{\AB{Problem Statement}}
\label{ps}



We consider a non-standardized industrial environment where a fleet of large and heterogeneous AGVs operates along a roadmap that captures the geometric and operational constraints of the facility, including narrow bidirectional corridors, intersections, and irregular layouts.
The vehicles differ in size, kinematic behavior, and motion capabilities, and are required to perform a sequence of pick-up and drop-off operations within a shared workspace.

The objective of the traffic management system is to generate coordinated motion plans that enable all vehicles to navigate safely and efficiently while completing their assigned tasks.
The problem integrates path planning and fleet coordination into a unified framework that ensures collision-free and dynamically feasible motion, while simultaneously detecting and resolving deadlocks and minimizing overall traversal cost in terms of throughput and travel time.
Safe operation must be guaranteed by avoiding both spatial and temporal conflicts, maintaining continuous traffic flow even under high-density traffic conditions.

Industrial environments are inherently dynamic, as tasks may be reassigned, temporary obstacles can appear, and operational priorities may change during execution. The coordination framework must therefore operate in real-time, continuously adapting vehicle trajectories to the evolving system state, variations in task assignment or workspace configuration, and situations that may result in deadlocks such as blocked corridors or mutual agent waiting.

\color{black}

\subsection{Definitions}
\label{def}

\AB{To better understand the different steps of the proposed approach,  we introduce the following definitions, grouped into environment-related~\citep{bondy2008graph} and AGV-related concepts.}

\AB{We first introduce the definitions related to the representation of the environment, which formalize the roadmap structure and its topological properties.}

\begin{definition}[Directed Graph]
A directed graph \(\mathcal{G}\) is characterized by a set \(\mathcal{V}(\mathcal{G})\) of vertices or nodes and a set  \(\mathcal{E}(\mathcal{G}) \subseteq \mathcal{V}(\mathcal{G}) \times \mathcal{V}(\mathcal{G})\) of edges.
\end{definition}

\begin{definition}[Subgraph]
A subgraph \(\mathcal{K}\) of \(\mathcal{G}\) is a graph whose vertex set \(\mathcal{V}(\mathcal{K})\) and edge set \(\mathcal{E}(\mathcal{K})\) are subsets of \(\mathcal{V}(\mathcal{G})\) and \(\mathcal{E}(\mathcal{G})\), respectively.
\end{definition}

\begin{definition}[Path]
A path of length \(L\) is defined on \(\mathcal{G}\)
as an ordered sequence of vertices $\lbrace v_1, v_2, \ldots, v_L \rbrace \subseteq \mathcal{V}(\mathcal{G})$ such that an edge exists in \(\mathcal{E}(\mathcal{G})\)  from vertex \(v_i\)
to vertex \(v_{i+1}\), for $i \in \lbrace 1,  \ldots, L-1 \rbrace$.
\end{definition}

\begin{definition}[Graph Connectivity]
A graph \(\mathcal{G}\) is said to be connected if a path exists between each pair of vertex in $\mathcal{V}(\mathcal{G})$. Given any pair of vertices \(v^{m}\), \(v^{n}\), a directed graph is considered strongly connected if a directed path can be defined from \(v^{m}\) to \(v^{n}\) and vice versa.
\end{definition}

\begin{definition}[Minimum Cost Path]
Given a graph \(\mathcal{G}\), the minimum cost path between vertex \(v_{i}\) and vertex \(v_{i+1}\) is determined by the edge weights \(\omega_{i,i+1}\).
\end{definition}

\begin{definition}[Location]
A location $p$ corresponds to a specific pose a vehicle can assume within the environment, expressed as \(q = [x, y, \theta]^ \top \in SE(2)\), where $x, y$ represent the linear positions along the x--y axes and $\theta$ the orientation.
\end{definition}

\begin{figure}
     \centering
\captionsetup{justification=justified}
     \begin{subfigure}[b]{0.45\linewidth}
         \centering
         \includegraphics[height= 2.5 cm]{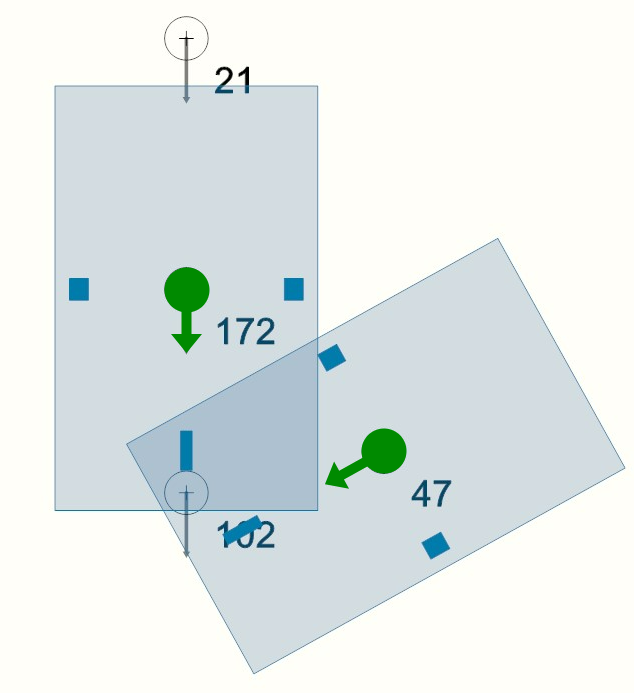}
         \caption{Location-Location collision}
         \label{fig:collision l-l}
     \end{subfigure}
     \hfill
     \begin{subfigure}[b]{0.45\linewidth}
         \centering
         \includegraphics[height= 2.5 cm]{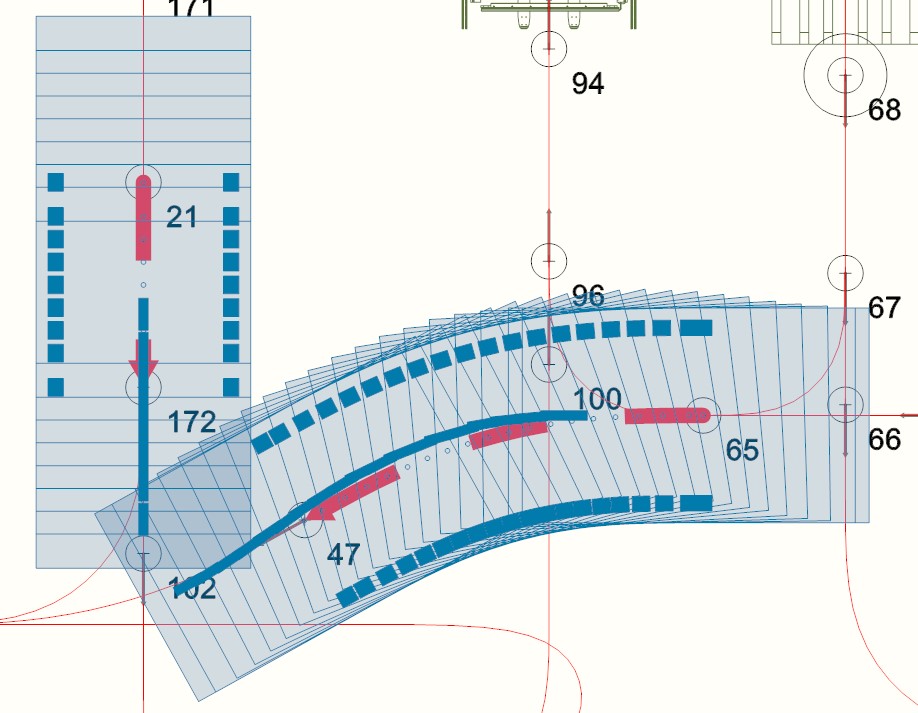}
         \caption{Segment-Segment collision}
         \label{fig:collision s-s}
     \end{subfigure}
        \hfill
          \begin{subfigure}[b]{0.45\linewidth}
         \centering
         \includegraphics[width=0.9\textwidth]{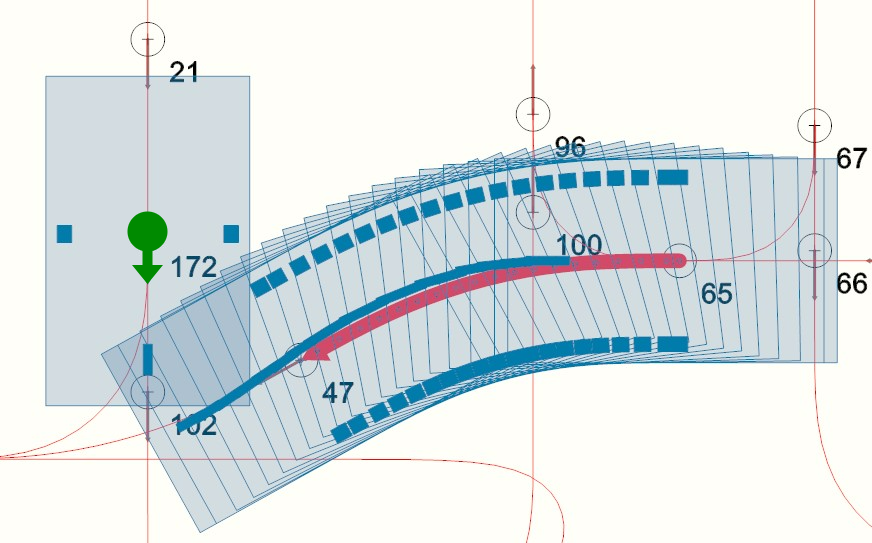}
         \caption{Segment-Location collision}
         \label{fig:collision s-l}
     \end{subfigure}

    \caption{The figure depicts the three collision cases on the roadmap. Locations are marked by green circles, with their centers corresponding to the \( x, y \) coordinates. Each location features a green arrow indicating the robot's orientation \( \theta \). Segments are depicted as red arrow curves. Light blue rectangles represent the vehicle's footprints, highlighting their spatial occupancy at both the locations and along the segments.}
    \label{fig:collision}
\end{figure}

\begin{definition}[Segment]
A segment $r$ is a continuous curve connecting two locations. 
\end{definition}

\begin{definition}[Roadmap]
A roadmap consists of a set of $N^V$ locations $p_u$, with $u \in \{1,  \dots, N^V\}$,  and a set of $N^E$ segments $r_h$ connecting them, with $h \in \{1,  \dots, N^E\}$.
\end{definition}

\AB{We then provide the definitions associated with the AGVs themselves, characterizing their geometric footprint and the interactions that may give rise to collisions.}

\AB{
\begin{definition}[Footprint]
The footprint $\phi$ of an AGV is a convex polygon representing the physical
boundary of the vehicle in the workspace. For any pose $q = (x,y,\theta)$,
the projection of the footprint is obtained by placing the polygon at $(x,y)$ and rotating it
by $\theta$, which defines the region of space occupied by the vehicle at that pose.
\end{definition}
}

\begin{definition}[Collision] 
\label{collision} 
Two elements of the roadmap are considered to be in collision if they cannot be occupied by two AGVs simultaneously due to overlap of their footprints, either at locations or along segments. The following cases can be identified (see Fig. \ref{fig:collision}):
\begin{itemize}  
    \item Location-Location: Two locations are in collision if the footprints of two AGVs, occupying them simultaneously, overlap;  
    \item Segment-Segment: Two segments are in collision if the footprints of two AGVs simultaneously traversing them overlap at least at one point along their respective segments;
    \item Segment-Location: A segment and a location are in collision if the footprint of an AGV traversing the segment overlaps at least at one point along the segment with the footprint of another AGV occupying the location.
\end{itemize}    
\end{definition}

\subsection{Assumptions}
\label{ass}

The environment of the industrial plant must satisfy the following conditions:

\begin{assumption}[Predefined Roadmap] 
\label{preRoad}
The roadmap is given a priori.
\end{assumption}

\begin{assumption}[Battery Charger]
\label{BatteryCharger}
The number of battery chargers $N^{B}$ must be at least equal to the number of AGVs $N^{A}$, i.e., $N^{B} \geq N^{A}$.
Battery chargers are positioned in locations such that, when occupied by AGVs, they are not in collision with any segments and locations traversed by other AGVs. 
\end{assumption}

Regarding the AGVs, the following simplifying assumptions are taken into account:

\begin{assumption}[Movement and Waiting Constraints]
\label{MovWait}
When operating on the roadmap, the AGVs can only wait at locations \AB{or} must move along segments.
\end{assumption}

\begin{assumption}[Task Lists]
\label{TaskList}The Task Lists, i.e., the lists of tasks to be executed for each AGV, are provided by the Task Manager.
\end{assumption}

\begin{assumption}[Fixed Path] 
\label{FixedPath}The AGVs follow a fixed path, calculated during task assignment, to perform a specific task.
\end{assumption}

\section{Traffic Management Architecture}
\label{Traffic Management System Architecture}

The proposed strategy adopts a centralized structure to address the \AB{traffic management} of AGV fleets. \AB{The centralized design} is selected for its ability to provide a \AB{global} perspective, enabling the coordination system to maintain a complete and current understanding of the entire framework. This holistic overview facilitates the implementation of optimization algorithms for coordination and deadlock resolution, and ensures consistency in decision-making, which is particularly beneficial in environments characterized by high-traffic density or frequent task alterations \citep{9599484,9657192}.

\AB{Figure~\ref{fig:2} illustrates the overall workflow of the proposed strategy. A central control unit, referred to as the Traffic Manager (TM), 
periodically receives updated information on the state of each AGV, including \AB{localization data and status indicators}. The TM also acquires the Task Lists provided by the Task Manager, which specify the tasks currently assigned to each vehicle, such as picking up or dropping off goods, navigating toward designated locations, or reaching a charging station when the battery level is low.}

\begin{figure}[t]
\centering
\captionsetup{justification=justified}
\includegraphics[width=\linewidth]{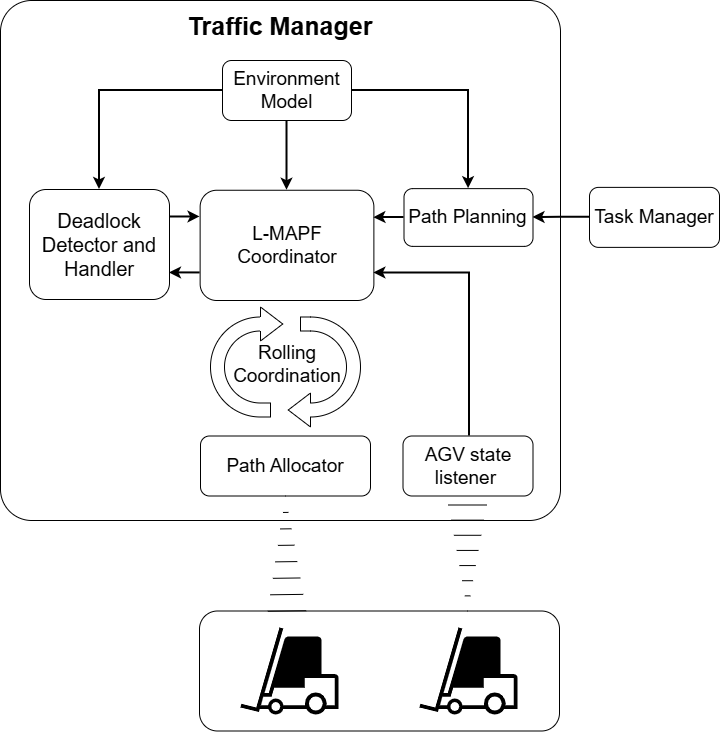}
\caption{\AB{Overview diagram of the proposed traffic management system.}}
\label{fig:2}
\vspace{-0.4cm}
\end{figure}

\AB{The TM first constructs and stores the environment model (see Section~\ref{EnvironmentModel}) and computes the fixed paths required for new task assignments (see Section~\ref{PathandTrajectoryDefinition}). 
The L-MAPF coordinator (see Section~\ref{lampfcoordinator}) then periodically computes instances consisting of collision-free trajectories that ensure coherent multi-agent coordination while adapting to task updates and uncertainties. Periodic replanning provides robustness and responsiveness that one-shot planning approaches cannot guarantee.}
\AB{Since AGVs operate in continuous-time and their actual motion may diverge from nominal timings, the trajectories produced by the coordinator cannot be executed directly. The path allocator (see Section~\ref{enabler}) therefore serves as the execution layer of the architecture: it enforces mutually exclusive allocations of roadmap elements and issues safe, real-time movement permissions consistent with the coordinated plan.}
\AB{Blocking situations may nonetheless occur in dense industrial layouts. Thus, the TM incorporates a dedicated deadlock detection and resolution module, which continuously monitors AGV interactions and replans the paths of the involved vehicles when required (see Section~\ref{DeadlockDetectorandHandler}).}

\section{Environment Model}
\label{EnvironmentModel}

We \ABB{adopt} a hierarchical structure to model the environment composed of two distinct layers: the roadmap layer and the topological layer. The roadmap layer is the representation of the roadmap and is employed for calculating the paths of the AGVs, defining their movements and velocities as they operate in the environment, and modeling their spatial occupancy in order to avoid collisions. 

\begin{remark}[Heterogeneous AGVs]
    \label{heterV}
    The fleet operating on the roadmap layer consists of \(N^{A}\) heterogeneous AGVs, each classified within a specific AGV class. Let \(\mathcal{C}\) be the set containing all the $N^{C}$ AGV classes that operate in the environment. A generic AGV class \(C_k \in \mathcal{C}\) with $k \in \lbrace 1,  \ldots, N^{C} \rbrace$ refers to the categorization of AGVs based on the kinematic constraints expressed by:
\AB{\begin{equation}
\label{vc}
 f_k(q,\dot{q})=0
\end{equation}
}and the convex polygonal footprint \(\phi_k\), which is applied to the pose that each AGV assumes in the environment.
\end{remark}

The topological layer divides the environment into distinct partitions defined as sectors, including temporary storage zones, pick-up and drop-off areas, open spaces, parking zones, and corridors. 
This segmentation enables the use of customized strategies for AGV coordination based on the specific type of sector being addressed. \AB{Particular attention in the present study is given to the corridors, as they play a crucial role in the innovative strategies integrated into the proposed traffic management system.}
 
A visualization of the environment model is depicted in Fig.~\ref{fig1}, where the industrial plant is partitioned into various rectangular sectors of the topological layer, each containing segments and locations of the roadmap layer.

\begin{figure}[t]
\centering
\captionsetup{justification=justified}
\includegraphics[width=\linewidth]{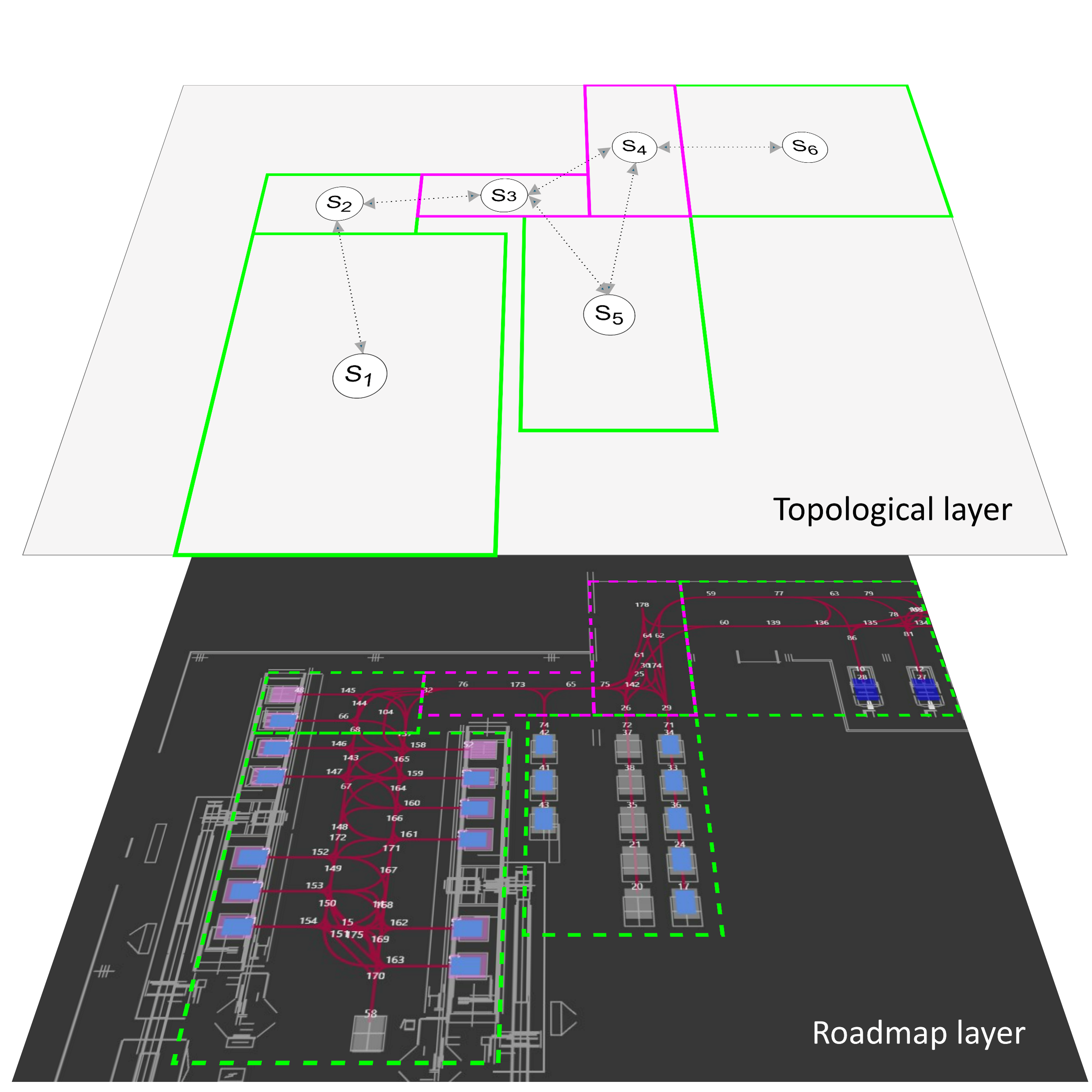}
\caption{Environment model with roadmap layer and topological layer. The former is composed of segments depicted by red lines, connecting white numbered locations. The latter is characterized by green and magenta \AB{
sectors, each labeled as \(S_i\) with $i \in \{1, \dots, 6\}$}. Specifically, the magenta \AB{sectors} delineate corridors within the environment.}
\vspace{-0.3cm}
\label{fig1}
\end{figure}

\subsection{Roadmap Layer}

The predefined roadmap is modeled as a directed weighted graph. Specifically, the roadmap layer is a graph $\mathcal{G}^R$ composed of the vertex set \(\mathcal{V}^R(\mathcal{G}^R)\) representing the set of specific locations in the environment, and of the edge set \(\mathcal{E}^R(\mathcal{G}^R)\), which represents the set of connections between neighboring connected locations, i.e., the roadmap segments.
\begin{remark}[Heterogeneous Segments]
    \label{heterR}
    Roadmap segments are heterogeneous in both shape and length.
\end{remark}
Let \(p^m\), \(p^n\) be two locations in the environment, represented by vertices \(v^m,v^n \in \mathcal{V}^R(\mathcal{G}^R)\). Then, the edge \(e^{m,n}=(v^m,v^n)\) exists in \(\mathcal{E}^R(\mathcal{G}^R)\),
if a segment exists in the roadmap that directly connects \(p^m\) and \(p^n\). For each edge in \(\mathcal{E}^R(\mathcal{G}^R)\), an edge weight \(\omega^{m,n}\) is set to the average time required to traverse the road segment from \(p^m\) to \(p^n\). 
Hence, the roadmap layer represents $N^{V}$ locations $p_{u}$, with $u \in \lbrace 1, \ldots, N^{V} \rbrace$, and $N^{E}$ roadmap segments $r_{h}$ with $h \in \lbrace 1,  \ldots, N^{E} \rbrace$. 

Each vertex \(v^m\) and edge \(e^{m,n}\) of the roadmap layer is designated to a specific AGV class, allowing only vehicles belonging to that class to navigate through them. In particular, each AGV class \(C_k \in \mathcal{C}\) with $k \in \lbrace 1,  \ldots, N^{C} \rbrace$ is associated with a strongly connected subgraph \(\mathcal{G}^{C_k}\) of $\mathcal{G}^R$, with the vertex set \(\mathcal{V}(\mathcal{G}^{C_k}) \subseteq \mathcal{V}(\mathcal{G}^R)\) and the edge set \(\mathcal{E}(\mathcal{G}^{C_k}) \subseteq \mathcal{E}(\mathcal{G}^R)\) representing the vertices and the edges assigned to \(C_k\). Consequently, the graph $\mathcal{G}^R$ is divided in $N^{C}$ disjoint subgraphs \(\mathcal{G}^{C_k}\).

The aforementioned assignment enables edges and vertices to be modeled with kinematic constraints and spatial occupancy specific to each AGV class. Hence, the operability of heterogeneous vehicles in the roadmap layer can be ensured by taking into account the kinematic constraints in (\ref{vc}) and the footprint \(\phi_k\) of each AGV class \(C_k\).

To guarantee compatibility with the AGVs' kinematics and to minimize mechanical wear, the segments represented by each edge set $\mathcal{E}(\mathcal{G}^{C_k})$ must possess adequate smoothness and maintain $G^1$ continuity \citep{peters2002geometric}. 
\AB{The latter ensures that adjacent segments share a common tangent direction at their junctions, preventing abrupt changes in curvature or vehicle orientation. Each segment is defined as a NURBS curve whose $x$ and $y$ coordinates satisfy the geometric and curvature constraints in (\ref{vc}) \citep{app132413210}, with tangent vectors at endpoints matching the vehicle orientation $\theta$ at the corresponding locations. As a result, transitions between consecutive segments remain smooth in both position and orientation, enabling kinematically feasible motion across the roadmap, with $\theta$ varying continuously along each curve according to the tangent direction of the parametric functions.}


\begin{remark} [Rotation Segments]
Under Assumption \ref{preRoad}, and assuming that \(C_k\) classifies vehicles capable of rotating in place, \(\mathcal{E}(\mathcal{G}^{C_k})\) can include edges representing ``rotation segments”. Such segments model stationary rotation maneuvers where the vehicle reorients itself, optimizing its navigation efficiency in constrained spaces. 
\AB{Mathematically, a rotation segment can be defined as a function where the orientation \( \theta \) varies continuously between an initial angle \( \theta^{I} \) and a final angle \( \theta^{F} \), while the linear positions $x,y$ remain constant, such that:}
\AB{
\begin{equation}
    \theta(s) = \theta^{I} + s(\theta^{F} - \theta^{I}), \quad s \in [0,1]
\end{equation}
}where $s$ is a normalized scalar parameter ranging from 0 to 1, with $s = 0$ corresponding to the initial location $p^I$, and $s = 1$, corresponding to the final location $p^F$.
\end{remark}

\begin{algorithm} [t!]
\caption{Collision Sets Calculator}
\label{alg12}
\KwIn{ $\mathcal{G}^R$, $\mathcal{C}$}
\KwOut{$\{\mathcal{Y}^m \, | \, g^m \in \mathcal{V}^R(\mathcal{G}^R) \cup \mathcal{E}^R(\mathcal{G}^R)\}$}

\ForEach{$g^m \in \mathcal{V}^R(\mathcal{G}^R) \cup \mathcal{E}^R(\mathcal{G}^R)$ } 
{
$\mathcal{Y}^m \gets \{\}$ \label{csinit}\\

\ForEach{$g^n \in \mathcal{V}^R(\mathcal{G}^R) \cup \mathcal{E}^R(\mathcal{G}^R)$}
{

$\phi^m \gets GetFootprint(g^m,\mathcal{C})$ \label{ab}\\
$\phi^n \gets GetFootprint(g^n,\mathcal{C})$ \label{foot} \\
\If{\textnormal{\textbf{\AB{Collision Detection}}}$(g^m, g^n, \phi^m, \phi^n )$ \label{yz}} 
{ $\mathcal{Y}^m \gets \mathcal{Y}^m \cup \{g^n\}$ \label{inser} \\

}
}
\label{assignment} 
}
$\textbf{return }\{\mathcal{Y}^m \, | \, g^m \in \mathcal{V}^R(\mathcal{G}^R) \cup \mathcal{E}^R(\mathcal{G}^R)\}$ 
\end{algorithm}

In addition, to ensure operational safety of multiple AGVs on the roadmap layer, it is necessary to account for potential collisions between heterogeneous vehicles as they navigate through shared spaces. 

\AB{Conventional cell-based collision models \citep{5547976} become inefficient in the narrow and irregular industrial layouts considered in this work. Achieving sufficient spatial resolution would require a large number of occupancy cells per roadmap element, causing collision checks to scale as $O(N^{OC})$ set-overlap operations, where $N^{OC}$ is the number of occupancy cells associated with the roadmap element being tested. Online bounding-box intersection tests \citep{zhou2000collision} offer higher geometric precision but still incur non-negligible computational cost when performed repeatedly at runtime, especially for large heterogeneous AGVs moving along curved NURBS segments in confined spaces. To obtain both geometric accuracy and real-time performance, we adopt \ABB{a precomputed} collision-set representation\ABB{, conceptually related to the precomputation strategy of CTCs in \citet{9811344}, but restricted to spatial interactions only.}}

Accordingly, each generic element \( g^m \in \mathcal{V}^R(\mathcal{G}^R) \cup \mathcal{E}^R(\mathcal{G}^R) \) of the roadmap layer is associated with its collision set $\mathcal{D}^{m}$. Specifically, $\mathcal{D}^{m}$ contains each generic element \( g^n \in \mathcal{V}^R(\mathcal{G}^R) \cup \mathcal{E}^R(\mathcal{G}^R) \), where $g^n \neq g^m$, such that \( g^m \) is in collision with \( g^n \), according to Definition~\ref{collision}. \AB{In practical terms, the collision set $\mathcal{D}^m$ of $g^m$ contains
all, and only, the elements that are in collision with $g^m$.} 

\AB{Collision sets are evaluated offline, producing compact precomputed sets that allow constant-time $O(1)$ lookup during real-time operation. This approach increases geometric accuracy in confined spaces while reducing computational overhead under high traffic conditions.} \AB{Specifically,} the collision sets are computed using Algorithm~\ref{alg12} - Collision Sets Calculator, which systematically detects collisions between all elements of \( \mathcal{G}^R \).

For each element \( g^m \in \mathcal{V}^R(\mathcal{G}^R) \cup \mathcal{E}^R(\mathcal{G}^R)\), a candidate set \( \mathcal{Y}^m \) is initialized as an empty set (Algorithm~\ref{alg12}, line~\ref{csinit}). Algorithm~\ref{alg12} then iterates over each generic element \( g^n \in \mathcal{V}^R(\mathcal{G}^R) \cup \mathcal{E}^R(\mathcal{G}^R)\), checking for collisions between \( g^m \) and \( g^n \). The process involves: (i) retrieving the footprints \(\phi^m\) and \(\phi^n\) associated with \( g^m \) and \( g^n \) from the set of AGV classes \( \mathcal{C}\)  (Algorithm~\ref{alg12}, lines~\ref{ab}-\ref{foot}); (ii) applying Algorithm~\ref{sat} - Collision Detection to determine if \( g^m \) and \( g^n \) are in collision, according to Definition~\ref{collision} 
(Algorithm~\ref{alg12}, line~\ref{yz}); (iii) adding \(g^n\) to \(\mathcal{Y}^m\) if a collision is detected (Algorithm~\ref{alg12}, line~\ref{inser}).

\begin{algorithm} [t!]
\caption{Collision Detection}
\label{sat}
\KwIn{$g^m, g^n$, $\phi^m$, $\phi^n$}
\KwOut{Boolean}

\eIf{$g^m \textnormal{ is vertex}$ \label{aa}}
{ $\mathcal{J}^m \gets \{p^m\} $ }
{ $\mathcal{J}^m \gets SamplePoses(r^m)$ }

\eIf{$g^n \textnormal{ is vertex}$}
{ $\mathcal{J}^n \gets \{p^n\}$ }
{ $\mathcal{J}^n \gets SamplePoses(r^n)$ \label{zz}}

\ForEach{{$q^{b} \in \mathcal{J}^m$} \label{startp}}
{
\ForEach{$q^{d} \in \mathcal{J}^n$}
{
$\Phi^{m} \gets TransformFootprint(\phi^m, q^{b})$\\
$\Phi^{n} \gets TransformFootprint(\phi^n, q^{d})$

\If{$\textnormal{SAT}(\Phi^{m},\Phi^{n}) $}
{ $\textbf{return } \texttt{}{\textnormal{true}}$ }
}
}
$\textbf{return } \texttt{}{\textnormal{false}}$ \label{end}
\end{algorithm}

Collision detection is performed by Algorithm~\ref{sat}, which takes as input \( g^m \) and \( g^n \), along with their respective footprints \( \phi^m \) and \( \phi^n \). Following the collision cases described in Definition~\ref{collision}, Algorithm~\ref{sat} first identifies the sets of poses \( \mathcal{J}^m \) and \( \mathcal{J}^n \) based on whether \( g^m \) and \( g^n \) represent edges or vertices in \( \mathcal{G}^R \). If \( g^m \) is an edge, \( \mathcal{J}^m \) consists of sampled poses along the represented segment \( r^m \).  If \( g^m \) is a vertex, \( \mathcal{J}^m \) includes the single pose \( q^{m} \) at the location \( p^m\). The same process applies to \( g^n\).
For each couple \( (q^{b}, q^{d}) \), with \(q^{b} \in \mathcal{J}^m \) and \(q^{d} \in \mathcal{J}^n \), Algorithm~\ref{sat} checks for overlaps between the projections $\Phi^{m}, \Phi^{n}$ of \( \phi^m \) and \( \phi^n \) using the Separating Axis Theorem (SAT) \citep{10.5555/1121584}. SAT states that two convex shapes do not collide if there is an axis along which their projections do not overlap. If an overlap is detected for at least one couple \( (q^{b}, q^{d}) \), the algorithm considers \( g^m \) and \( g^n \) to be in spatial collision, and it returns \texttt{true}. Conversely, if no overlap is found, the algorithm returns \texttt{false} (Algorithm~\ref{sat}, lines~\ref{startp}-\ref{end}).

In the following theorem, we state and prove that the output of Algorithm \ref{alg12} allows the computation of the collision set \( \mathcal{D}^m \hspace{0.8mm} \forall g^m \in \mathcal{V}^R(\mathcal{G}^R)\cup \mathcal{E}^R(\mathcal{G}^R)\).

\begin{theorem}
Let Assumptions \ref{preRoad} and \ref{MovWait} hold. Let \( g^m \in \mathcal{V}^R(\mathcal{G}^R) \cup \mathcal{E}^R(\mathcal{G}^R) \) be a generic element of the roadmap layer, and let \( \mathcal{D}^m\) be its collision set containing each generic element \( g^n \in \mathcal{V}^R(\mathcal{G}^R) \cup \mathcal{E}^R(\mathcal{G}^R) \), such that \( g^m \) is in collision with \( g^n \), according to Definition~\ref{collision}.
By applying Algorithm~\ref{alg12}, we obtain the set \(\mathcal{Y}^m= \mathcal{D}^m \hspace{0.8mm} \forall g^m \in \mathcal{V}^R(\mathcal{G}^R) \cup \mathcal{E}^R(\mathcal{G}^R) \).
\end{theorem}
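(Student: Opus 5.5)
The proof is a double inclusion, $\mathcal{Y}^m \subseteq \mathcal{D}^m$ and $\mathcal{D}^m \subseteq \mathcal{Y}^m$, for an arbitrary fixed $g^m \in \mathcal{V}^R(\mathcal{G}^R)\cup\mathcal{E}^R(\mathcal{G}^R)$. The first step fixes what Algorithm~\ref{alg12} computes. Since $\mathcal{Y}^m$ is initialized empty (line~\ref{csinit}) and modified only at line~\ref{inser}, $g^n \in \mathcal{Y}^m$ holds if and only if Algorithm~\ref{sat} returns \texttt{true} on $(g^m,g^n,\phi^m,\phi^n)$. The inner loop ranges over the whole finite set $\mathcal{V}^R(\mathcal{G}^R)\cup\mathcal{E}^R(\mathcal{G}^R)$, which is fixed by Assumption~\ref{preRoad}, so every candidate $g^n$ is tested and the algorithm terminates. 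The footprints $\phi^m,\phi^n$ are those of the unique class $C_k$ owning each element, because the subgraphs $\mathcal{G}^{C_k}$ are disjoint. This reduces the theorem to showing that Algorithm~\ref{sat} returns \texttt{true} exactly when $g^m$ and $g^n$ collide in the sense of Definition~\ref{collision}. The case $g^n=g^m$ must be handled separately, since $\mathcal{D}^m$ excludes it; I would state that the inner loop is understood over $g^n\neq g^m$, as the definition of $\mathcal{D}^m$ implies.

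The second step characterizes the set of poses an AGV can take while occupying an element. By Assumption~\ref{MovWait}, an AGV either waits at a location, with the single pose $q$, or moves along a segment and so sweeps through the poses of the NURBS curve, or of a rotation segment parametrized by $s\in[0,1]$. This matches the three cases of Definition~\ref{collision}. Location--Location, Segment--Segment and Segment--Location collision each mean that some pair of admissible poses, one from each element, produces overlapping placed footprints. Algorithm~\ref{sat} builds $\mathcal{J}^m,\mathcal{J}^n$ in the same case split (lines~\ref{aa}--\ref{zz}) and checks every pair. Because footprints are convex polygons, SAT is an exact overlap test: it returns true if and only if the placed polygons intersect. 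Soundness ($\mathcal{Y}^m\subseteq\mathcal{D}^m$) follows, since sampled poses are genuine poses on the segment, so a detected overlap witnesses a collision.

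The third step, completeness ($\mathcal{D}^m\subseteq\mathcal{Y}^m$), is the main obstacle. A collision may occur at a pose pair lying strictly between samples, and the statement then holds only if \texttt{SamplePoses} is fine enough. My plan is to make an explicit sampling hypothesis. \texttt{SamplePoses} returns poses whose consecutive placed footprints differ by at most a displacement $\delta$, which is finite because the curve is $G^1$ and $\theta$ varies continuously, and collision checking is done with footprints inflated by $\delta$; equivalently, the sampling density is such that the union of sampled footprints covers the swept region. Under this hypothesis, any overlap at intermediate poses implies an overlap at some sampled pair, so Algorithm~\ref{sat} returns \texttt{true}. If the authors intend exact equality without inflation, I would instead argue by continuity and compactness. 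Overlap of the placed footprints is a closed condition on pose pairs, so a collision with positive-measure overlap persists on a neighbourhood of pose pairs, and a sufficiently dense sampling hits that neighbourhood. Tangential contacts, where the footprints only touch, would then be treated as non-collisions or as covered by the safety margin. Combining both inclusions gives $\mathcal{Y}^m=\mathcal{D}^m$ for every $g^m$.
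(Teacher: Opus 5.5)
Your proof has the same skeleton as the paper's. The paper argues by contradiction and rules out a false positive and a false negative; these are exactly your two inclusions $\mathcal{Y}^m\subseteq\mathcal{D}^m$ and $\mathcal{D}^m\subseteq\mathcal{Y}^m$. Your soundness step (sampled poses are genuine poses on the segment, and SAT is exact for convex polygons) is the paper's false-positive case with the justification written out.

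The difference is in completeness. The paper's false-negative case silently treats a collision in the sense of Definition~\ref{collision} as an overlap at some pair $(q^b,q^d)\in\mathcal{J}^m\times\mathcal{J}^n$, and then says Algorithm~\ref{sat} is designed to detect it. The step from arbitrary poses on the segments to sampled poses is left to the Sampling Accuracy remark placed after the proof. You correctly isolate this as the only non-trivial step and state the needed hypothesis explicitly, which is the more honest version of the same argument.

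Of your two ways to close that step, only the second yields equality.

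\emph{Inflation.} Inflating the footprints by $\delta$ gives $\mathcal{D}^m\subseteq\mathcal{Y}^m$, but it breaks the soundness you established in step two. Two elements whose true footprints are separated by less than the inflation margin would be reported as colliding, so you would obtain only a conservative superset. You would also be analysing a different procedure, since Algorithm~\ref{sat} runs SAT on the uninflated $\phi^m,\phi^n$.

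\emph{Covering.} Inflation is not equivalent to the covering condition. Exact covering would suffice without inflation: a point common to the two continuous footprints lies in some sampled footprint of each element, so it witnesses an overlap at a sampled pair. However, finitely many rigid copies of a polygon cannot in general exactly cover the region swept along a curved NURBS segment or a rotation segment.

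\emph{Openness.} This is the correct argument. Overlap with nonempty interior is an open condition on pose pairs; openness, not closedness, is what makes it persist under perturbation. The roadmap is finite by Assumption~\ref{preRoad}, so a single sampling density works for all pairs, and this is precisely the content of the paper's remark. If you declare tangential contacts to be non-collisions, SAT must treat touching projections as separated; otherwise a sampled tangency reintroduces a false positive.

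Finally, you need not remove $g^n=g^m$ from the loop. The theorem's own definition of $\mathcal{D}^m$ carries no $g^n\neq g^m$ restriction; only the earlier prose does. Algorithm~\ref{alg12} does place $g^m$ in $\mathcal{Y}^m$, because identical placed footprints overlap. Algorithms~\ref{algAlloc} and~\ref{checkspacetimecollisions} rely on this self-membership to detect two AGVs holding the same edge or vertex.
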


\begin{proof}
We prove the theorem by contradiction.
Assume that Algorithm~\ref{alg12} fails to correctly compute 
$\mathcal{Y}^m$ for at least one \( g^m \in \mathcal{V}^R(\mathcal{G}^R) \cup \mathcal{E}^R(\mathcal{G}^R) \), i.e., \( \mathcal{Y}^m \neq \mathcal{D}^m \). Two situations can arise:
\begin{itemize}
    \item False Negative: Suppose \( g^n \notin \mathcal{Y}^m \) even though \( g^m \) and \( g^n \) are in collision, according to Definition~\ref{collision}. This implies that Algorithm~\ref{sat} returns \texttt{false}, despite an actual overlap of the projections $\Phi^{m}, \Phi^{n}$ in at least a couple of poses $(q^{b},q^{d})$ with \(q^{b} \in \mathcal{J}^m \) and \(q^{d}\in \mathcal{J}^n \). However, this condition leads to a contradiction as Algorithm~\ref{sat} is designed to detect such overlaps whenever they arise. 
    \item False Positive: Suppose \( g^n \in \mathcal{Y}^m \) even though \( g^m \) and \( g^n \) are not in collision. This implies that Algorithm~\ref{sat} returns \texttt{true} indicating an overlap of projections $\Phi^{m}, \Phi^{n}$ in at least a couple of poses $(q^{b},q^{d})$ with \(q^{b} \in \mathcal{J}^m \) and \(q^{d} \in \mathcal{J}^n \), even though no such overlap exists. However, this condition leads to a contradiction because Algorithm~\ref{sat} guarantees the detection of only actual overlaps.
\end{itemize}

Thus, Algorithm~\ref{alg12} avoids both false negatives and false positives. Therefore, for every \( g^m \), \( g^n \in \mathcal{Y}^m \) if and only if \( g^n \in \mathcal{D}^m \), i.e., \( \mathcal{Y}^m = \mathcal{D}^m \).  

\AB{
In simple terms, according to Definition~\ref{collision}, $\mathcal{D}^m$ is the
true collision set of $g^m$, containing all elements that intersect the
footprint of the vehicle associated with $g^m$.
Conversely, the set $\mathcal{Y}^m$ is the one computed by Algorithm~\ref{alg12}.
Therefore, the proof shows that the algorithm neither adds nor misses elements, so
$\mathcal{Y}^m$ coincides with $\mathcal{D}^m$.
}

\end{proof}

\begin{remark} [\AB{Sampling} Accuracy]
Under Assumptions \ref{preRoad} and \ref{MovWait}, the \AB{sampling} process in Algorithm \ref{sat} must sample poses on segments finely enough to accurately capture the set \( \{\mathcal{D}^m \mid  g^m \in \mathcal{V}^R(\mathcal{G}^R) \cup \mathcal{E}^R(\mathcal{G}^R)\} \). 
\end{remark}

\subsection{Topological Layer}
\label{tl}

The upper level consists of a topological map, \AB{serving} as an abstract representation of the environment in which the AGVs operate. \AB{The map defines the spatial relationships among different areas of the facility} and the paths accessible to the AGVs. Hence, it provides a topological representation of the industrial plant as a set of $N^{S}$ sectors, e.g., corridors.

Specifically, the set of interconnected sectors defines a directed and connected unweighted graph \(\mathcal{G}^{IS}\), referred to as the topological layer. Each vertex $S \in \mathcal{V}^{IS}(\mathcal{G}^{IS})$ represents a sector, which is associated with the subgraph \(\mathcal{G}^S\) of \(\mathcal{G}^R\), whose vertex set and edge set are given by \(\mathcal{V}(\mathcal{G}^S) \subseteq \mathcal{V}(\mathcal{G}^R)\) and \(\mathcal{E}(\mathcal{G}^S) \subseteq \mathcal{E}(\mathcal{G}^R)\), respectively. \(\mathcal{V}(\mathcal{G}^S)\) contains all the vertices of \(\mathcal{G}^R\) representing locations within the sector, while \(\mathcal{E}(\mathcal{G}^S)\) contains all the edges of \(\mathcal{G}^R\) that connect two vertices in \(\mathcal{V}(\mathcal{G}^S)\).
The edge set \(\mathcal{E}^{IS}(\mathcal{G}^{IS})\) represents the set of connections between neighboring connected sectors. Let \(S^b, S^d \in \mathcal{V}^{IS}(\mathcal{G}^{IS})\) be two sectors, and let \(v^m \in \mathcal{G}^{S^b}\) and \(v^n \in \mathcal{G}^{S^d}\) be two vertices contained in their respective subgraphs.
Then, an edge \((S^b,S^d)\) that connects \(S^b\) and \(S^d\) exists in \(\mathcal{E}^{IS}(\mathcal{G}^{IS})\), if an edge exists in \(\mathcal{G}^R\) that connects the vertex $v^m$ to the vertex $v^n$.

Hence, the topological layer is divided in $N^{S}$ sectors \(S_s\), with $s \in \lbrace1, \dots, N^{S}\rbrace$.

\color{black}

\section{Path Generation and Trajectory Definition}
\label{PathandTrajectoryDefinition}
\color{black}



\AB{This section introduces the elements required to define the paths and trajectories employed in the proposed traffic management architecture. Since both the spatial path and the associated trajectory depend on the start and goal vertices specified by each task, the discussion begins by describing how tasks are represented within the system. Although the TM does not perform task assignment, it must operate on the tasks supplied by the Task Manager, and their structure must therefore be clearly stated. Task assignment itself falls beyond the scope of this work; interested readers may refer to \citet{sabattini2015mission} and \citet{MAPFtaskassignment} for previous studies on the topic.}

\AB{Following the specification of the task structure, the section describes the computation of the fixed path derived from each task, in accordance with Assumption~\ref{FixedPath}, and subsequently introduces its trajectory counterpart, which incorporates the temporal evolution of the AGV along the predefined path.}

The system consists of \( N^{A} \) heterogeneous AGVs (see Remark~\ref{heterV}), each capable of performing tasks or remaining idle at the battery charger. According to Assumption~\ref{TaskList}, each AGV is assigned a Task List detailing the tasks to be completed, which can be categorized as:

\begin{itemize}
    \item \textit{Operational tasks}, which involve the execution of activities such as the pick-up and delivery of pallets within the operational environment.
    \item \textit{Return-to-battery-charger tasks}, which require the AGV to navigate to a designated charging station for battery recharging.
\end{itemize}

When a generic $a$-th AGV completes its current task or remains idle at the charger, the TM checks its Task List for pending tasks. If a new task is available, it is removed from the list and assigned to the AGV. 
\AB{Each task is associated with a starting vertex \( v^{s,a} \), corresponding to the AGV’s current location during task assignment, and a goal vertex \( v^{g,a} \), which must be reached to complete the current task.}
\AB{For \textit{operational tasks}, the Task List also includes the next task, with an associated next goal vertex \( v^{g'_,a} \), which remains stored but not yet execute. Note that if the task is a \textit{return-to-battery-charger}, then \( v^{g,a} = v^{g'_,a} \).}

\AB{The consideration of future tasks is crucial.}
\AB{Prior work on L-MAPF has shown that incorporating information on upcoming tasks improves coordination performance \citep{li2021lifelong, Grenouilleau_Hoeve_Hooker_2019}.
A similar principle applies in the proposed TM: by accounting for the next task, the TM avoids assuming that each $a$-th AGV remains indefinitely at its current goal vertex $v^{g,a}$, leading to coordination choices that eventually produce deadlocks.}


While this strategy is valid for \textit{return-to-battery-charger tasks} due to Assumption~\ref{BatteryCharger}, it is unsuitable for \textit{operational tasks} because of Assumption~\ref{FixedPath}. Such limitation can lead to two distinct deadlock scenarios:

\begin{itemize}
    \item  Let $a$-th and $b$-th be two AGVs assigned with a task, such that the goal vertex of the \( a \)-th AGV is \( v^{g,a} \), and the \( b \)-th AGV must traverse edges that conflict with \( v^{g,a} \) to complete its task. When the \( a \)-th AGV reaches its goal \( v^{g,a} \), the TM may incorrectly assume that it will remain stationary indefinitely. Consequently, the TM may coordinate the \( b \)-th AGV in a way that causes both AGVs to block each other during the execution of the \( a \)-th AGV’s subsequent task. 
    \item  If both the \( a \)-th and \( b \)-th AGVs are assigned the same goal vertex, \( v^{g,a}  = v^{g,b}  \), the first AGV to reach \( v^{g,a} \) may be blocked by the arrival of the second AGV. 
\end{itemize}

Hence, proactively coordinating AGVs even after they have reached their current goal vertex enables the TM to anticipate and resolve potential space-time collisions \AB{arising when the next task is assigned.} This approach ensures smoother task transitions and minimizes system downtime, effectively preventing the deadlock scenarios outlined above.

To enhance clarity and facilitate a better understanding of the TM coordination process, it is helpful to classify AGVs based on their current task assignment status.
\begin{remark} [Classification of AGVs as Active and Free]
\label{activefree}
    AGVs assigned to a task are referred to as active AGVs and belong to the set $\mathcal{A}$, whereas AGVs not assigned to a task are called free AGVs. When a task is assigned to a free AGV, such AGV is added to $\mathcal{A}$. Conversely, when an AGV returns to the battery charger, i.e., when it is no longer assigned any tasks, it is removed from $\mathcal{A}$.
\end{remark}

According to Assumption \ref{FixedPath}, each $a$-th active AGV is associated with the fixed path $\pi^a$. 
The path $\pi^a$ is determined during the task assignment process by extending the main path $\pi^{M,a}$ calculated from \(v^{s,a}\) to \(v^{g,a}\) of length $L^{M,a}$ with the path $\pi^{N,a}$ from \(v^{g,a}\) to \(v^{g'_,a}\) of length $L^{N,a}$. Both paths are calculated using a generic graph search algorithm \citep{EDELKAMP201247} on the subgraph \(\mathcal{G}^{C_k}\) of \(\mathcal{G}^R\). Specifically, $\pi^a$ is defined on \(\mathcal{G}^{C_k}\) as an ordered sequence of vertices: 

\begin{equation}
\begin{aligned}
    \pi^a =\lbrace v_{1}^{a}, v_{2}^{a}, \ldots ,v^a_{L^{M,a}}, \ldots  ,v_{L^{a}}^{a} \rbrace,\\ \quad v_{i}^{a} \in \mathcal{V}(\mathcal{G}^{C_k}), \forall i = 1,\dots,L^{a}
\end{aligned}
\end{equation}

\noindent where \(v_{1}^{a}=v^{s,a}\), \(v_{L^{M,a}}^{a}= v^{g,a}\), \(v_{L^{a}}^{a}=v^{g'_,a}\), and $L^{a}$ is the total path length. Specifically, $L^{a}$ is given by:
\[L^{a} = L^{M,a} + L^{N,a} - 1\]
with \(-1\) accounting for the shared vertex \(v^{g,a}\) between \(\pi^{M,a}\) and \(\pi^{N,a}\), included only once in the path \(\pi^{a}\).

\begin{remark}[Return-to-Battery-Charger Path]
    If the assigned task is a return-to-battery-charger task, i.e., $v^{g,a} = v^{g'_,a}$, then $\pi^{N,a} = \{v^{g,a}\}$, $ L^{N\ABB{,a}} = 1$, and $\pi^{a} = \pi^{M,a}$.
\end{remark}

Each pair of consecutive vertices \(v_{i}^{a}\) and \(v_{i+1}^{a}\) is connected by an edge $e_{i,i+1}^{a} \in\mathcal{E}(\mathcal{G}^{C_k})$, for $i \in \lbrace 1,  \ldots, L^{a}-1 \rbrace$. The path $\pi^{a}$ minimizes the total traversal cost:

\begin{equation}    \Omega^{a}=\sum_{i=1}^{L^{a}-1} \omega_{i,i+1}^{a}
\end{equation}
where $\omega_{i,i+1}^{a}$ is the cost of traversing the edge $e_{i,i+1}^{a}$.
Edge costs can include dynamic weight adjustments based on complex factors like traffic status~\citep{10132864} or traffic predictions~\citep{9171550} to optimize flow.
However, since this work focuses on coordination strategies for smooth fleet operation in high-density, non-standardized environments rather than path planning methodologies, edge costs are simplified to the average travel times of the represented segments, excluding traffic density distribution and other dynamic factors \citep{9943032}.

To prevent collisions and complete its task, according to Assumption~\ref{MovWait}, each $a$-th active AGV may either move along an edge $e_{i,i+1}^{a}$ connecting consecutive vertices $v_i^a, v_{i+1}^a \in \pi^a$, or wait at a vertex $v_i^a$ of its predetermined path $\pi^{a}$.
A move action executed along edge $e_{i,i+1}^{a}$ is represented by the tuple 
\begin{equation}
 \{v_{i}^{a}, v_{i+1}^{a}, \tau^{a}, D_{i,i+1}^{a}\}, 
\end{equation}
where $\tau^{a}$ is the starting timestep and $D_{i,i+1}^{a}$ is the traversal duration, expressed in discrete timesteps and equal to the edge cost $\omega_{i,i+1}^{a}$.
Conversely, a wait action is represented by 
\begin{equation}
 \{v_i^a, v_i^a, \tau^{a}, D_{i,i}^{a}\},  
\end{equation}
where $D_{i,i}^{a}$ corresponds to one timestep.
\AB{A trajectory $\gamma^{a}$ for the $a$-th AGV is therefore defined as an ordered sequence of $N^{\gamma,a}$ actions
\begin{equation}
  \{v_{i(k)}^{a}, v_{j(k)}^{a}, \tau_k^{a}, D_{i(k),j(k)}^{a}\},
\quad k \in \{1,\dots,N^{\gamma, a}\}, 
\end{equation}
with $j(k) = i(k)$ (wait) or $j(k)=i(k)+1$ (move).}

\color{black}

\section{L-MAPF Coordinator}
\label{lampfcoordinator}
\color{black}



\AB{The L-MAPF coordinator is responsible for generating space-time consistent trajectories for all active AGVs, ensuring collision-free motion while enforcing the fixed paths assigned to each vehicle and respecting the structural constraints of the industrial layout.
Unlike one-shot planning strategies, the L-MAPF coordinator operates in a rolling horizon fashion: instances, each consisting of a set of trajectories, are periodically recomputed so that the system can continuously adapt to new task assignments, evolving AGV states, and execution uncertainties.}

\AB{A rigorous formulation of the coordination problem solved at each instance requires a precise specification of all quantities involved in trajectory generation.
Relevant elements include the current target position associated with each AGV, the fixed path along which coordination is performed, the roadmap subgraphs used during planning, and the static and dynamic obstacles that represent environmental and traffic constraints.}

\AB{Section~\ref{coordinputs} introduces and organizes all components required as inputs to the L-MAPF coordination framework for computing an instance.
Section~\ref{coordproblem} then provides the formal formulation of the coordination problem solved at each instance.
Section~\ref{CBS} concludes the discussion by describing the Anytime Bounded Horizon CBS (ABH-CBS) algorithm, which computes the corresponding set of coordinated trajectories.}

\color{black}
\subsection{Coordination Problem Inputs}
\label{coordinputs}
\color{black}
\AB{To define an instance, several inputs must be specified, as summarized in Table~\ref{tabellainput} and detailed in the sequel.}

\AB{The coordination process operates within a base horizon of 
$\delta^{'}$ timesteps (Row~I), which may be extended in corridor sectors when required for safe conflict resolution. Outside corridors, the horizon can be incrementally expanded by $\delta^{''}$ timesteps (Row~II) until the timeout  $t^{\sigma}$ is reached (Row~III).}

\AB{Each $a$-th active AGV is associated with a target vertex $v_{z}^{a}$ with $z \in \{1,\dots, L^{M,a}\}$, provided by the path allocator (see Section~\ref{enabler}). The target vertex represents the discrete location where the newly computed trajectory must start, corresponding to the endpoint of the last edge assigned to the 
$a$-th AGV. The set of all target vertices, \AB{$\mathcal{V}^z =\{ v_{z}^{a} | a\in \mathcal{A}\}$} (Row~IV) is therefore provided as input. 
}

\AB{Let $v_{l}^{a} \in \pi^{a}$ denote the currently occupied vertex of the $a$-th active AGV, with $l \in \{1,\dots, L^{M,a}\}$ and $l \leq z$.  We define the allocated trajectory $u^{a}$ as the ordered sequence of move actions
\begin{equation}
   u^{a}
=
\left\{
  \{ v_{i}^{a}, v_{i+1}^{a}, \tau^{a}, D_{i,i+1}^{a} \}
  \;\middle|\;
  i \in \{l,\dots,z-1\}
\right\},
\end{equation}
which connects the currently occupied vertex $v_{l}^{a}$ to the target vertex $v_{z}^{a}$.  
The resulting sequence represents the portion of $\pi^{a}$ that the AGV must traverse without interruption.} \ABB{The set of all allocated trajectories is defined as $\mathcal{U}=\{ 
u^{a} | a\in \mathcal{A}\}$.}

\AB{The target timestep $\tau_{z}^{a}$, belonging to the set $\mathcal{T}^z$ (Row~V), denotes the discrete time at which the AGV is expected to reach $v_{z}^{a}$.  
The corresponding value is obtained from $u^{a}$ as:
\begin{equation}
    \tau_{z}^{a} = \tau_{z-1}^{a} + D_{z-1,z}^{a}.
\end{equation}}

Hence, at the target timestep \(\tau_{z}^{a}\), the L-MAPF coordinator calculates a trajectory \(\gamma^{a}\) for each \(a\)-th active AGV from the current target vertex \(v_{z}^{a}\) to the next goal vertex \(v_{g'}^{a}\), contained in the set \(\mathcal{V}^{g'}\) (Row~VI).

\begin{remark}
[Target Timestep with Empty Allocated Queue]
\label{EmptyAllocation}
If the allocated queue $w^{a}$ of the $a$-th active AGV is empty, i.e., $u^{a} = \{\}$, then the target timestep of the $a$-th active AGV is \( \tau_{z}^{a} = 0\).
\end{remark}

\begin{table}[t]
    \centering
    \caption{L-MAPF Coordinator Inputs.}
\begin{tabular}{@{}cccc@{}}

    \toprule
 &\textbf{Input} & \textbf{Description} \\
  \midrule

    I&$ \delta^{'}$ & Base Time Horizon \\
    \hline
    II&$\delta^{''}$ & Horizon Increment \\
    \hline
    III&$t^{\sigma}$ & Timeout \\ 
    \hline
    IV&\(\mathcal{V}^{z} =\{ v_{z}^{a} | a\in \mathcal{A}\}\) & Target Vertices \\
    \hline
    V&\(\mathcal{T}^{z} =\{ \tau_{z}^{a} | a\in \mathcal{A}\}\) & Target Timesteps \\
    \hline
    VI&\(\mathcal{V}^{g'} =\{ v^{g'_,a} | a\in \mathcal{A}\}\)& Next Goal Vertices \\
    \hline
    VII&\(\mathcal{K}^{\text{tot}} =\{ \mathcal{K}^{a} | a\in \mathcal{A}\}\) & Path Subgraphs \\
    \hline
    VIII&\(\mathcal{O}^s\) & Static Obstacles \\
    \hline
    IX&\(\mathcal{O}^d =\{ \mathcal{U}^{\setminus a} | a\in \mathcal{A}\}\) & Dynamic Obstacles \\
    \hline
    X&$\mathcal{S} =\{\xi^{a}| a\in \mathcal{A}\}$ & Extended Corridors \\
 \bottomrule
    
\end{tabular}
    \label{tabellainput}
    \vspace{-0.1cm}
\end{table}

According to Assumption~\ref{FixedPath}, each active $a$-th AGV must be coordinated along its fixed path $\pi^{a}$. 
\AB{Thus, coordination is performed on the subgraph} $\mathcal{K}^{a}$, defined as \( \mathcal{K}^{a} =(\mathcal{V}^{a}, \mathcal{E}^{a})\) of \(\mathcal{G}^{C_k} = (\mathcal{V}^{C_k}, \mathcal{E}^{C_k})\), where \(\mathcal{V}^{a} \subseteq \mathcal{V}^{C_k}\) and \(\mathcal{E}^{a} \subseteq \mathcal{E}^{C_k}\). \( \mathcal{K}^{a}\) contains the vertices $\lbrace v_{1}^{a}, v_{2}^{a}, \ldots, v_{L}^{a} \rbrace$ and edges $\lbrace e_{1,2}^{a}, e_{2,3}^{a}, \ldots, e_{L^{a}-1,L^{a}}^{a} \rbrace$. 
Hence, we define the set \(\mathcal{K}^{\text{tot}} =\{ \mathcal{K}^{a} | a\in \mathcal{A}\}\) (Row VII) containing all path subgraphs of active AGVs.

\begin{remark}[Goal Vertex Traversal Constraint]
    Each trajectory $\gamma^{a}$ is constrained to traverse the goal vertex \(v^{g,a}\). This is automatically ensured by computing $\gamma^{a}$ on the subgraph $\mathcal{K}^{a}$, which contains only vertices and edges associated with the fixed path $\pi^{a}$.
\end{remark}

To calculate collision-free trajectories, the L-MAPF coordinator must account for obstacles, which can be categorized as static or dynamic. On the one hand, static obstacles consist of unintended obstructions, e.g.,  shelving units and pallets, that block specific edges of the roadmap layer, potentially causing collisions with active AGVs. \AB{To prevent the computation of colliding trajectories}, each static obstacle \(o_k\), where $k \in \{1,\dots,N^{O}\}$ with $N^{O}$ the total number of obstructions, is associated with a subset of obstructed edges \(\mathcal{O}_k \subseteq \mathcal{E}^R\). Consequently, the static obstacles are represented by the set \(\mathcal{O}^s \subseteq \mathcal{E}^R\) (Row VIII), defined as follows:
\begin{equation}
    \mathcal{O}^s = \bigcup_{k=1}^{N^{O}} \mathcal{O}_k.
\end{equation}

\AB{On the other hand, dynamic obstacles represent the time-varying occupation of roadmap edges generated by the portions of the path allocated for other AGVs to traverse. Since the allocated trajectory $u^{a}$ represents the segment of $\pi^{a}$ that the $a$-th AGV must traverse without interruption, it must be treated as a dynamic obstacle when planning the trajectories of the others AGVs. Consequently, the L-MAPF coordinator takes as input the set \(\mathcal{O}^d =\{ \mathcal{U}^{\setminus a} | a\in \mathcal{A}\}\) (Row IX), which contains the dynamic obstacles for each $a$-th active AGV, defined as
\begin{equation}
        \mathcal{U}^{\setminus a} = \mathcal{U}\setminus u^{a}.
\end{equation}
\vspace{-0.5cm}
}

\begin{figure}
    \centering
\captionsetup{justification=justified}\includegraphics[width=.7\linewidth]{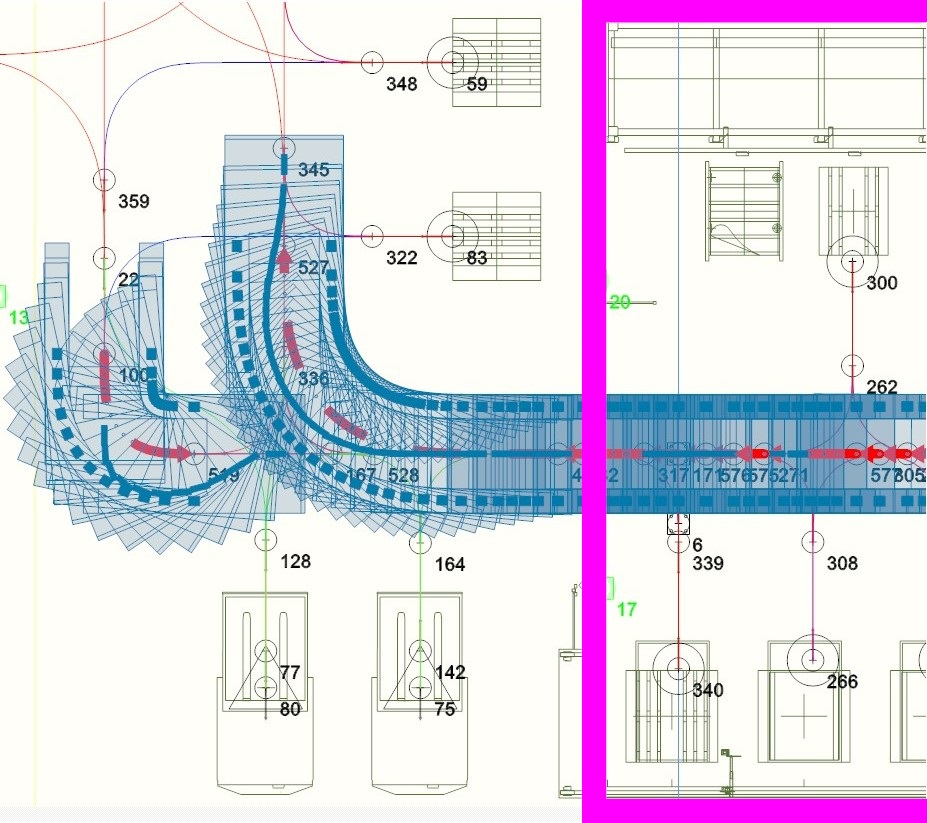}
    \caption{
    Example of a corridor extension with the corridor sector delimited by magenta borders, where one AGV intends to enter while another needs to exit.}
    \label{fig:enter-label}
\end{figure}

To ensure effective coordination, the trajectory $\gamma^a$ for each
$a$-th AGV must also account for potential challenges posed by shared resources, such as narrow bidirectional corridors. Each corridor, defined in the topological layer as a sector $S \in \mathcal{V}^{IS}(\mathcal{G}^{IS})$ (see Section \ref{tl}), requires complete conflict resolution to prevent deadlocks. While our system operates within a bounded horizon, this may not be adequate for long, narrow bidirectional corridors. By applying our Extended Horizon in Corridor (EHC) strategy proposed in \citet{bonetti2024agv}, we can effectively manage AGV coordination within any corridor $S$.



    


However, since the segments of the roadmap layer vary in length and shape (see Remark~\ref{heterR}), deadlocks could potentially occur outside the boundaries of \AB{corridors} (see Fig.~\ref{fig:enter-label}). Thus, following the Corridor eXtension (CX) procedure described in our previous work \citep{bonetti2024agv}, we define the set $\mathcal{S} =\{\xi^{a}| a\in \mathcal{A}\}$ (Row X), which includes the extended corridor $\xi^a \subseteq \ \pi^{a}$ for each $a$-th active AGV. In particular, \(\xi^{a}\) includes all the vertices of $\pi^{a}$, both within and adjacent to the corridor sectors where the EHC strategy is applied.

Each time a new path is calculated during task assignment or an active AGV returns to a battery charger (see Remark~\ref{activefree}), $\mathcal{S}$ is updated by comparing the paths of all active AGVs as outlined in Algorithm~\ref{extendedCorridors} - Extended Corridors Update.

\begin{algorithm} [t!]
\caption{Extended Corridors Update}
\label{extendedCorridors}
\footnotesize
\KwIn{$\mathcal{P}, \mathcal{G}^{IS}$}
\KwOut{$\mathcal{S}$}

$\mathcal{S} \gets \{\xi^{a} \gets \{\} \mid a \in \mathcal{A}\}$ \label{initializecorr} \\

\ForEach{$\pi^{a}, \pi^{b} \in \mathcal{P}, \pi^{a} \neq \pi^{b}$\label{startcorr}}
{
\ForEach{\AB{$S \in \mathcal{V}^{IS}( \mathcal{G}^{IS})$}}
{
    \If{\AB{$S \textnormal{ is Corridor}$}}
    {
        \AB{$\mathcal{G}^{S} \leftarrow GetCorridorSubgraph(S)$}\\
    
        $\xi^{a} \gets \xi^{a} \cup ExtendedCorridor(\pi^{a},\pi^{b},\mathcal{G}^S)$ \\
        
        $\xi^{b} \gets \xi^{b} \cup ExtendedCorridor(\pi^{b},\pi^{a}, \mathcal{G}^S)$ \label{endcorr}\\
    }
}
}

\Return{$\mathcal{S}$}
\end{algorithm}

Algorithm~\ref{extendedCorridors} takes as input the topological layer $\mathcal{G}^S$ and the set $\mathcal{P} = \{\pi^{a} \mid a \in \mathcal{A}\}$, which includes the path of each \( a \)-th active AGV, and returns the updated set $\mathcal{S}$.
The process begins by initializing $\mathcal{S}$ with an empty extended corridor for each $a$-th AGV (Algorithm~\ref{extendedCorridors}, line~\ref{initializecorr}). Subsequently, the algorithm compares the paths of each pair of AGVs, $\pi^{a}$ and $\pi^{b}$, applying the CX strategy \AB{to each corridor sector} to fill the respective extended corridors $\xi^{a}$ and $\xi^{b}$ (Algorithm~\ref{extendedCorridors}, lines~\ref{startcorr}-\ref{endcorr}). Through an exhaustive analysis of all possible path combinations, Algorithm~\ref{extendedCorridors} calculates the updated set $\mathcal{S}$, which is provided as input for the L-MAPF coordinator (Row~X).

\color{black}
\subsection{Coordination Problem Definition}
\label{coordproblem}

Having defined all required inputs in Section \ref{coordinputs}, we now formally state the coordination problem solved by the L-MAPF coordinator for each instance.



\begin{problem}[Coordination Problem]\label{def:coordination}

Given:
\begin{itemize}
    \item the set of active AGVs $\mathcal{A}$;
    \item for each $a \in \mathcal{A}$, the fixed path $\pi^{a}$;
    \item for each $a \in \mathcal{A}$, the target vertex $v_{z}^{a}$ and the target timestep $\tau_{z}^{a}$ with $z \in \{1,\dots, L^{M,a}\}$;
    \item for each $a \in \mathcal{A}$, the goal vertex $v^{g'_,a}$;
    \item the set of static obstacles $\mathcal{O}^{s}$;
    \item the set of dynamic obstacles $\mathcal{O}^{d}$;
\end{itemize}
the coordination problem consists in computing a set of collision-free trajectories
\[
\mathcal{H}^\star = \{ \gamma^{a,\star} \mid a \in \mathcal{A} \},
\]
such that:
\begin{itemize}
    \item each $\gamma^{a,\star}$ begins at $v_{z}^{a}$ at time $\tau_{z}^{a}$;
    \item each $\gamma^{a,\star}$ reaches $v^{g'_,a}$ in finite time, if a solution exists;
    \item each $\gamma^{a,\star}$ adheres to the corresponding fixed path $\pi^{a}$;
    \item all trajectories are mutually conflict-free;
    \item all trajectories avoid static and dynamic obstacles.
\end{itemize}
Among all feasible sets of trajectories, the 
L-MAPF coordinator tries to minimize a global performance metric 
$J$, which corresponds to the Sum of Costs (SoC). The chosen metric captures the overall travel effort of the fleet and promotes solutions that reduce unnecessary delays and congestion. In particular, SoC is computed as the sum of the travel times of the individual trajectories, defined as the completion time of their last action:
\begin{equation}
    J = \sum_{\gamma^{a,\star} \in \mathcal{H}^{\star}} \left( \tau_{k}^{a} + D_{i(k),\, j(k)}^{a} \right), 
\qquad k = N^{\gamma, a}.
\end{equation}
\end{problem}

\AB{Finding an optimal solution to Problem~\ref{def:coordination} has been shown to be NP-hard \citep{yu2013structure}. The problem, in fact, admits a reduction to a MAPF instance defined on a suitably constructed reduced graph, where the combinatorial growth generated by temporal interactions among multiple agents remains fully preserved. The resulting complexity renders optimality infeasible in realistic industrial settings, where AGVs operate in dense and constrained environments and must react in real-time to traffic variations, task updates, and execution uncertainties.}

\AB{The adopted strategy therefore focuses on computing locally optimal solutions, preserving coordination quality while keeping the computational burden tractable. To achieve such a balance, the L-MAPF coordinator employs the ABH-CBS algorithm, which builds upon the RHCR framework~\citep{li2021lifelong}. Every $\eta$ timesteps, the algorithm periodically solves bounded-horizon instances and progressively extends the planning window through an anytime mechanism.}

\AB{In operational terms, at each instance, ABH-CBS first solves the coordination problem restricted to the base time horizon $\delta'$. The algorithm then applies selective horizon extensions within the extended corridors $\mathcal{S}$, ensuring that potential conflicts in the considered regions are fully resolved. Once a feasible solution is identified, the planning window is enlarged by the horizon increment $\delta''$, and the bounded-horizon instance is refined, repeatedly expanding the horizon until the global timeout $t^{\sigma}$ is reached. As the horizon grows, the sequence of locally optimal solutions may eventually converge to an optimal and complete solution when the window becomes sufficiently large to cover the full arrival of all AGVs at their respective goals; under such conditions, ABH-CBS solution coincides with the one of standard CBS.}

\color{black}

\subsection{Anytime Bounded Horizon CBS}
\label{CBS}
The proposed ABH-CBS utilizes a hybrid approach consisting of two levels \citep{sharon2015conflict}: a low-level perspective that deals with individual AGVs and a high-level perspective that views the system holistically. On the one hand, the low-level is responsible for planning trajectories for each $a$-th active AGV that meet the constraints assigned by the high-level and that avoid static and dynamic obstacles. On the other hand, the high-level involves searching for space-time collision among AGV trajectories, defining constraints, and allocating them to AGVs for the low-level planning.


The low-level planner is based on the space-time A* algorithm \citep{silver2005cooperative}, modified to operate effectively on a roadmap consisting of heterogeneous segments and to align with the assumptions and obstacle representation defined in this work. Specifically, the low-level planner (as detailed in the Appendix A) calculates a trajectory $\gamma^a$ for the $a$-th active AGV from $v_{z}^{a}$ to $v^{g'_,a}$ on the subgraph $\mathcal{K}^a$, while adhering to constraints imposed by the high-level planner and avoiding both dynamic and static obstacles.

The high-level planner receives the inputs reported in Table~\ref{tabellainput} (see Section \ref{coordproblem}) and utilizes a best-first approach to explore a binary tree named Constraint Tree (CT) \citep{sharon2015conflict}, as shown in Fig. \ref{fig:CTtree}. In contrast to our previous work \citep{bonetti2024agv}, the planner has been modified to incorporate anytime functionality. This enhancement enables the computation of conflict-free trajectories within a bounded horizon, which are progressively refined toward optimality as computation time permits.

\begin{figure}
    \centering   \captionsetup{justification=justified}\includegraphics[width=0.4\linewidth]{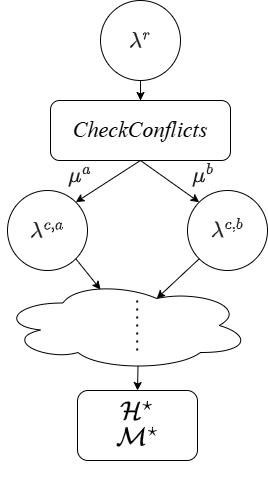}
    \caption{\AB{Constraint Tree expansion of the ABH-CBS algorithm, detailing the process leading to the computation of the outputs $\mathcal{H}^\star$ and $\mathcal{M}^\star$.}}
    \label{fig:CTtree}
\end{figure}

Each generic node within the CT comprises four key elements:
\begin{itemize}
\label{not}
    \item \AB{\textit{Node Constraint}}: this consists of a set \(\mathcal{M}\) of constraints \(\mu_k\), with $k\in\{1,2,\dots\}$, accumulated throughout the expansion of the CT. Each element \(\mu_k\) restricts the movement of the generic $a$-th active AGV between two neighbour vertices, i.e., edge $e_{i,i+1}^a$, or forbids wait action on vertex $v_{i}^a$, during a specific timestep interval in the low-level planner. In particular, \(\mu_k\) is expressed as a tuple \((a, v_{i}^a, v_{j}^a, [\tau^{I}, ..., \tau^{F}], b)_k\), with \(j=i \vee i+1\) for the \(a\)-th AGV. Here, $[\tau^I, \ldots, \tau^F]$ represents the constrained timestep interval, spanning from the initial timestep $\tau^I$ to the final timestep $\tau^F$, imposed by the $b$-th AGV.
    \item \AB{\textit{Node Solution}}: this includes a set of AGVs' timestep-minimal trajectories \AB{\(\mathcal{H}={\{\gamma^a | a \in \mathcal{A}}\}\)}, calculated by the low level planner and satisfying the constraints defined in \(\mathcal{M}\).
    \item \AB{\textit{Node Cost}}: this represents the total cost of the node, calculated as $\sum_{\gamma^{a} \in \mathcal{H}} |\gamma^a|$, \AB{where $|\gamma^a|$ denotes the cost of each trajectory in $\mathcal{H}$. Since $|\gamma^{a}|$ corresponds to the trajectory travel time, the node cost implements the SoC, ensuring consistency with the coordination metric $J$ in  Problem~\ref{def:coordination}.}
    \item \AB{\textit{Node Horizon}}: this consists of the set \(\mathcal{N}\), which includes the distinct time horizon value \(\delta^a\) for each \(a\)-th AGV. In particular, \(\delta^a\) is set as the current value of $\delta$ in the non-corridor zones of the environment. If the AGV is inside its extended corridor $\xi_a$ at the timestep \(\delta^a\) along the timestep-minimal trajectory \(\gamma^a \in \mathcal{H}\), then \(\delta^a\) is extended until the \(a\)-th AGV reaches the first vertex \(v^{\text{out}, a} \notin \xi_a\) using the EHC strategy \citep{bonetti2024agv}.
    \end{itemize}

\begin{algorithm} [t!]
\caption{High-Level Planner}
\label{algcbs}
\footnotesize
\KwIn{$\delta^{'}, \delta^{''}, t^{\sigma},$\(\mathcal{V}^z,\)  \(\mathcal{T}^z,\) 
\(\mathcal{V}^{g'},\)
\(\mathcal{K}^{\textnormal{tot}},\)  \(\mathcal{O}^s,\)
\(\mathcal{O}^d,\) \(\mathcal{S}\)
}
\KwOut{$\mathcal{H}^\star, \mathcal{M}^{\star}$}

$t^{\iota} \gets t$ \label{init1}\\
$\delta \gets \delta^{'}$\\
$\mathcal{H}^\star \gets \emptyset$\\
$\mathcal{M}^{\star} \gets \emptyset $\label{init2}\\

$\lambda^{r} \gets GenerateRoot(\mathcal{V}^z,\mathcal{T}^z, \mathcal{V}^{g'},\mathcal{O}^{s} , \mathcal{O}^{d},\mathcal{K}^{\textnormal{tot}},\mathcal{S},\delta)$\label{root1}\\
$\text{OPEN\_H} \gets \{\lambda^{r}\}$ \label{root2}\\

\While{$\textnormal{OPEN\_H} \neq \emptyset \ \mathbf{and} \ t - t^{\iota} < t^\sigma$}
{
$\lambda^{p} \gets \textnormal{lowest \textit{Cost} node from OPEN\_H}$\label{popout1}\\

$\textnormal{OPEN\_H} \gets \textnormal{OPEN\_H} \setminus \lambda^{p}\label{popout2}$

\ForEach{$\delta^a \in \mathcal{N}(\lambda^{p})$ \label{horizonup1}}
{
\If{$\delta^a < \delta$}
{ $\delta^a \gets \delta$\\
$\delta^a \gets EHC(\delta^a,\mathcal{S}, \mathcal{H}(\lambda^{p}))$ \label{horizonup2}}

}
\footnotesize
$[\mu^{a}, \mu^{b}] \gets CheckConflicts(\mathcal{H}(\lambda^{p}), \mathcal{N}(\lambda^{p}),\mathcal{K}^{\textnormal{tot}})$ \label{cc}\\ 

\eIf{$\mu^{a} \neq \varnothing \ \textbf{and}\ \mu^{b} \neq \varnothing$}
{
$\lambda^{c,a} \gets GenerateChild(\lambda^{p},\mathcal{V}^z,\mathcal{T}^z, \mathcal{V}^{g'},\mathcal{O}^{s} , \mathcal{O}^{d},\mathcal{K}^{\textnormal{tot}},\mathcal{S}, \delta, \mu^{a})$\\ \label{gc1}
$\lambda^{c,b} \gets GenerateChild(\lambda^{p},\mathcal{V}^z,\mathcal{T}^z, \mathcal{V}^{g'},\mathcal{O}^{s} , \mathcal{O}^{d},\mathcal{K}^{\textnormal{tot}},\mathcal{S}, \delta, \mu^{b})$\\ \label{gc2}
\If{$\lambda^{c,a} \textnormal{ is valid}$ \label{add1}}
{
$\textnormal{OPEN\_H} \gets \textnormal{OPEN\_H}  \cup  \{\lambda^{c,a}\}$
}
\If{$\lambda^{c,b} \textnormal{ is valid}$}
{
$\textnormal{OPEN\_H} \gets \textnormal{OPEN\_H} \cup  \{\lambda^{c,b}\} \label{add2}$
}
}
{
$\mathcal{H}^\star \gets \mathcal{H}(\lambda^{p})$ \label{sol1}\\
$\mathcal{M}^{\star} \gets \mathcal{M}(\lambda^{p})$\\
$\delta \gets \delta + \delta^{''}$  \label{sol2}\\
$\textnormal{OPEN\_H} \gets \textnormal{OPEN\_H} \cup \{\lambda^{p}\}\label{insss}$
}

}

\Return{ $\mathcal{H}^\star, \mathcal{M}^{\star}$}
\normalsize

\end{algorithm}

Algorithm~\ref{algcbs} - High-Level Planner begins with an initialization phase, which involves recording the current time $t$ in $t^\iota$, setting the time horizon $\delta$ with the base time horizon $\delta^{'}$, and initializing the algorithm outputs $\mathcal{H}^\star$ and $\mathcal{M}^{\star}$ as empty sets (Algorithm~\ref{algcbs}, lines~\ref{init1}-\ref{init2}). Subsequently, Algorithm~\ref{algcbs} computes the root node $\lambda^{r}$ using the \textit{GenerateRoot} function and inserts it in a list, in the sequel referred to as the OPEN\_H list (Algorithm~\ref{algcbs}, lines~\ref{root1}-\ref{root2}). $\lambda^{r}$ is characterized by an initial solution $\mathcal{H}$ planned by the low-level with an empty set of constraints $\mathcal{M} = \emptyset$. At this step, the root node \textit{Cost} and \textit{Horizon} fields are also computed.

As long as the OPEN\_H list is not empty and the current computation time $t - t^{\iota}$ does not exceed the calculation timeout $t^{\sigma}$, the node $\lambda^{p}$ with the lowest cost, referred to as the parent node, is selected and removed from the OPEN\_H list (Algorithm~\ref{algcbs}, lines~\ref{popout1}-\ref{popout2}). Then, Algorithm~\ref{algcbs} performs the horizon update phase on $\lambda^{p}$, which consists of checking whether each $\delta^a \in \mathcal{N}$ of $\lambda^{p}$ is less than the current time horizon $\delta$. If this condition holds, $\delta^a$ is set equal to $\delta$ and the EHC strategy is applied to $\gamma^a \in \mathcal{H}$ of the corresponding $a$-th active AGV (Algorithm~\ref{algcbs}, lines~\ref{horizonup1}-\ref{horizonup2}). 

Algorithm~\ref{algcbs} proceeds with conflict detection among all pairs of trajectories in $\mathcal{H}$ of $\lambda^{p}$ using the \textit{CheckConflicts} function (Algorithm~\ref{algcbs}, line~\ref{cc}). Let \(a,b \in \mathcal{A}\) be two active AGVs and let $\gamma^a$ and $\gamma^b$ be the respective trajectories included in $\mathcal{H}$. The function checks for space-time collisions between each pair of actions $\{v_i^a, v_j^a, \tau^a, D_{i,j}^a\} \in \gamma^a, \{v_m^b, v_n^b, \tau^b, D_{m,n}^b\} \in\gamma^b$ using Algorithm~\ref{checkspacetimecollisions} - Check Space-Time Collision (see Appendix A). The search is conducted within a bounded horizon defined by the minimum value between $\delta^a$ and $\delta^b$, as described in \citep{bonetti2024agv}.

On the one hand, if conflicts exist, \textit{CheckConflicts} selects the first identified conflict between the $a$-th and $b$-th AGVs and generates the constraints $[\mu^a, \mu^b]$ to resolve it in the child nodes. In particular, $[\mu^a, \mu^b]$ are defined as:
 \begin{align} 
 \begin{split} 
 [ (a, v_{i}^a, v_{j}^a, [\tau^{b}, ..., \tau^{b}+D_{m,n}^b], b), \\ 
 (b, v_{m}^b, v_{n}^b, [\tau^{a}, ..., \tau^{a}+D_{i,j}^a], a)].
 \end{split}
\end{align}
Specifically, the action from vertex $v_{i}^a$ to $v_{j}^a$ in $\mu^a$ is constrained for the time interval $[\tau^{b}, ..., \tau^{b}+D_{m,n}^b]$ in which $\{v_m^b, v_n^b, \tau^b, D_{m,n}^b\} \in\gamma^b$ conflicts with it and vice versa.
Then, Algorithm \ref{algcbs} generates the child nodes \(\lambda^{c,a}\) and \(\lambda^{c,b}\) using the \textit{GenerateChild} function (Algorithm~\ref{algcbs}, lines~\ref{gc1}-\ref{gc2}). Specifically, \textit{GenerateChild} incorporates the constraints $\mu^a$ and $\mu^b$ into the respective constraint sets of \(\lambda^{c,a}\) and \(\lambda^{c,b}\) and invokes the low-level planner to replan the trajectories $\gamma^a$ and $\gamma^b$ for the $a$-th and $b$-th AGVs involved in the selected conflict. In addition, the function updates the $Cost$ and $Horizon$ fields of the child nodes accordingly. Hence, if the new trajectories $\gamma^a,\gamma^b$ are successfully computed by the low-level planner, the child nodes $\lambda^{c,a}$ and $\lambda^{c,b}$ are deemed valid, and Algorithm \ref{algcbs} adds them to the OPEN\_H list (Algorithm~\ref{algcbs}, lines~\ref{add1}-\ref{add2}).

On the other hand, if there are no conflicts, $\mathcal{H}$ of $\lambda^{p}$ is stored in $\mathcal{H}^\star$, the constraints set $\mathcal{M}$ is stored in $\mathcal{M}^{\star}$, and the time horizon $\delta$ is incremented by $\delta^{''}$ (Algorithm~\ref{algcbs}, lines~\ref{sol1}-\ref{sol2}). Then, the parent node $\lambda^{p}$ is re-inserted in the OPEN\_H list (Algorithm~\ref{algcbs}, line~\ref{insss}).

The high-level ABH-CBS repeats this process until the current computation time exceeds the calculation timeout $t^{\sigma}$ or all possibilities are exhausted, i.e., the OPEN\_H list is empty. It then returns as output the last assignment of $\mathcal{H}^\star$ \AB{corresponding to the lowest-cost solution found within the explored planning horizon that minimize $J$, together with the associated constraint set $\mathcal{M}^\star$.}



\color{black}

\section{Path Allocator}
\color{black}
\label{enabler}

\AB{The path allocator module acts as the execution layer of the traffic management architecture. Although the L-MAPF coordinator periodically computes collision-free trajectories, each computation requires a non-negligible processing time and is performed at fixed recomputation intervals of $\eta$ timesteps. During each interval, AGVs operate in continuous-time, and their motion is subject to execution uncertainties that may introduce deviations from the predicted timing.}

\begin{remark}[Execution Uncertainties]
    {During execution, AGVs are exposed to both internal and external sources of uncertainty that affect their nominal traversal times. 
    Internal factors include variations in acceleration and deceleration profiles, differences in initial conditions (e.g., entering a segment from a full stop or while already in motion), mechanical wear, and sensor or control inaccuracies. 
    External factors arise from the operational environment and typically include vehicle alarms, safety laser scanner activations due to temporary obstacles, human interventions, and other transient disturbances. 
    These combined effects may force an AGV to slow down or stop unexpectedly, resulting in deviations from the predicted timing assumed by the coordinator.}
\end{remark}

\AB{As a result, two AGVs that are conflict-free in the computed space-time solution may still collide during execution if their actual traversal times deviate from the nominal ones. The path allocator eliminates this risk by regulating access to the roadmap and ensuring safe execution of the coordinated plan. Based on the trajectories $\mathcal{H}^\star$ provided by the L-MAPF coordinator as a reference, it assigns edges and determines the corresponding allocated trajectory and target vertex for each active AGV, enforcing mutual exclusion over all roadmap elements involved in potential conflicts so that their collision sets remain disjoint. The resulting target vertices and allocated trajectories are then supplied as inputs for the subsequent instance calculation for the L-MAPF coordinator. }


\begin{figure}[t]
\centering
\captionsetup{justification=justified}
\includegraphics[width=0.9\linewidth]{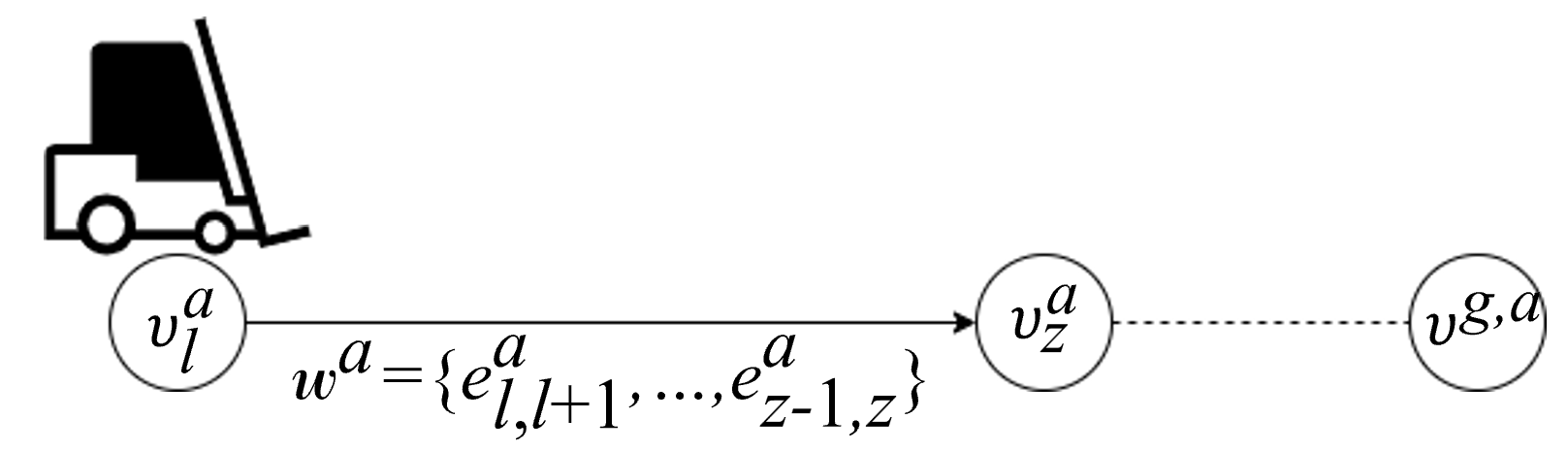}
\caption{Allocation process for the $a$-th AGV, illustrating the allocated queue $w^a$ from the currently occupied vertex $v_l^a$ to the target vertex $v_z^a$, depicted with a solid black arrow. The remaining portion of the path $\pi^a$ from $v_z^a$ to $v^{g,a}$ is represented by a dashed line.}
\label{fig:alloc}
\vspace{-0.3cm}
\end{figure}

As depicted in Fig.~\ref{fig:alloc}, the allocated edges for each $a$-th active AGV are contained in the allocated queue $w^{a}$, defined as an ordered sequence of edges $\{e_{l,l+1}^{a}, \dots, e_{z-1,z}^{a}\}$, with $l,z \in \{1,\dots, L^{M,a}\}$ and $l\leq z$. In particular, $w^{a}$ connects the currently occupied vertex $v_{l}^{a} \in \pi^{a}$ to the target vertex $v_{z}^{a} \in \pi^{a}$ (see Section~\ref{coordinputs}). From $w^{a}$, the allocated trajectory $u^{a}$ is determined. Hence, each $a$-th active AGV is associated with an allocated queue $w^{a}$, an allocated trajectory $u^{a}$, and a target vertex $v_{z}^{a}$.

\begin{remark}[Empty Allocated Queue and Trajectory]
The allocated queue $w^{a}$ and the trajectory $u^{a}$ of the $a$-th active AGV are empty if $l = z$, i.e., the AGV is occupying its current target vertex. \AB{In case the AGV is assigned a new task, i.e., $l=1$, $w^{a}$ and $u^{a}$ are initialized as empty.}
\end{remark}

The sets of allocated queues $\mathcal{W}=\{ 
w^{a} | a\in \mathcal{A}\}$, allocated trajectories $\mathcal{U}=\{ 
u^{a} | a\in \mathcal{A}\}$, and target vertices $\mathcal{V}^z=\{ v_{z}^{a} | a\in \mathcal{A}\}$ are updated each time a new instance is computed, as outlined in Algorithm~\ref{algAlloc} - Allocated Queues Updates, ensuring consistency and efficient edge allocation.


From the current L-MAPF coordinator instance, Algorithm~\ref{algAlloc} takes as inputs the sets $\mathcal{W}$, $\mathcal{U}$, $\mathcal{V}^z$, $\mathcal{H}^\star$, and a user-defined threshold parameter $\epsilon$.
The parameter $\epsilon$ specifies that edge allocation must be performed within
a time horizon of $\epsilon$ timesteps. Ignoring $\epsilon$ can extend the allocation unnecessarily, reducing coordination efficiency and limiting the L-MAPF coordinator's ability to adjust trajectories based on updated information within the next $\epsilon$ timesteps.

\begin{algorithm} [t!]
\caption{Allocated Queues Updates}
\label{algAlloc}
\KwIn{$\epsilon,$ $\mathcal{W}, $
$\mathcal{U}$,
$\mathcal{H}^\star,$
$\mathcal{V}^z$}
\KwOut{$\mathcal{W},
\mathcal{U}, 
\mathcal{V}^z $}

\ForEach{$\gamma^{a,\star} \in \mathcal{H}^\star$ } 
{
$k \gets 0$ \\
\While{$k < \textnormal{length}(\gamma^{a,\star}$)}
{
$\{v_{i}^{a},v_{j}^{a}, \tau^{a}, D_{i,j}^{a}\}
\gets \gamma^{a,\star}[k]$ \\

\If{$\tau^{a} > \epsilon$}
{\label{allhor}
    \textnormal{break}\\
}
\If{$v_{i}^{a} = v_{j}^{a}$}
{\label{checkwait}
    \textnormal{break}\\
}

\If{$v_{i}^{a} = v_{g}^{a}$}
{\label{checkgoal}
    \textnormal{break}\\
}

$e_{i,i+1}^{a} \gets GetEdge(v_{i}^{a},v_{i+1}^{a})$ \label{getedge}\\
$\mathcal{D}_{i,i+1}^{a} \gets GetCollisionSet(e_{i,i+1}^{a})$\label{getcoll}\\
$CollisionDetected \gets \textnormal{false}$ \label{colli} \\
\ForEach{$w^{b} \in \mathcal{W}\setminus w^{a}$}
{
\ForEach{$e_{m,m+1}^{b} \in w^{b}$}
{
\If{$e_{m,m+1}^{b} \in \mathcal{D}_{i,i+1}^{a}$}
{
$CollisionDetected \gets \textnormal{true}$\\
}
}
}
\If{$CollisionDetected =\textnormal{true}$}
{
$\textnormal{break}$ \label{colle}\\
}
$\textnormal{Append} \ e_{i,i+1}^{a} \ \textnormal{to} \ w^{a}$ \label{append}\\
$\textnormal{Append} \ \{v_{i}^{a}, v_{i+1}^{a}, \tau^{a}, D_{i,i+1}^{a}\} \ \textnormal{to} \ u^{a}$\\
$v_{z}^{a} \gets v_{i+1}^{a}$ \label{update}\\
$k \gets k+1$\\
}
}
$\mathbf{return} \ \mathcal{W},\mathcal{U},\mathcal{V}^z$
\end{algorithm}

Algorithm~\ref{algAlloc} iterates through each trajectory $\gamma^{a,\star} \in \mathcal{H}^\star$, processing all move and wait actions. For each action $\{v_{i}^{a}, v_{j}^{a}, \tau^{a}, D_{i,j}^{a}\}$, the algorithm performs the following checks:

\begin{itemize}
    \item If \(\tau^{a} > \epsilon\), the action occurs beyond the time horizon defined by \(\epsilon\) (Algorithm~\ref{algAlloc}, line~\ref{allhor}).
    \item If \(v_{i}^{a} = v_{j}^{a}\), a wait action occurs at vertex \(v_{i}^{a}\), preventing the \(a\)-th AGV from proceeding (Algorithm~\ref{algAlloc}, line~\ref{checkwait}).
    \item If \(v_{i}^{a} = v^{g,a}\), the path allocator has already allocated edges up to the task goal vertex \(v^{g,a}\) (Algorithm~\ref{algAlloc}, line~\ref{checkgoal}).
\end{itemize}

If any of these conditions are satisfied, the algorithm terminates the analysis of $\gamma^{a,\star}$. Otherwise, Algorithm~\ref{algAlloc} retrieves the edge $e_{i,i+1}^{a}$, corresponding to the move action from $v_{i}^{a}$ to $v_{i+1}^{a}$, and its collision set $\mathcal{D}_{i,i+1}^{a}$ (Algorithm~\ref{algAlloc}, lines~\ref{getedge}-\ref{getcoll}). 

Next, the algorithm compares $e_{i,i+1}^{a}$ with edges in any other allocated queue $w^{b} \in \mathcal{W} \setminus w^{a}$. If a collision is detected for at least one edge $e_{m,m+1}^{b}$ in $w^{b}$ (i.e., $e_{m,m+1}^{b} \in \mathcal{D}_{i,i+1}^{a}$), the analysis of $\gamma^{a,\star}$ is stopped (Algorithm~\ref{algAlloc}, lines~\ref{colli}-\ref{colle}). In contrast, if no collision is detected, the edge $e_{i,i+1}^{a}$ is added to $w^{a}$, $\{v_{i}^{a},v_{i+1}^{a}, \tau^{a}, D_{i,i+1}^{a}\}$ is added to $u^{a}$, and the target vertex $v_{z}^{a}$ is updated (Algorithm~\ref{algAlloc}, lines~\ref{append}-\ref{update}). 

Algorithm \ref{algAlloc} then proceeds to analyse the next action of $\gamma^{a,\star}$. Once all trajectories in $\mathcal{H}^\star$ have been completely processed, the algorithm returns the updated sets of allocated steps $\mathcal{W}$, allocated trajectories $\mathcal{U}$, and target vertices $\mathcal{V}^z$.

In accordance with the VDA5050 standard, each $a$-th active AGV stores its currently allocated queue $w^{a}$, communicated by the TM, in a local queue. The allocation gives the AGV the right-of-way in traversing the edges contained in $w^a$ without stopping (see Assumption~\ref{MovWait}). While moving along the allocated edges, each $a$-th AGV communicates its state to the TM. Upon reaching vertex $v_{l}^{a}$ from $v_{l-1}^{a}$, the path allocator deallocates the edge $e_{l-1,l}^{a}$, removing it from $w^{a}$, and allows other AGVs to allocate edges within the collision set $\mathcal{D}_{l-1,l}^{a}$. In addition, the path allocator removes the corresponding move action from the allocated trajectory $u^{a}$ and updates $\tau^{a}$ for all subsequent actions accordingly.

In case $w^{a}$ runs empty (i.e., $v_{l}^{a} = v_{z}^{a}$), the $a$-th AGV stops at the current target vertex $v_{z}^{a}$ and starts to brake beforehand.


\color{black}
\section{Deadlock Detector and Handler}
\color{black}
\label{DeadlockDetectorandHandler}

The implementation of a deadlock detection and resolution strategy is essential for minimizing AGV operational interruption. In our TM, deadlocks typically arise under two primary conditions: mainly (1) due to unpredictable events such as task updates and also (2) when the bounded horizon $\delta$ is insufficient for the ABH-CBS algorithm to resolve space-time collisions in non-corridor sectors. Deadlocks are categorized based on the nature of the precedence relationships among the involved AGVs \citep{10132864}:
\begin{itemize}
    \item \textit{Cyclic Deadlocks}, where the precedence relationships between AGVs form a circular dependency.
    \item \textit{Acyclic Deadlocks}, where the precedence relationship forms a directed acyclic graph structure.
\end{itemize}

As shown in Fig. \ref{fig:deadlock}, in this work, acyclic deadlocks typically exhibit a nested structure, where AGVs give priority to other AGVs, that are already involved in deadlock scenarios.

Hence, we propose the deadlock detector and handler modules, based on our previous work \citep{10.1007/978-3-031-76428-8_50}, that detect deadlocks and resolve them by updating the paths of the involved AGVs.

On the one hand, the deadlock detector entails analyzing the output of the current \AB{L-MAPF coordinator} instance to determine the existence of deadlocks and identifying the involved active AGVs. From the constraint set $\mathcal{M}^\star$, the deadlock detector builds the precedence graph $\mathcal{G}^P$, which models the actual precedence relationships between AGVs. The vertex set $\mathcal{V}^P(\mathcal{G}^P) = \mathcal{A}$ represents active AGVs, while the edge set $\mathcal{E}^P(\mathcal{G}^P)$ denotes their precedence relationships. Specifically, an edge $(a,b)$ is inserted in the precedence graph if the $a$-th active AGV is currently occupying its target vertex $v_{z}^a$, i.e., $\tau_{z}^a = 0$ (see Remark~\ref{EmptyAllocation}),
and a constraint $\mu^a =(a, v_{i}^a, v_{i+1}^a, [\tau^{I}, ..., \tau^{F}],b) \in \mathcal{M}^\star$ exists to resolve a space-time collision with the $b$-th AGV, such that:
\begin{itemize}
    \item $v_{i}^a = v_{z}^a$, meaning that the constraint refers to an action starting from the current target vertex $v_{z}^a$ of the $a$-th AGV.
    \item $\tau^{I} \leq D_{i,i+1}^a$, meaning that the $a$-th AGV cannot perform the move action from the target vertex to the next vertex $v_{z+1}^a$.
\end{itemize}

\begin{figure}
    \centering
    \captionsetup{justification=justified}\includegraphics[width=\linewidth]{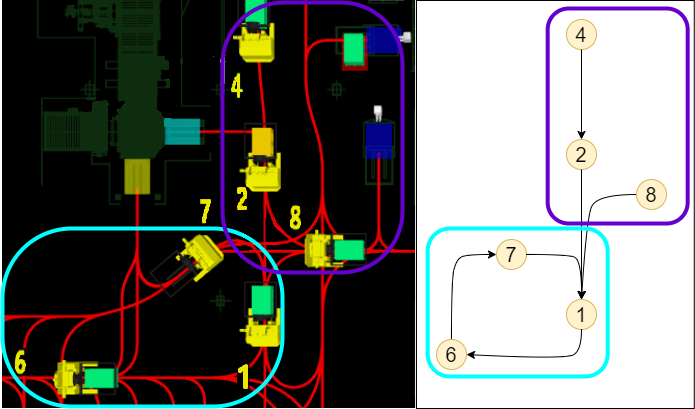}
    \caption{The figure illustrates two distinct types of deadlock scenarios. AGVs 1, 6, and 7 are involved in a circular deadlock, shown within a light blue circle, while AGVs 8, 2, and 4 are engaged in a nested acyclic deadlock, highlighted by a purple circle.}
    \label{fig:deadlock}
    \vspace{-0.3cm}
\end{figure}

After iterating over all $\mathcal{M}^\star$, the deadlock detector obtains the complete precedence graph $\mathcal{G}^P$ relative to the current \AB{L-MAPF coordinator} instance. Then, as described in \citet{10.1007/978-3-031-76428-8_50}, the deadlock detector identifies cyclic deadlocks employing the widely used Depth-First Search (DFS) algorithm \citep{cormen01introduction}, a common technique for detecting deadlocks in operating systems. The DFS algorithm begins from an initial node in the graph and recursively explores each branch as deeply as possible before backtracking to previously visited nodes. This traversal method enables DFS to efficiently navigate through graph structures while maintaining a record of visited nodes, typically in a dedicated list \citep{venkatesh1998deadlock}. A cycle is detected when the algorithm encounters a node that has already been visited within the same recursive path, indicating the presence of a deadlock. In our case, the DFS algorithm specifically identifies AGVs involved in precedence cycles within $\mathcal{G}^P$. By merging all the AGVs contained in precedence cycles, the deadlock detector obtains the set $\mathcal{B}$ containing the AGVs that require deadlock resolution due to cyclic deadlocks.

Then, differently from \citet{10.1007/978-3-031-76428-8_50}, the deadlock detector also addresses acyclic nested deadlocks using Algorithm~\ref{AcyclicDeadlockDetection} - Acyclic Deadlock Detection. The procedure iterates over the edge set $\mathcal{E}^P(\mathcal{G}^P)$, checking if there exists an edge $(a, b)$ such that $a \notin \mathcal{B}$ and $b \in \mathcal{B}$ (Algorithm~\ref{AcyclicDeadlockDetection}, lines~\ref{forab}-\ref{checkinsideB}). Upon detecting an edge $(a,b)$ that meets these conditions, node $a$ is added to the set $\mathcal{B}$, and Algorithm~\ref{AcyclicDeadlockDetection} is invoked recursively (Algorithm~\ref{AcyclicDeadlockDetection}, line~\ref{invok}). The process ultimately returns the complete set $\mathcal{B}$ of AGVs involved in deadlocks that require path replanning.

\AB{The idea of representing inter-AGV dependencies through a graph shares a conceptual similarity with the Action Dependency Graph (ADG) and the \AB{Temporal Plan Graph (TPG) used in Simple Temporal Networks (STN) approaches~\citep{Deadlock1,Deadlock2}.} In our case, however, the precedence graph employed for deadlock detection captures only the immediate precedence relations between AGVs in the current L-MAPF coordinator instance, and therefore does not model the full temporal evolution of dependencies characteristic of STN-based formulations.
}

On the other hand, once the set $\mathcal{B}$ has been defined, the deadlock handler employs the CBS algorithm \citep{sharon2015conflict}, which is both optimal and complete, to solve the deadlocks. \AB{Completeness is crucial in deadlock resolution, as it guarantees that if a feasible set of conflict-free trajectories exists for the involved AGVs, the algorithm will eventually find it, ensuring system recovery from any solvable blocking condition.} In particular, the CBS algorithm calculates a new path for each AGV in $\mathcal{B}$ to ensure that ABH-CBS can perform the AGV coordination without any blockages. 

Although CBS can be computationally intensive \citep{gordon2021revisiting}, this is not a significant concern for deadlock resolution in our AGV system. The number of AGVs involved in deadlocks is typically small, and computational constraints are less critical since the AGVs remain stationary during the resolution process.

Specifically, our CBS employs the Multi-Label A* \citep{grenouilleau2019multi} as the low-level planner to calculate the paths from $v_z^{a}$ to $v^{g,a}$ and from $v^{g,a}$ to $v^{g'_,a}$. This approach aligns with the path definition outlined in Section \ref{PathandTrajectoryDefinition}. Once a new path for each $a$-th AGV in $\mathcal{B}$ has been computed, the deadlock handler updates the respective paths in $\mathcal{P}$. Subsequently, the sets $\mathcal{K}^{\textnormal{tot}}$ and $\mathcal{S}$ are adjusted  accordingly to ensure that the next instance of \AB{the L-MAPF coordinator} is correctly initialized (see Section \ref{coordinputs}). 

\begin{algorithm} [t!]
\caption{Acyclic Deadlock Detection}
\label{AcyclicDeadlockDetection}
\KwIn{$\mathcal{B},\mathcal{G}^{P}$}
\KwOut{$\mathcal{B}$}

\ForEach{$(a,b) \in \mathcal{E}^{P}(\mathcal{G}^P)$\label{forab}}
{
\If{$a \notin \mathcal{B} \ \mathbf{and} \ b \in \mathcal{B}$\label{checkinsideB}}
{
$\mathcal{B} \gets \mathcal{B} \cup \{a\}$\\
$\mathcal{B} \gets \textbf{Algorithm \ref{AcyclicDeadlockDetection}}(\mathcal{B},\mathcal{G}^{P})\label{invok}$\\
\Return{$\mathcal{B}$}
}
}
\Return{$\mathcal{B}$}
\end{algorithm}

\AB{Although the proposed deadlock detector and handler enables automatic recovery from blocking situations, failures may still occur in rare cases where the roadmap layer does not offer the segments required for a feasible reconfiguration of the involved AGVs. Such situations typically arise in highly constrained areas of the plant, where the directed and predefined graph (see Assumption~\ref{preRoad}) provides no admissible detour or backtracking maneuver within the physical layout. When the resolution phase fails under these conditions, the system escalates the issue, requiring human operators to temporarily switch the affected AGVs to manual mode and reposition them to restore operability.
}

\section{Case Study}
\label{set}
This section describes the experimental setup (see Section~\ref{setup}) and the results (see Section \ref{results}) of the proposed traffic management system. In particular, our algorithm is developed in collaboration with \emph{Gruppo TecnoFerrari S.p.A.}, an Italian company specializing in end-of-line machinery, with a focus on warehouse and plant logistics using AGVs. 

The solution is implemented in C\#, integrated into the TecnoFerrari Supervisor software for AGV systems, which includes the Task Manager module. All experiments are conducted on a notebook equipped with an Intel Core i7-1165G7 processor (2.80 GHz clock speed) and 16 GB of DDR4-3200 RAM, and are validated in a real industrial environment in collaboration with the company (see Fig.~\ref{veicoloTecnoferrari}).
The video attached to the supplementary material (see Extension 1) shows some of the tests conducted in the automated factory.

\subsection{System Setup}
\label{setup}

\AB{The effectiveness and adaptability of the traffic management system are evaluated using three realistic layouts that simulate a palletizing, storage, and pallet‐wrapping plant located at the end of a production line. The corresponding views in the TecnoFerrari Supervisor software are shown in Fig.~\ref{Scenario1}, Fig.~\ref{Scenario2}, and Fig.~\ref{Scenario3}. The roadmaps and plant layouts, provided by \emph{Gruppo TecnoFerrari S.p.A.}, are used to generate the roadmap layer, ensuring compatibility with different AGV classes and their operational constraints. Based on this topological information, the topological layer is constructed by partitioning each plant into sectors and identifying corridor locations.}

\begin{figure}[t]
\centering
\captionsetup{justification=justified}
\includegraphics[width=\linewidth]{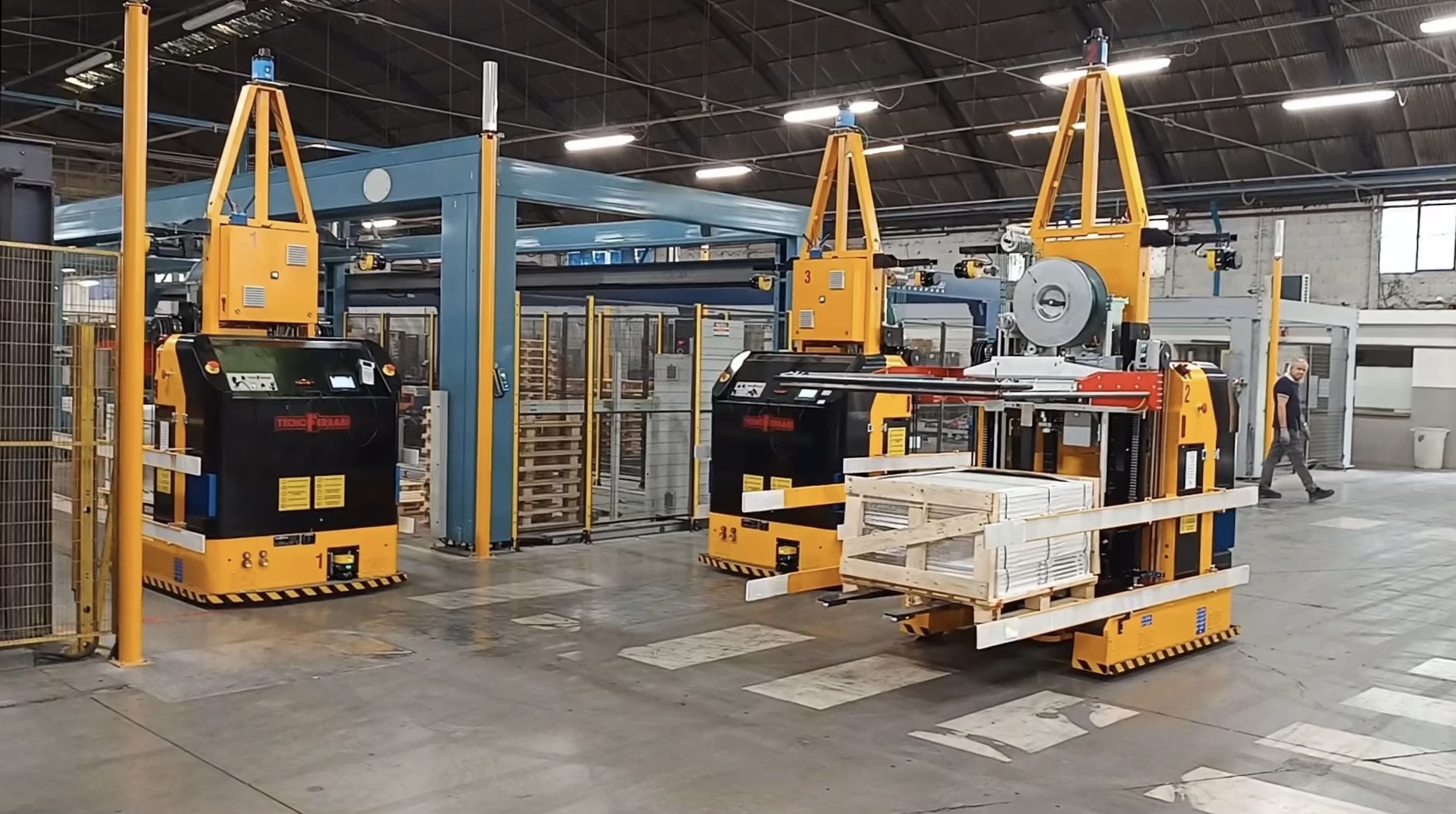}
\caption{A photo taken during the experiments conducted in the automated factory. The AGVs are coordinating in narrow and non-standardized settings.}
\label{veicoloTecnoferrari}
\end{figure} 

\AB{The three layouts represent small-, medium-, and large-scale industrial environments relevant to palletizing, storage, and pallet-wrapping plants.
The small environment is a non-standardized layout characterized by narrow dead-end bidirectional corridors and a tightly interconnected routing structure that restricts the AGV’s feasible path options. The structural constraints of the small layout inherently increase the likelihood of congestion and deadlock formation, making this environment representative of challenging coordination conditions for large AGVs.}
\AB{The medium environment preserves the non-standardized structure but is slightly larger, with wider operational areas and a broader set of interconnected sectors. Traffic density increases due to the presence of a larger fleet, while irregular geometry, asymmetric intersections, and dead-end bidirectional corridors continue to produce complex coordination conditions. This layout also includes heterogeneous AGVs, showcasing the capability of the proposed solution to handle multiple AGV classes operating simultaneously.}
\AB{The large environment corresponds to a more regular facility where heterogeneous AGVs move only along unidirectional tracks, resulting in a simpler navigation structure compared to the bidirectional corridor configurations of the smaller layouts. Although not central to the study, this layout is included to demonstrate the robustness and general applicability of the proposed traffic management system when coordinating a larger fleet in a nearly standardized layout.}

\AB{To quantitatively characterize the topological and geometric differences between the three layouts, two evaluation metrics are computed from the graphs associated with the corresponding roadmaps.}

\begin{figure}[t]
\centering
\captionsetup{justification=justified}
\includegraphics[width=\linewidth]{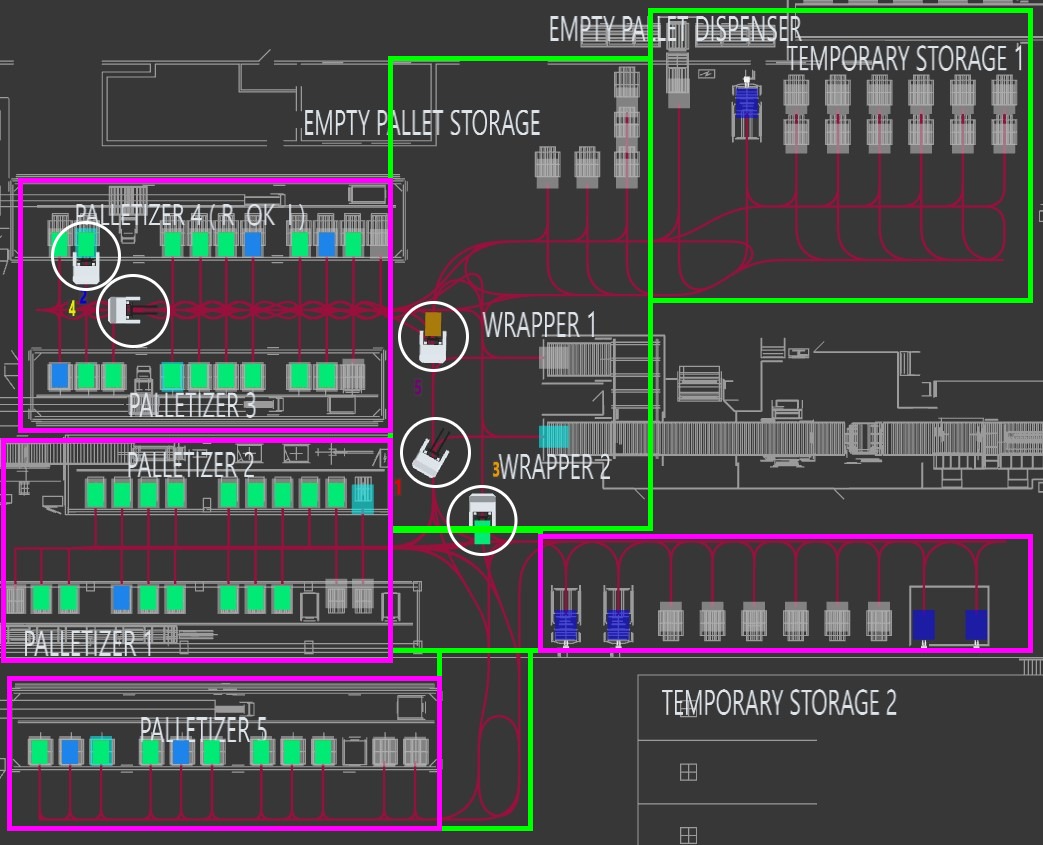}
\caption{2D reconstruction of the plant in the first \AB{layout}, with green areas delineating non-corridor sectors and magenta areas representing dead-end narrow corridors, forming a narrow, non-standardized environment, where AGVs of class $C_1$, marked by a white circle, are in motion.}
\label{Scenario1}
\end{figure}

\begin{figure}[t]
\centering
\captionsetup{justification=justified}
\includegraphics[width=\linewidth]{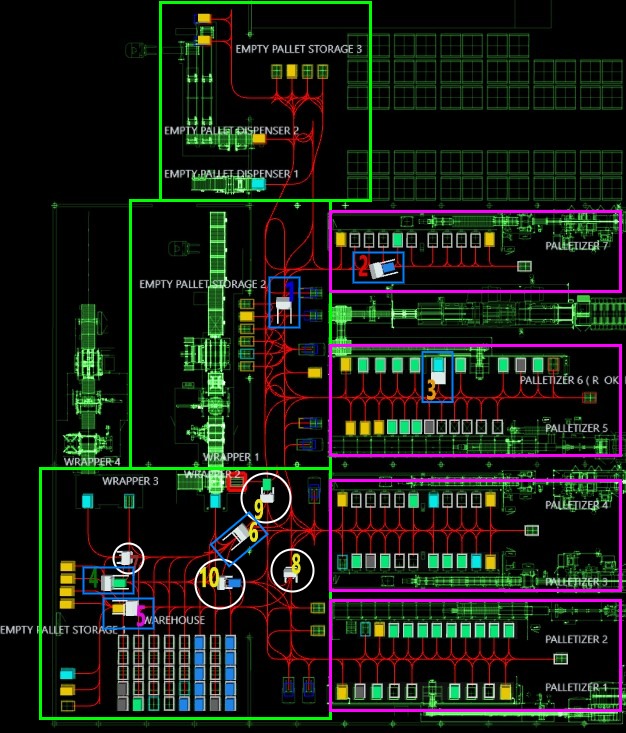}
\caption{2D reconstruction of the plant in the second \AB{layout}, with green areas delineating non-corridor sectors and magenta areas representing dead-end corridors, forming a medium-sized, high-traffic density, and non-standardized environment, where AGVs of class $C_1$ and class $C_2$, marked by a white circle and a blue square, respectively, are in motion.}
\label{Scenario2}
\end{figure}

\AB{The first metric is the \textit{Origin-Destination Betweenness Centrality} (O-D BC), a topological indicator from network science~\citep{chen2023betweenness} that measures how frequently an edge lies on the shortest paths connecting any pair of origin-destination vertices. For two generic starting $v^{s}$ and goal $v^{g}$ vertices representing station or battery-charger locations, let $N_{v^{s},v^{g}}^{e}$ denote the number of shortest paths from $v^{s}$ to $v^{g}$ that traverse edge $e$, and let $N_{v^{s},v^{g}}$ be the total number of shortest paths between them. The O-D BC of edge $e$ is defined as:}
\[
\AB{\text{O-D BC}(e) = \sum_{(v^{s}, v^{g})} \frac{N_{v^{s},v^{g}}^{e}}{N_{v^{s},v^{g}}}.}
\]
\AB{This metric is evaluated over the roadmap layer to quantify the structural relevance of each edge.}

\AB{Higher O-D BC values identify edges that play a critical role in traffic flow and are more likely to act as bottlenecks during AGV operation. For each layout's roadmap layer, the mean, variance, and standard deviation of the normalized O-D BC values are computed to characterize the distribution of centrality and to highlight regions where increased flow intensity may occur.}


\AB{The second metric is the \textit{Graph Density Index} (GDI) \citep{menniti2013}, defined for directed graphs without loops as:
\[
\mathrm{GDI} = \frac{|\mathcal{E}^{R}|}{|\mathcal{V}^{R}|(|\mathcal{V}^{R}|-1)}
\]
where $|\mathcal{E}^{R}|$ denotes the number of directed edges and $|\mathcal{V}^{R}|$ the number of vertices in the roadmap layer.
GDI quantifies the overall density of connections in the roadmap graph. In our environments, higher GDI values indicate a larger number of directed connections, primarily due to the presence of bidirectional corridors. The increased connectivity results in layouts that are more demanding from a coordination perspective.}

\AB{The indices are evaluated for the three layouts and reported in Table~\ref{tab:metrics}, where the first column identifies the layout, and the remaining columns report the BC and GDI indices.}


The first \AB{layout}, depicted in Fig. \ref{Scenario1}, represents a narrow, non-standardized environment where AGVs transport pallets from the palletizers to the wrapping machine, redirecting them to temporary storage \AB{when the wrapping machine is occupied or unavailable}. Empty pallets are supplied via a dispenser, enabling the AGVs to restock the palletizers \AB{whenever required}. The facility, with horizontal dimensions of \(60 \times 40\)~m$^2$, is divided into 8 sectors, including 4 corridors. The homogeneous AGVs, belonging to class $C_1$, have a rectangular footprint of \(2.9 \times 1.6\)~m$^2$ and are designed to navigate the confined layout efficiently, with a maximum speed of $1 \ \text{m/s}$. 
\AB{Effective task coordination is essential to maintain a continuous workflow and to limit delays in such a constrained environment.}
\AB{Table~\ref{tab:metrics} reports an O-D BC mean value of 0.281, a BC variance of 0.015, and a BC standard deviation of 0.122 for Layout~1. The values indicate that many origin-destination shortest paths tend to converge on a limited set of edges, producing a highly concentrated flow pattern and increasing the likelihood of bottleneck formation. The GDI value of 0.010 reflects a relatively high level of connectivity, largely attributable to the presence of bidirectional corridors, which introduce a greater number of edges despite the overall constrained layout geometry.}


\begin{figure*}[t]
\centering
\captionsetup{justification=justified}
\includegraphics[width=0.9\textwidth]{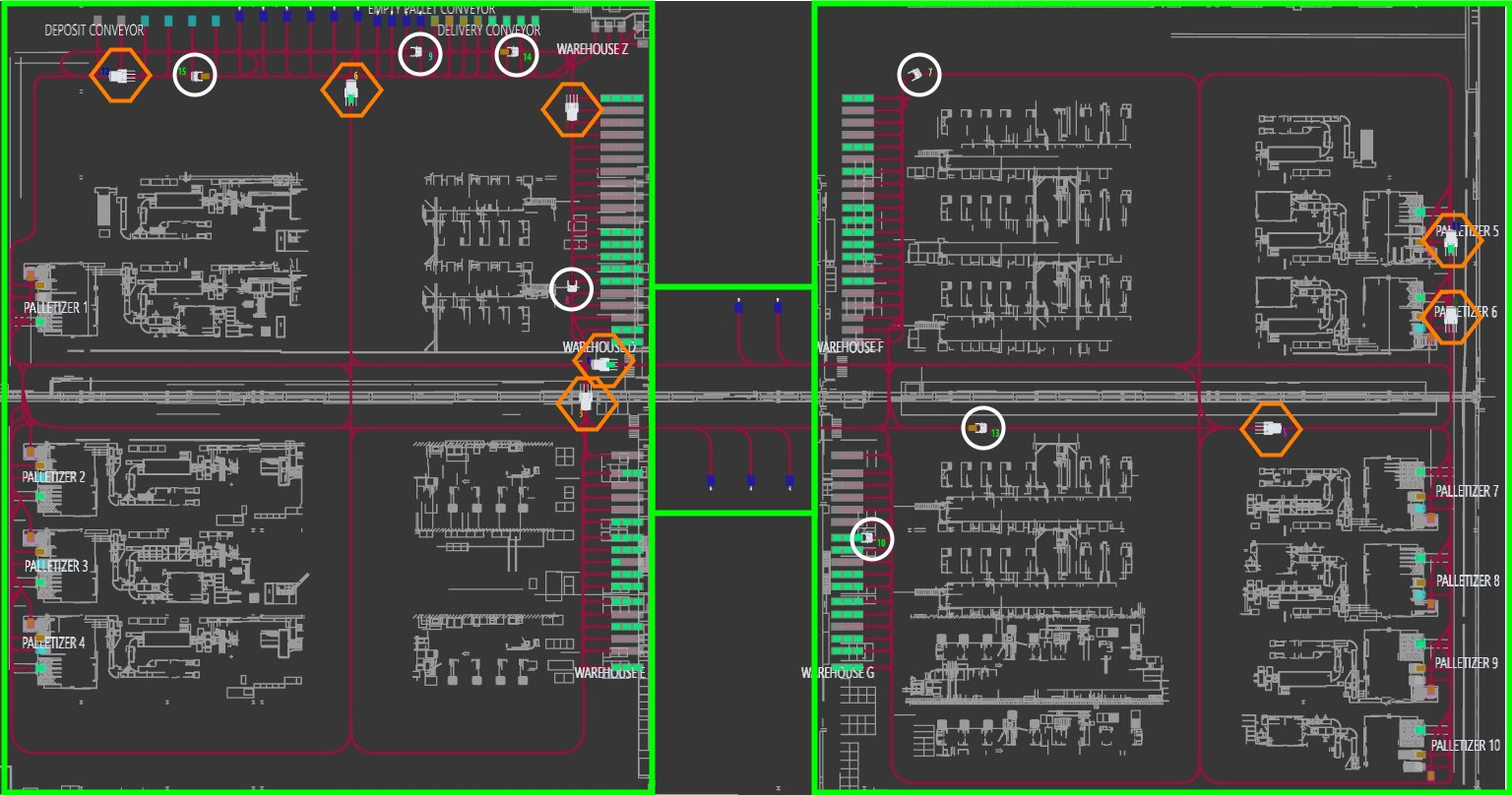}
\caption{2D reconstruction of the plant in the third \AB{layout}, representing a large environment divided by green areas that delineate non-corridor sectors, where AGVs of class $C_2$ and $C_3$, marked by a blue square and an orange hexagon, respectively, are in motion.}
\label{Scenario3}
\end{figure*}

\begin{table}[t!]
\caption{\AB{Layout evaluation metrics.}}
\centering
\begin{tabular}{c ccc c}
\toprule
\textbf{Layout} & \multicolumn{3}{c}{\textbf{O-D BC}} & \textbf{GDI} \\
\cmidrule(lr){2-4}
 & Mean & Var & Std Dev & \\
\midrule
1 & 0.281 & 0.015 & 0.122 & 0.010\\
\midrule
2 & 0.187 & 0.012 & 0.118 & 0.005\\
\midrule
3 & 0.131 & 0.016 & 0.134 & 0.002 \\ 
\bottomrule
\end{tabular}
\label{tab:metrics}
\vspace{-0.3cm}
\end{table}

The second \AB{layout} in Fig. \ref{Scenario2} models a medium-sized, non-standardized environment to assess the system’s scalability under high-traffic density.
The facility of horizontal size \(50 \times 70\) m$^2$ is also divided into 7 sectors, including 4 corridors. The AGV fleet consists of heterogeneous vehicles categorized into two classes: class $C_1$, and class $C_2$,  with a rectangular footprint of \(3.2 \times 1.8\) m\(^2\), both with a maximum speed of $1 \ \text{m/s}$. The AGVs are tasked with transporting pallets from the palletizers to the wrapping machine. In cases where the wrapping machine is occupied or unavailable, the AGVs transport pallets to a designated warehouse area. Furthermore, empty pallets are supplied from two dispensers, enabling the AGVs to restock the palletizers when required or deposit them in the empty pallet storages.
\AB{Table~\ref{tab:metrics} reports an O-D BC mean value of 0.187, a variance of 0.012, and a standard deviation of 0.118 for Layout~2. The values indicate a less concentrated centrality distribution compared to the small layout, reflecting the presence of wider areas and a routing structure that is less tightly interconnected, which allows shortest paths to distribute more evenly across the network.
The GDI value of 0.005 suggests a moderate level of connectivity, lower than in Layout~1 due to the larger graph size but still aligned with the non-standardized structure and the bidirectional corridor pattern of the environment.
}


The third layout in Fig. \ref{Scenario3} evaluates the system’s performance in a large environment of horizontal size \(210 \times 115\) m$^2$ featuring no corridor sectors. 
\AB{Although large and unconstrained facilities are not the primary focus of the study, this scenario offers insight into system behavior and performance under more regular and less restrictive operating conditions.} \AB{In particular, the layout includes AGVs belonging to class $C_3$ with a rectangular footprint of \(4.3 \times 1.5\) m\(^2\) and a maximum speed of $0.8 \text{m/s}$, operating alongside AGVs of class $C_1$.} The AGVs transport pallets from the palletizers or the deposit conveyor to the warehouses or delivery conveyor, and also provide palletizers with empty pallets from the empty pallet conveyor.
\AB{Table~\ref{tab:metrics} reports an O-D BC mean value of 0.131, a variance of 0.016, and a standard deviation of 0.134 for Layout~3. 
The values indicate a generally lower centrality level compared to the smaller layouts, as shown by the reduced mean, while the relatively high standard deviation reveals that certain edges are traversed by substantially more shortest paths than the rest of the roadmap.
The GDI value of 0.002 reflects the low graph density resulting from the use of unidirectional tracks, which provide fewer edges than the bidirectional corridors found in the smaller layouts.}



\begin{table}[t]
\centering
\caption{Parameters set during the testing phase for all three \AB{layouts}.}
\label{tab3}
\relsize{-1}
\begin{tabular}{ccccc}

    \toprule
    \textbf{Parameter} & \textbf{Description} & \textbf{\AB{Layout} 1} & \textbf{\AB{Layout} 2} & \textbf{\AB{Layout} 3}\\
    \midrule
    $\tau$ & Timestep  & 1 [s]& 1 [s]& 1 [s] \\
    \hline
    $\eta$ & Replanning  & 1 $\tau$& 1 $\tau$& 1 $\tau$ \\
     &Time& & & \\
    \hline
    $\delta'$ &Base Time& 30 $\tau$& 30 $\tau$& 30 $\tau$ \\
    &Horizon& & & \\
    \hline
    $\delta''$ & Horizon  & 10 $\tau$& 10 $\tau$& 10 $\tau$ \\
     &Increment& & & \\
    \hline
    $t^{\sigma}$ &Timeout& 250 [ms]& 500 [ms]& 500 [ms] \\
     \hline
    $\epsilon$ & Allocation &  6 $\tau$&  6 $\tau$&  10 $\tau$ \\
     &Horizon& & & \\

     \bottomrule
\end{tabular}
\vspace{-0.5cm}
\end{table}

\AB{Table~\ref{tab3} summarizes the parameters adopted across the three layouts. The timestep $\tau$ is set to 1 s (Row I), a choice aligned with common practice in AGV traffic management systems \citep{10132864}. A one-second discretization matches typical AGV speed, offers adequate temporal resolution for conflict detection and avoidance, and limits unnecessary computational overhead.}
\AB{The replanning time $\eta$ is set to 1 $\tau$ (Row II), enabling the L-MAPF coordinator to react to execution uncertainties at every timestep. This configuration corresponds to a common update frequency used in industrial AGV control systems \citep{9171550}.}
\AB{The base time horizon $\delta'$ is set to 30 $\tau$ (Row III), while the horizon increment $\delta''$ is set to 10 $\tau$ (Row IV). The initial horizon provides sufficient look-ahead for capturing short-term interactions in congested areas as shown in our previous work \citep{bonetti2024agv}, whereas the incremental extension balances computational effort and solution refinement within the anytime behavior of the L-MAPF coordinator. 
The timeout $t^{\sigma}$ values (Row V) are chosen according to the operational requirements of the three facilities and the computational constraints imposed by the TecnoFerrari Supervisor software, which executes concurrently with the traffic management system.
The allocation horizon $\epsilon$ (Row VI) is selected to match the scale and structural characteristics of each layout. The third layout adopts a larger value than the first two (Columns III-IV), reflecting the longer traversal distances and extended queues associated with its wider geometry.}
\AB{Among all parameters, $\epsilon$ is the only one not fixed by industrial conventions, hardware limits, or real-time operational requirements. It directly modulates the interaction between the path allocator and the L-MAPF coordinator by determining the portion of each AGV’s path reserved during execution, a feature introduced by the proposed system and central to the balance between workspace allocation and execution flexibility. A detailed assessment of the impact of $\epsilon$ is presented in the sensitivity analysis reported in Section~\ref{sensa}.}



\subsection{Results}
\label{results}

\AB{To evaluate the effectiveness of the proposed approach, we define a structured set of scenarios for the three layouts described above. 
For each layout, scalability is assessed through a first group of scenarios (1.A-1.C, 2.A-2.D, and 3.A-3.D), where the fleet size is progressively increased to determine the configuration that maximizes throughput. Selecting throughput-maximizing configurations aligns with the primary operational goal of AGV systems, namely completing tasks at the highest possible rate under realistic operating conditions, and identifies the fleet sizes that exploit each layout’s capacity most effectively before congestion becomes dominant.}

\AB{Based on the scalability results, the scenarios producing the highest throughput for each layout (1.B, 2.B, and 3.B) are selected as reference cases for three additional analyses. The first analysis, a sensitivity study, evaluates the impact of the allocation horizon $\epsilon$, a concept introduced in the present work, on system performance. The second analysis, an ablation study, evaluates the contribution of anytime and EHC strategies included in the proposed L-MAPF coordinator. The final analysis, a comparative evaluation, benchmarks the proposed solution against three alternative coordination methods: a traditional rule-based traffic management system commonly used in logistics operations, a state-of-the-art industrial solution, and a priority-based L-MAPF method.}



\AB{To assess the performance of the algorithm across the scenarios, we consider the following Key Performance Indicators (KPIs): } 


    \begin{itemize}
     \item \textbf{Throughput:} the average number of tasks completed per hour by the AGVs. A higher throughput value indicates improved traffic management system efficacy, as it reflects a greater number of tasks being executed within the same time span.
    \item \textbf{Average Flow Time:} the average time required to accomplish a task in the system, from allocation to completion. A lower average flow time indicates a more efficient system, as tasks are completed more quickly, reducing delays and improving overall responsiveness.
    \item \textbf{Management Efficiency:} the traffic management efficiency is defined as \( \frac{T^{M}}{T^{M} + T^{W}} \), where \( T^{M} \) represents the time AGVs spend actively moving, while \( T^{W} \) is the time spent waiting. This metric quantifies the proportion of operational time dedicated to AGV movement. A higher efficiency value indicates a more effective coordination strategy, minimizing idle time and maximizing productive AGV usage.
    \item  \textbf{Number of Deadlocks:} the number of deadlocks generated during testing phase. A lower number of deadlocks indicates a more robust coordination strategy, ensuring smoother traffic flow and minimizing disruptions in the system.
    \item \textbf{Deadlocks Detected \& Resolved Percentage:} the percentage of deadlocks effectively detected and resolved by the deadlock detector and handler modules. A higher percentage indicates a more effective resolution strategy, ensuring minimal system disruptions and maintaining continuous AGV operations.
    \item \textbf{Average Deadlock Resolution Time: } the average computational time required to solve deadlocks. A lower resolution time indicates a more efficient deadlock handling mechanism, enabling quicker recovery and minimizing delays in AGV operations.
    \item \textbf{Average Time Window (TW)}: the average value of $\delta$ reached during ABH-CBS calculation. It represents the computational effort required to solve an instance. Lower values of Average TW indicate a higher computational complexity.
    \item \textbf{Valid Solution Percentage:} the percentage of ABH-CBS instances for which a solution is found before the computational time expires. A higher percentage reflects a more reliable and efficient algorithm, ensuring timely coordination.
\end{itemize} 
\AB{In the ablation study, the KPIs associated with features specific to ABH-CBS (Average TW and Valid Solutions Percentage) are not reported, since the ablated variants do not implement anytime refinement or timeout-based termination. The same applies to the comparative analysis, where the baseline coordination methods do not include anytime expansion or a timeout stopping rule, making these metrics inapplicable. Deadlock-related KPIs are also reported only for approaches that explicitly incorporate deadlock detection and resolution mechanisms.}

\AB{The data required to evaluate the KPIs is collected during execution within the TecnoFerrari Supervisor software, where AGVs operate under continuous task assignment. Each scenario runs for approximately ten hours, providing an observation period long enough to capture representative operating conditions and to obtain statistically meaningful performance indicators under sustained workload.}

\AB{A dedicated convergence analysis is reported at the end of the section. The study focuses on Layout 1 with the fleet size of scenario 1.B, a configuration that maximizes throughput while keeping the computational load sufficiently moderate to allow systematic data collection. The selected operating point offers a representative and tractable setting for examining the anytime behavior of ABH-CBS and for quantifying its progression toward the full-horizon CBS solution. The analysis also monitors the evolution of solution quality as a function of cumulative elapsed time, providing a complementary perspective on the convergence dynamics of the solver.}



\begin{table*}[t]
\centering
\caption{\AB{
KPI values obtained for the three layouts in the scalability analysis, as the number of AGVs $N^{A}$ varies.
}}
\label{tab4}
\relsize{-2.4}
\begin{tabular}{cccccccccc}

    \toprule
    \textbf{Scenario} & $\bm{N^A}$ & \textbf{Throughput} & \textbf{Average} & \textbf{Management} & \textbf{No.} & \textbf{Deadlocks } & \textbf{Average Deadlock } & \textbf{Average} & \textbf{Valid}\\
    && \textbf{[h$^{-1}$]} & \textbf{Flow Time} & \textbf{Efficiency} & \textbf{Deadlocks} & \textbf{Detected \& Resolved } & \textbf{Resolution Time } & \textbf{TW} & \textbf{Solutions}\\
    &&  & \textbf{[s]} &  & & \textbf{Percentage [\%]} & \textbf{[ms]} & [$\boldsymbol{\tau}$] & \textbf{ Percentage [\%]}\\
         \midrule
\textbf{1.A} & 3 & 137 & 77 & 0.93 & 1 & 100 & 144 & 151.37 & 100 \\
\textbf{1.B} & 5 & 208 & 85 & 0.84 & 3 & 100 & 378 & 112.80 & 100 \\
\textbf{1.C} & 7 & 189 & 127 & 0.56 & 8 & 75 & 617 & 70.00 & 97.98 \\
\hline
\textbf{2.A} & 5 & 178 & 100 & 0.94 & 0 & -- & -- & 132.64 & 100 \\
\textbf{2.B} & 8 & 246 & 115 & 0.82 & 1 & 100 & 1765 & 80.93 & 99.82 \\
\textbf{2.C} & 10 & 283 & 125 & 0.75 & 4 & 75 & 1828 & 60.97 & 98.22 \\
\textbf{2.D}  & 12 & 242 & 178 & 0.53 & 13 & 85 & 3256 & 43.64 & 28.65 \\
\hline
\textbf{3.A} & 10 & 195 & 181 & 0.93 & 0 & -- & -- & 130.74 & 100 \\
\textbf{3.B} & 15 & 288 & 183 & 0.92 & 0 & -- & -- & 100.38 & 99.97 \\
\textbf{3.C}& 20 & 341 & 204 & 0.82 & 1 & 100 & 269 & 72.16 & 87.42 \\
\textbf{3.D}& 25 & 327 & 265 & 0.63 & 4 & 100 & 304 & 47.41 & 43.18 \\
\bottomrule

\end{tabular}

\end{table*}

\color{black}

\subsubsection{Scalability Analysis}
To assess scalability, performance is evaluated across the three layouts while progressively increasing the fleet size. The complete set of KPIs is reported in Table~\ref{tab4}. 

Across all layouts, throughput rises as additional AGVs are introduced, confirming that the proposed traffic management system can effectively exploit larger fleets even under growing traffic density. Management efficiency, however, steadily decreases: with more vehicles operating simultaneously, a larger share of time is spent waiting rather than moving. The resulting increase in flow time indicates the higher coordination effort imposed by denser operating conditions.

Each layout eventually reaches a saturation point where the negative effects of congestion outweigh the benefits of additional AGVs. In Layouts 1 and 2, the threshold appears in 1.C and 2.D: waiting times escalate, local congestion becomes persistent, and throughput no longer increases. This behavior results from the restricted routing options available in the two layouts and from the growth in coordination complexity induced by the increasing number of inter-agent interactions. Layout 3 follows the same qualitative trend, but the saturation point appears at a larger operational scale due to the wider geometry and the presence of unidirectional lanes. Throughput increases up to scenario 3.C; in 3.D, however, the pronounced rise in flow time and the reduction in management efficiency indicate that both the structural capacity of the layout and the computational load on the coordinator have reached their limits. Beyond that point, adding more AGVs no longer enhances productivity and instead intensifies congestion.

Deadlock occurrence increases consistently with fleet size across all layouts, reflecting the limited routing flexibility typical of non-standardized industrial environments. The deadlock detection and resolution module handles the vast majority of events, and no undetected deadlocks are observed. Only the most overloaded scenarios (1.C and 2.C-2.D) show a reduction in the fraction of resolved cases (down to 75-85\%), largely because the involved AGVs lack any feasible motion to resolve the blockage. 
Resolution times increase with fleet size, as a higher number of AGVs makes multi-vehicle deadlocks more likely, which naturally requires longer coordination and resolution effort.

For ABH-CBS, the average TW decreases as the fleet becomes larger. An increase in the number of agents raises both the number of potential conflicts and the amount of constraint checking, directing most of the available computation toward resolving imminent interactions rather than extending the base horizon $\delta'$. The effect is particularly evident in 2.D and 3.D, where TW drops to values close to $\delta'$. A similar pattern emerges in the valid solutions percentage: in the most congested scenarios, the solver often fails to reach $\delta'$ before the timeout, leading to the lowest recorded rates (28.65\% and 43.18\%). In all other scenarios, TW remains sufficiently large to sustain reliable conflict resolution, and the valid solutions percentage stays consistently high even under significant traffic density, confirming the robustness of the proposed approach when operating under real-time constraints.

Across the scalability analysis, 1.B, 2.C, and 3.C emerge as the throughput-maximizing configurations for Layouts 1, 2, and 3 respectively. Each configuration represents the fleet size that exploits the corresponding layout capacity most effectively while remaining below the congestion threshold reached in more populated settings. The three scenarios act as the reference operating points for the sensitivity, ablation, and comparative evaluations.

\subsubsection{Sensitivity Analysis}
\label{sensa}
\begin{table*}[t]
\centering
\caption{\AB{KPI values obtained for the three layouts in the sensitivity analysis, with the fleet size set to $N^{A} = 5$ for scenarios 1.B.a--1.B.b--1.B.c, $N^{A} = 10$ for scenarios 2.C.a--2.C.b--2.C.c, and $N^{A} = 20$ for scenarios 3.C.a--3.C.b--3.C.c.}}
\label{tab7}
\relsize{-2}
\begin{tabular}{cccccccccc}

    \toprule
    \textbf{Scenario} & $\epsilon$ & \textbf{Throughput} & \textbf{Average} & \textbf{Management} & \textbf{No.} & \textbf{Deadlocks } & \textbf{Average Deadlock } & \textbf{Average} & \textbf{Valid}\\
    && \textbf{[h$^{-1}$]} & \textbf{Flow Time} & \textbf{Efficiency} & \textbf{Deadlocks} & \textbf{Detected \& Resolved } & \textbf{Resolution Time } & \textbf{TW} & \textbf{Solutions}\\
    &&  & \textbf{[s]} &  & & \textbf{Percentage [\%]} & \textbf{[ms]} & [$\boldsymbol{\tau}$] & \textbf{ Percentage [\%]}\\
         \midrule

 \textbf{1.B.a} & 4 & 207 & 85 & 0.84 & 3 & 100 & 543 & 111.25 & 100\\
 \textbf{1.B.b} & 6 & 208 & 85 & 0.84 & 3 & 100 & 378 & 112.80 & 100 \\
 \textbf{1.B.c} & 10 & 199 & 89 & 0.80 & 4 & 100 & 235 & 116.62 & 100 \\
\hline
\textbf{2.C.a} & 4 & 283 & 124 & 0.75 & 5 & 80 & 1277 & 57.35 & 96.50\\
\textbf{2.C.b} & 6 & 283 & 125 & 0.75 & 4 & 75 & 1828 & 60.97 & 98.22 \\
\textbf{2.C.c} & 10& 276 & 128 & 0.73 & 3 & 100 & 1442 & 
66.06 & 99.34\\
\hline
\textbf{3.C.a} & 4 & 341 & 204 & 0.82 & 1 & 100 & 157 & 68.51 & 86.08\\
\textbf{3.C.b} & 6 & 340 & 205 & 0.82 & 0 & - & - & 69.13 & 86.41\\
\textbf{3.C.c}& 10 & 341 & 204 & 0.82 & 1 & 100 & 269 & 72.16 & 87.42 \\
 \bottomrule

\end{tabular}

\end{table*}

A sensitivity analysis is conducted to assess how the allocation horizon $\epsilon$ influences the performance of the proposed traffic management system. Three representative values are examined to capture different operating regimes: a short horizon ($\epsilon=4$), a medium horizon ($\epsilon=6$), and a long horizon ($\epsilon=10$). The selected values allow us to evaluate how the parameter used by the path allocator to reserve portions of the roadmap affects coordination performance. The results across all layouts are reported in Table~\ref{tab7}.

For Layouts 1 and 2, results show that $\epsilon=4$ and $\epsilon=6$ lead to nearly identical values of throughput, average flow time, and management efficiency. The only marked difference concerns ABH-CBS: with $\epsilon=6$, the solver attains a slightly larger average TW, improving predictive performance without altering the KPIs. As a result, $\epsilon=6$ emerges as the most balanced configuration in the two layouts. The improvement is driven by the longer reserved portion of the path, which shifts the starting timestep of the space-time search forward and enables deeper exploration of the temporal dimension within the same timeout.

Increasing the allocation horizon to $\epsilon=10$ in Layouts 1 and 2 leads to a slight but consistent degradation in throughput, average flow time, and management efficiency, even though the average TW becomes larger. The decline arises from the longer portion of path allocated to each AGV: an overly extended reservation of segments reduces flexibility and amplifies the effect of execution uncertainties. When an AGV slows down because of an unexpected obstacle, the roadmap element belonging to the collision set of the allocated segment remains unavailable to other AGVs for an unexpectedly long time. Since the allocator enforces non-overlapping allocations, the remaining AGVs must wait until the segment is released, which negatively affects overall coordination performance when execution uncertainties occurs. Across both layouts, the number of deadlocks remains comparable for all values of $\epsilon$, indicating that the allocation horizon primarily affects solution quality rather than the structural likelihood of deadlock formation.

In Layout 3, all tested values of $\epsilon$ lead to nearly identical operational performance, reflecting the standardized geometry and predominantly unidirectional tracks. In this setting, selecting $\epsilon=10$ does not reduce throughput or flow time and even provides a computational advantage, as ABH-CBS attains a larger average TW without compromising coordination quality. Deadlock occurrences remain similarly low for all configurations, indicating that the allocation horizon has limited influence when vehicle circulation is largely unconstrained.


\subsubsection{Ablation Study}
\begin{table*}[t]
\centering
\caption{\AB{KPI values obtained for the three layouts in the ablation study, with the fleet size set to $N^{A} = 5$ for scenarios 1.B.b--1.B.d--1.B.e, $N^{A} = 10$ for scenarios 2.C.b--2.C.d--2.C.e, and $N^{A} = 20$ for scenarios 3.C.c--3.C.d--3.C.e.}}
\label{tab6}
\relsize{-2}
\begin{tabular}{cccccccc}

    \toprule
    \textbf{Scenario} & \textbf{Approach} & \textbf{Throughput} & \textbf{Average Flow} & \textbf{Management} & \textbf{No.}  & \textbf{Deadlocks Detected \&} & \textbf{Average Deadlock }\\
     &  & \textbf{[h$^{-1}$]} & \textbf{Time [s]} & \textbf{Efficiency} & \textbf{Deadlocks} &  \textbf{Resolved Percentage [\%]} & \textbf{Resolution Time }\\
         \midrule
 \textbf{1.B.b}  & ABH-CBS with EHC & 208 & 85 & 0.84 & 3 & 100 & 378\\
 \textbf{1.B.d} & BH-CBS with EHC & 201 & 88 &  0.81 & 5 & 100 & 286\\
 \textbf{1.B.e}  &  BH-CBS & 184 & 96 & 0.74 & 17 & 94 & 547 \\
\hline
\textbf{2.C.b}  & ABH-CBS with EHC & 283 & 125 & 0.75 & 4 & 75 & 1828\\
\textbf{2.C.d}  & BH-CBS with EHC & 272  & 130 & 0.72 & 7 &  86 & 1901\\
\textbf{2.C.e} & BH-CBS & 241 & 148 & 0.63 & 26 & 85 & 2362\\
\hline
\textbf{3.C.c}  & ABH-CBS with EHC & 341 & 204 & 0.82 & 1 & 100 & 269\\
\textbf{3.C.d} & BH-CBS with EHC & 338 & 206 & 0.81 & 2 & 100 & 76\\
\textbf{3.C.e} & BH-CBS & 339 & 206 & 0.81 & 2 & 100 & 257\\
 \bottomrule

\end{tabular}

\end{table*}

To isolate the contribution of the key coordination mechanisms embedded in our L-MAPF coordinator, an ablation study is conducted across the three layouts using the throughput-maximizing scenarios, i.e., 1.B, 2.C, and 3.C.
The full version of our coordinator, denoted as ABH-CBS, augments the standard Bounded-Horizon CBS (BH-CBS) with two additional mechanisms: (i) an anytime expansion of the time window, which incrementally increases $\delta$ starting from the base value $\delta'$; and (ii) the EHC strategy, which extends the horizon $\delta^{a}$ for each $a$-th AGV as long as it operates inside narrow corridor sectors.

To quantify the impact of both mechanisms, we compare the ABH-CBS against two simplified variants. The first variant, BH-CBS + EHC, disables the anytime mechanism; in this configuration the time window cannot grow beyond the base horizon $\delta'$ in non-corridor sectors, while the EHC module remains active. As a result, the coordinator behaves as a BH-CBS outside corridors and retains full conflict-resolution capability inside corridor sectors.
The second variant, BH-CBS, removes both the anytime expansion and the EHC mechanism forcing the
coordinator to operate in all sectors with the base bounded horizon $\delta'$. This configuration represents the minimal baseline and corresponds to a classical BH-CBS without any form of horizon refinement or corridor extension.

The results in Table~\ref{tab6} indicate that the effect of the two ablations depends strongly on the layout. In Layout~1 and Layout~2, which are narrow, non-standardized, and constrained by limited maneuvering space, disabling the anytime strategy already leads to a clear performance reduction, with throughput decreasing by 3\% and 4\% in 1.B.d and 2.C.d respectively. Without horizon expansion, the L-MAPF coordinator cannot anticipate conflicts beyond $\delta'$ in non-corridor sectors, so conflicts that ABH-CBS would detect earlier are identified only when the AGVs are already close to, or partially committed to, a congested region. The delay in conflict identification increases the likelihood of deadlocks, extends waiting times, and leads to noticeable degradation in throughput, average flow time, and management efficiency. In Layout~3 the impact is negligible, because the standardized geometry and the absence of narrow corridors allow the coordinator to predict conflicts effectively even when operating with the base horizon $\delta'$.

When both the anytime strategy and the EHC mechanism are disabled, the degradation becomes substantially more severe in layouts containing narrow bidirectional corridors. With only the base horizon available, the L-MAPF coordinator cannot capture the temporal evolution of conflicts inside corridor sectors. As a result, AGVs often enter the same corridor before the impending conflict becomes detectable, leading to a large number of corridor deadlocks. The deadlock detector and handler resolve all such cases, but recovery is costly: once a deadlock forms inside a narrow corridor, the involved AGVs must stop, wait for the conflict to be processed, and perform the maneuvers required to restore feasibility before resuming their tasks. The resulting increase in travel distance and waiting time causes marked reductions in throughput and management efficiency.
In Layout~1, where several narrow bidirectional corridors connect the palletizers to the rest of the plant, the absence of EHC causes corridor deadlocks to form systematically, generating repeated interruptions and costly recovery maneuvers. Compared to 1.B.b, throughput decreases by about 12\% in 1.B.e. The effect is even more pronounced in Layout~2, where higher traffic density increases both the likelihood and the complexity of corridor conflicts, further degrading system throughput in 2.C.e of approximately 15\% compared to 2.C.b. By contrast, Layout~3 is essentially unaffected: the environment contains no narrow bidirectional corridors and traffic flows are predominantly unidirectional, so conflicts remain easy to anticipate and handle even with the base horizon $\delta'$, and the performance of the ablated configuration remains comparable to that of the full ABH-CBS coordinator.


\subsubsection{Comparative Evaluation}

\begin{table*}[t]
\centering
\caption{\AB{KPI values obtained for the three layouts in the comparative evaluation across all tested traffic-management approaches.}}
\label{tab5}
\relsize{-2}
\begin{tabular}{cccccccc}

    \toprule
    \textbf{Scenario} & \textbf{Approach} & \textbf{Throughput} & \textbf{ Average Flow} & \textbf{Management} & \textbf{No.} & \textbf{Deadlock Detector} & \textbf{Deadlocks Detected \&}\\
     & & \textbf{[h$^{-1}$]} & \textbf{Time [s]} & \textbf{Efficiency} & \textbf{Deadlocks} & \textbf{\& Handler} &  \textbf{Resolved Percentage [\%]} \\
         \midrule
\textbf{1.B.b} & \textit{ours} &208 & 85 & 0.84 & 3 & \text{\ding{51}} & 100\\
\textbf{1.B.f} & \textit{company} & 188 & 94 & 0.76 & 4 & \text{\ding{55}}& - \\
 \textbf{1.B.g} &  \citet{10132864} & 194 & 92 & 0.78 & 4 & \text{\ding{51}} & 50 \\
 \textbf{1.B.h}  &  L-MAPF coordinator with PBS & 196 & 91 & 0.79 & 7 & \text{\ding{55}} & - \\ 
\hline
\textbf{2.C.b} & \textit{ours} & 283 & 125 & 0.75 & 4 & \text{\ding{51}} & 75 \\
 \textbf{2.C.f}& \textit{company} & 255 & 139 & 0.67  & 5 & \text{\ding{55}} & - \\
\textbf{2.C.g} &  \citet{10132864} & 264 & 132 & 0.71 & 8 & \text{\ding{51}} & 37 \\
\textbf{2.C.h}   & L-MAPF coordinator with PBS & 258  & 138 & 0.68 & 11 & \text{\ding{55}} & - \\ 
\hline
\textbf{3.C.c} & \textit{ours} & 341 & 204 & 0.82 & 1 & \text{\ding{51}} & 100 \\
\textbf{3.C.f} & \textit{company} & 338 & 206 & 0.81 & 1 & \text{\ding{55}} & -\\ 
\textbf{2.C.g} &  \citet{10132864} & 337 & 208 & 0.80 & 1 & \text{\ding{51}} & 100 \\
\textbf{3.C.h}   & L-MAPF coordinator with PBS & 338  & 207 & 0.81  & 2 & \text{\ding{55}} & -  \\ 
 \bottomrule

\end{tabular}

\end{table*}

To assess the effectiveness of the proposed approach, we evaluate its performance against three representative baselines covering commonly adopted traffic management strategies in industrial AGV systems. The baselines consist of: (i) the rule-based TM deployed by \emph{Gruppo TecnoFerrari S.p.A.}; (ii) the state-of-the-art TM, i.e., the solution proposed in~\citet{10132864}; and (iii) the priority-based TM, obtained by replacing ABH-CBS with PBS \citep{ma2019searching} within the L-MAPF coordinator. The comparison is conducted on 1.B, 2.C, and 3.C, as identified in the scalability analysis.
Search-based coordination methods surveyed in Section~\ref{relwork} are not included as baselines. Their idealized assumptions, i.e., unit-time actions, single-intersection layouts with symmetric unidirectional lanes, and simplified geometric models, do not match the industrial layouts considered in this work, where AGVs follow fixed paths with heterogeneous traversal times.

\AB{The rule-based TM is included as a benchmark representative of conventional industrial coordination practice. Coordination is governed by a reactive First-Come-First-Served (FCFS) reservation strategy over roadmap segments~\citep{rule_based_traditional}, where each AGV requests access to the next segments of its path and proceeds only if the segments are not in collision with elements of the roadmap layer already reserved for other AGVs. The FCFS mechanism is augmented with handcrafted rule-based control zones that encode localized access constraints for structurally critical areas such as narrow corridors, machine entries, or intersections. Within these zones, standard FCFS reservations may be overridden by deterministic right-of-way rules, stopping conditions, or direction-dependent permissions to ensure safe traversal and avoid blocking situations.
The approach reflects a family of rule-driven coordination strategies designed for deterministic execution and straightforward deployment rather than predictive reasoning or multi-agent optimization. Its inclusion in the study provides a reference point for assessing the impact of replacing prescriptive rules with search-based coordination in non-standardized and spatially constrained environments.}


The state-of-the-art TM introduced in~\citet{10132864} constitutes, to the best of our knowledge, the only existing solution in the literature that can be deployed in non-standardized real industrial environments with high traffic density, where AGVs operate on a predefined roadmap composed of segments with heterogeneous traversal times.
The method integrates a traffic-aware hierarchical path planner with a coordination module based on a time-expanded graph that regulates agent interactions through predictive collision avoidance and precedence assignment. Since the original formulation does not support heterogeneous AGVs, an adaptation is implemented to align with the constraints of the evaluated layouts. In particular, path generation is restricted to roadmap elements admissible for each vehicle class, and replanning is invoked exclusively during task allocation and deadlock resolution to comply with the fixed-path assumption adopted throughout the experiments.

The priority-based TM isolates the contribution of the search component within the L-MAPF coordinator of our TM by replacing ABH-CBS with PBS, a two-level priority-based planner that resolves conflicts exclusively through consistent priority assignments. At the high level, PBS evaluates alternative priority assignments; at the low level, agents are planned sequentially according to the resulting priority structure, with lower-priority agents adjusting their trajectories to remain compatible with higher-priority ones. The implementation complies with Assumption~\ref{FixedPath} and adopts the same timestep, replanning time, and allocation-horizon parameters reported in Table~\ref{tab3}, ensuring consistent evaluation conditions. Constraint reasoning is not supported by PBS, and a deadlock arises whenever no priority assignment can produce feasible trajectories. Deadlocks typically emerge in swap interactions or in narrow areas of the roadmap where path compatibility is structurally restricted. Countermeasures are introduced by assigning distinct goal nodes to active AGVs and by enforcing different priority relations for the first and second task composing each path. Separate priority relations increase the chance of obtaining a feasible priority structure but do not overcome the intrinsic limitations of priority-driven planning. Deadlocks occurring under PBS remain unresolved, because the deadlock detector and handler in the proposed traffic management system rely on CBS-style constraint reasoning, which cannot be translated into the priority-only structure of PBS. As a consequence, the resolution computed by the deadlock detector and handler cannot be exploited by an L-MAPF coordinator that relies solely on priority relations. The limitation is not addressed by designing an alternative strategy for deadlock resolution, since incorporating a separate module tailored specifically to PBS would introduce behaviors and assumptions that diverge from the intended scope of the comparison. Our objective is to evaluate PBS as a pure priority-driven baseline within the L-MAPF coordinator, rather than proposing an alternative framework for traffic management.

To enable a fair comparison across the evaluated traffic management strategies, intervals during which AGVs remain blocked in undetected or unresolved deadlocks are excluded from the computation of throughput, management efficiency, and average flow time. Two of the three baselines do not incorporate deadlock detection or resolution, and including blockage intervals would impose penalties unrelated to the underlying coordination logic. Reported KPIs therefore reflect effective operating periods.


\AB{The comparative analysis shows clear advantages of the proposed TM in each scenario, as reported in Table~\ref{tab5}.}

In Layout 1, which features narrow corridors and strong spatial constraints, search-based conflict resolution provides a clear advantage, generating improvements in throughput of approximately 10\% over the rule-based TM in 1.B.f, 7\% over the state-of-the-art TM in 1.B.g, and 6\% over the priority-based TM in 1.B.h. The results indicate that the proposed traffic management system outperforms both traditional and state-of-the-art rule-based coordination strategies, as rule-driven approaches typically impose rigid precedence relations that become inefficient in such constrained environments. In contrast, the proposed TM resolves conflicts with greater flexibility and prevents the formation of bottlenecks. The performance gap with the priority-based TM can be largely attributed to the specific geometry of Layout 1. Around the wrapping area, multiple AGVs must converge into a confined region, and fixed priority chains often become overly restrictive, producing extended blocking sequences and persistent congestion.
Deadlock behavior reinforces the same trend: every deadlock occurring in 1.B.b is detected and resolved, whereas the rule-based TM and the priority-based TM leave all encountered deadlocks unresolved, and the state-of-the-art TM resolves only two of the three detected cases. The combination of effective conflict resolution and reliable deadlock recovery proves critical in a constrained environment where small disturbances rapidly propagate through interconnected corridors.

In Layout 2, the increase in traffic density and the presence of heterogeneous AGV classes amplify the differences among the evaluated coordination strategies. The rule-based TM in 2.C.f and the priority-based TM in 2.C.h tend to accumulate substantial idle time, especially when vehicles converge on shared areas and intersections. In these conditions, neither handcrafted rules nor priority relations are able to regulate high-density AGV traffic, leading to persistent waiting times and reduced system reactivity. The state-of-the-art TM in 2.C.g alleviates part of this behavior thanks to its predictive components and simple negotiation rules, although deadlocks and long waiting periods still emerge frequently under dense operating conditions. The proposed TM maintains a clearly superior operating profile in this layout. Through ABH-CBS, which incorporates the anytime and EHC strategies, bottlenecks are minimized, and the deadlock-handling mechanism resolves nearly all blocking situations. As a result, throughput increases by roughly 11\% compared to the rule-based TM, 10\% compared to the priority-based TM, and by about 7\% compared to the state-of-the-art TM, while average flow time remains consistently lower. All the baseline strategies leave several AGVs in long-term blocking conditions, underscoring the relevance of explicit and effective deadlock management and coordinated trajectory adaptation in medium-sized facilities.


In the large and regular geometry of Layout 3, where unidirectional tracks dominate and interaction density remains low, all coordination strategies achieve similar throughput in 3.C.f, 3.C.g and 3.C.h, reflecting the intrinsically limited complexity of the environment. Even in this favorable setting, the proposed TM preserves a slight performance edge. Flow time remains marginally lower, and the system resolves the only deadlock encountered during the evaluation.
The rule-based TM and the priority-based TM both experience blocking situations that cannot be recovered, generating one and two unresolved deadlocks respectively. The state-of-the-art TM successfully resolves its single deadlock but still exhibits a modest increase in flow time compared to the proposed solution. 

\color{black}

\begin{figure}[t]
\centering
\captionsetup{justification=justified}
\includegraphics[width=0.48\textwidth]{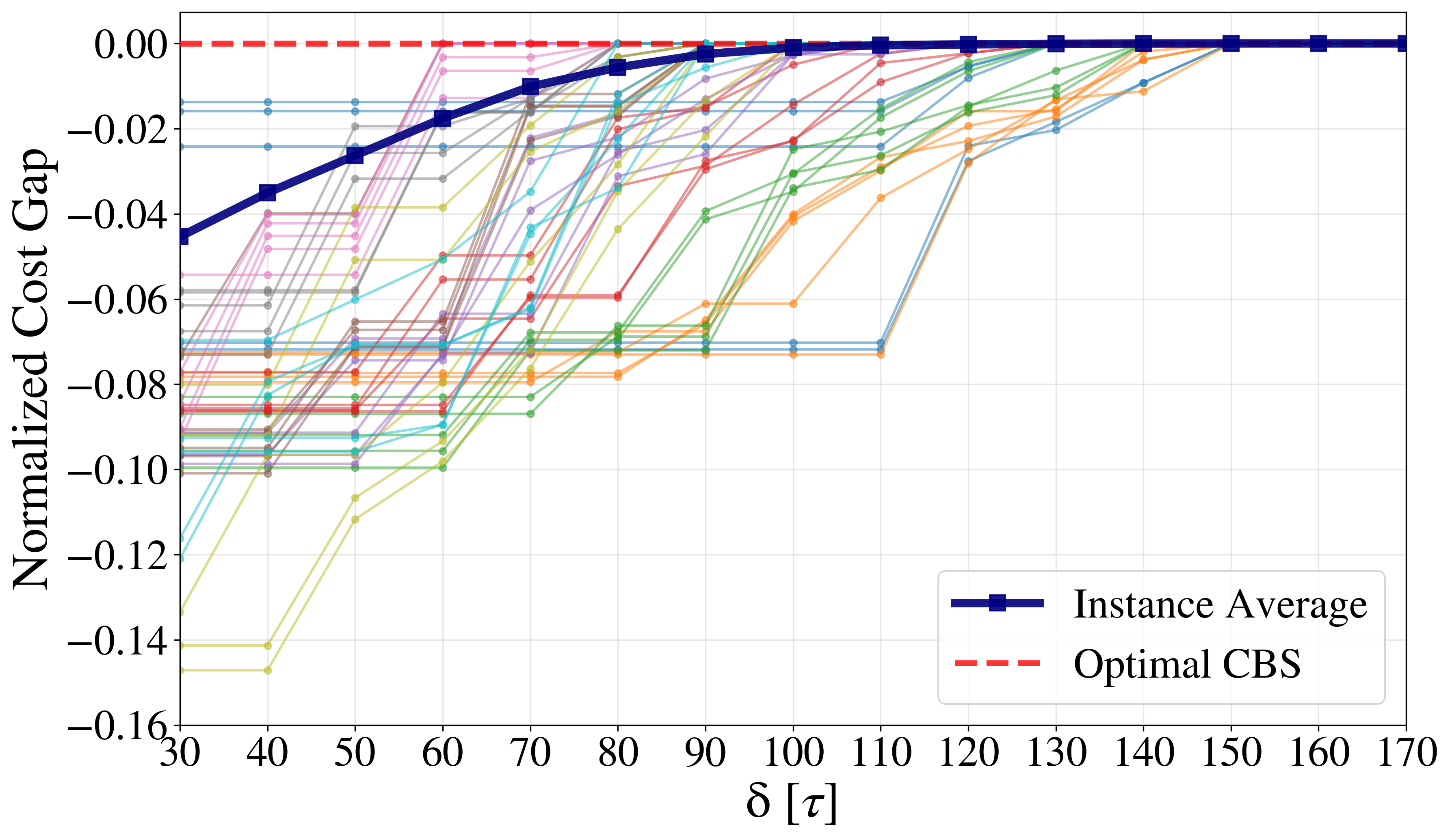}
\caption{\AB{Convergence of ABH-CBS toward the full-horizon CBS solution. Thin colored curves correspond to 50 randomly selected instances, while the thick blue curve reports the average normalized cost gap over all 3982 instances of scenario 1.B.}}
\label{Scenario3conv}
\end{figure}

\color{black}

\subsubsection{Convergence Analysis}
\label{ConvergenceAnalysis}

To empirically assess the convergence properties of ABH-CBS toward the optimal and complete CBS solution, the evolution of the solution cost is evaluated as the time horizon $\delta$ increases. ABH-CBS functions as a bounded-horizon planner whose window is selectively extended within corridor sectors through the EHC mechanism. As $\delta$ grows, the planner refines the partial solution in non-corridor regions; once $\delta$ becomes sufficiently large to cover the full execution horizon of all AGVs, ABH-CBS coincides with the full-horizon formulation of CBS and returns the optimal complete solution.


To examine this behavior in a realistic operational setting, approximately one hour of continuous operation of scenario 1.B is analyzed, yielding 3982 ABH-CBS planning instances. In each instance, ABH-CBS is executed without a timeout, allowing the bounded horizon to expand arbitrarily. During the update of the ABH-CBS solution $\mathcal{H}^{\star}$ in Algorithm~\ref{algcbs}, we record the node cost $\sum_{\gamma^{a} \in \mathcal{H}^{\star}} |\gamma^{a}|(\delta)$ for every attained value of $\delta$. Each cost value is then compared with the CBS reference cost $\sum_{\gamma^{a} \in \mathcal{H}^{\text{CBS}}} |\gamma^a|$, defined as the cost of the CBS solution $ \mathcal{H}^{\text{CBS}}$ computed with the same input configuration of the corresponding instance. The comparison is expressed through a \textit{Normalized Cost Gap (NCG)}, defined as the relative difference between the ABH-CBS cost of the collision-free trajectory at a given $\delta$ and the full CBS cost:

\[
\mathrm{NCG}(\delta) =
\frac{\sum_{\gamma^{a} \in \mathcal{H}^{\star}} |\gamma^{a}|(\delta) - \sum_{\gamma^{a} \in \mathcal{H}^{\text{CBS}}} |\gamma^a|}
     {\sum_{\gamma^{a} \in \mathcal{H}^{\text{CBS}}} |\gamma^a|}.
\]

Figure~\ref{Scenario3conv} reports the convergence curves of the normalized cost gap toward the optimal and complete CBS solution as the time horizon~$\delta$ increases. Thin colored lines correspond to 50 randomly selected instances, while the thick blue line shows the average behavior over all 3982 instances. A clear pattern emerges: as $\delta$ grows, the normalized cost gap increases monotonically and eventually approaches zero, demonstrating convergence of ABH-CBS toward the optimal CBS solution as the time window expands. The bounded-horizon formulation generally provides costs lower than the full CBS cost because, once the window is exceeded, additional conflicts are no longer detected and no further delays are inserted into the partial plan. The observed convergence reflects the design principle underlying ABH-CBS: when operating under real-time constraints, the bounded-horizon formulation focuses computation on resolving imminent conflicts and delivers high-quality solutions within the imposed timeout; as the horizon grows, e.g., in offline or relaxed-time conditions, the search progressively recovers the full CBS structure and converges to the optimal, complete solution.

\begin{figure}[t]
\centering
\captionsetup{justification=justified}
\includegraphics[width=0.48\textwidth]{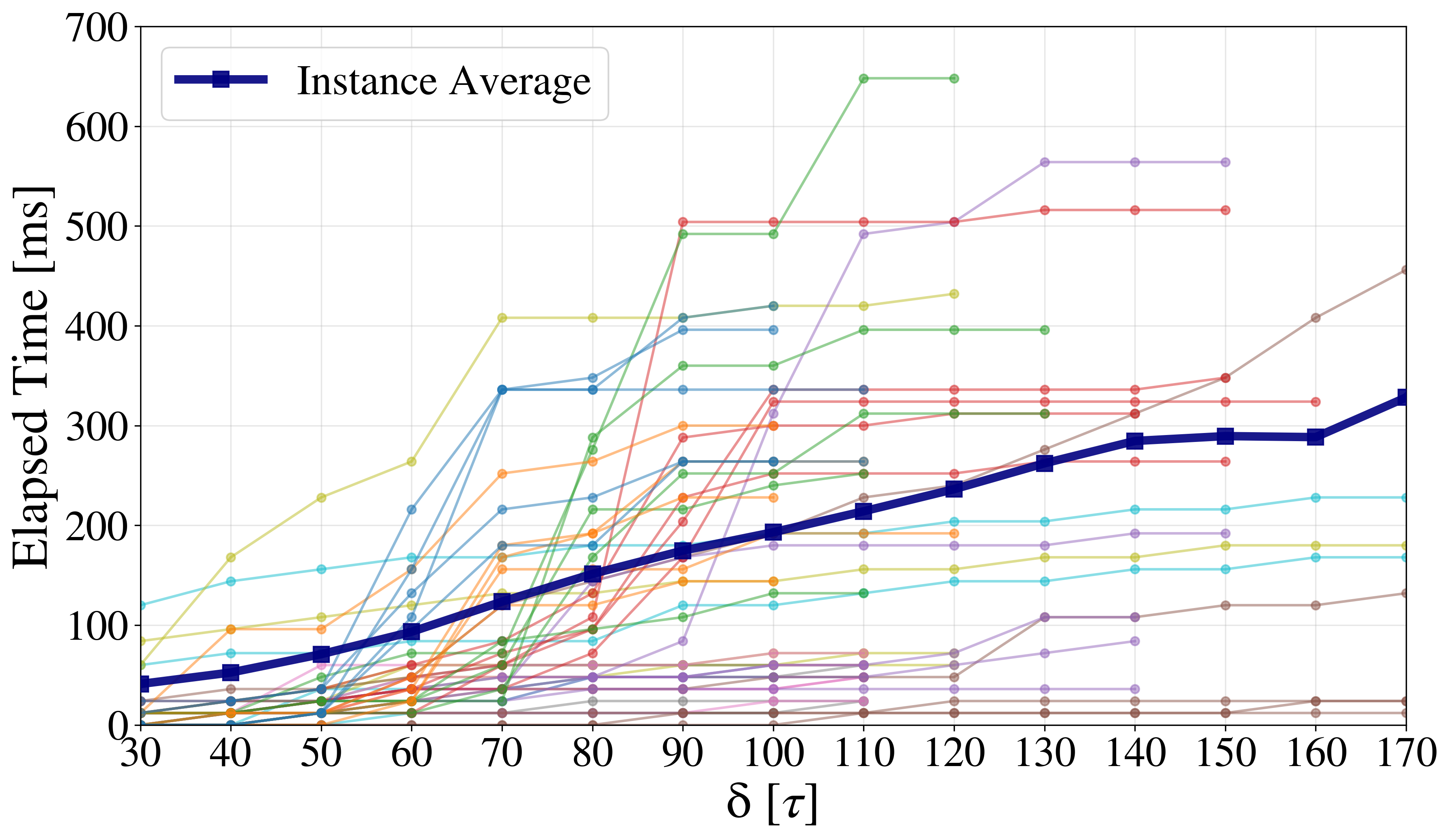}
\caption{\AB{Execution time of ABH-CBS in relation to the bounded-horizon $\delta$. Thin colored lines correspond to 50 randomly selected instances. The thick blue curve reports the average elapsed time computed over all 3982 instances in scenario 1.B for each value of the bounded-horizon parameter.}}
\label{elapsedtime1b}
\end{figure} 

In addition to evaluating convergence in terms of solution cost, the computational profile of ABH-CBS is examined by monitoring execution time as the time horizon $\delta$ increases. For each planning instance, a timer is reset at the start of the ABH-CBS run, and the elapsed time is recorded whenever $\mathcal{H}^{\star}$ is updated during the progressive expansion of the bounded horizon in Algorithm~\ref{algcbs}. The procedure yields, for every attained value of $\delta$, a corresponding execution time that reflects the cumulative computation required to refine the partial plan up to that horizon.

Figure~\ref{elapsedtime1b} reports the resulting execution-time curves. Thin colored lines correspond to 50 randomly selected instances and show the variability of the solver’s computation time as a function of the time horizon $\delta$. The thick blue curve presents the average elapsed time computed over all 3982 instances for each value of the time horizon.

For all reported instances, ABH-CBS computes an initial solution for the base time horizon $\delta'$ within $t^{\sigma} = 250$ ms. Consequently, every run reaches at least the base horizon inside the allowed timeout, which directly accounts for the 100\% valid-solution rate observed in the scalability analysis for scenario~1.B. The average elapsed time grows approximately linearly with the time horizon $\delta$, reflecting the incremental computation required to refine the plan as the bounded window expands. At the same time, execution times vary substantially across instances: some configurations are resolved within a few milliseconds, whereas others exceed $600,\mathrm{ms}$ due to higher conflict density and a correspondingly larger branching effort. The spread in computation times illustrates the dependence of the computational load on the underlying multi-agent interaction pattern, while the overall distribution confirms that ABH-CBS remains compatible with real-time operation.



\subsection{Findings}

The results obtained from the above analyses offer a unified picture of how the proposed system behaves across different operating conditions.

The proposed traffic management system effectively coordinates a heterogeneous AGV fleet on non-standardized industrial roadmaps with narrow bidirectional corridors and reduced maneuvering space, while also delivering consistent performance in more regular and spacious layouts.

Throughput grows with fleet size until each layout reaches the saturation point imposed by geometry and interaction density. Beyond that limit, additional AGVs no longer enhance productivity, although the L-MAPF coordinator continues to handle conflicts within real-time bounds except in the most congested configurations. The allocation-horizon parameter exerts limited influence on system-level performance: a broad range of values yields comparable results, with short horizons working best in irregular layouts and longer horizons remaining advantageous in more structured environments.

The ablation study indicates that both the anytime mechanism and the EHC strategy play a critical role in complex layouts. Disabling either one delays conflict detection, increases deadlock occurrences, and degrades throughput, flow time, and management efficiency. The effect is weaker in large, regular environments but remains beneficial whenever interaction density rises.

The comparative evaluation shows that the proposed system surpasses the rule-based TM, the state-of-the-art TM, and the priority-based L-MAPF variant across all
non-standardized layouts. Anticipation of conflicts, coordinated motion under high-density, and reliable deadlock detection and resolution produce consistently higher throughput and lower travel times in geometrically constrained settings.

The convergence analysis confirms that the ABH-CBS solver approaches the optimal and complete CBS solution as the time horizon expands. Execution-time measurements further show that real-time constraints are met across all instances while preserving the ability to recover the full CBS solution when additional computation is available.
\color{black}

\vspace{-0.3cm}

\section{Conclusions and Outlooks} 
\label{Concl}
In this paper, we present an innovative \ABB{architecture} for efficiently managing high-density AGV traffic in complex industrial environments. By employing a Lifelong Multi-Agent Path Finding strategy, our approach ensures robust and scalable coordination while maintaining locally optimal \ABB{and real-time} performance. Furthermore, our system guarantees safety through the path allocator module compliant with the VDA5050 standard and incorporates real-time deadlock resolution using an enhanced Conflict Based Search algorithm. 
\AB{Extensive validation in real-world layouts shows throughput improvements of up to 11\%, together with enhanced operational continuity and coordination efficiency, when compared with a conventional rule-based system, a state-of-the-art industrial solution, and an alternative L-MAPF method. Improvements are demonstrated in non-standardized settings involving large and heterogeneous AGVs operating on a roadmap, and the results indicate that the proposed traffic management system generalizes effectively to larger, less constrained environments while retaining real-time performance.}

Future work will focus on relaxing the assumption that AGVs must follow the paths established during task assignment. A first direction is the introduction of real-time adaptive path adjustments to alleviate congestion and reduce conflict frequency during execution. Another area of extension concerns online replanning to navigate around obstacles, since the current fixed-path constraint may otherwise result in prolonged waiting. \ab{Building on this perspective, deadlock handling could evolve from the reactive resolution adopted in this work toward proactive prevention, where the early identification of incipient configurations would enable anticipatory re-routing or priority reassignment. A complementary direction is the integration of task assignment and coordination within a unified decision layer, in which demand-aware allocation policies account for the expected traffic state to improve throughput and mitigate congestion.} Further research will also investigate learning-based coordination strategies capable of reproducing the behavior of the proposed system, with the aim of improving scalability and robustness in large and uncertain industrial environments.


\section{Acknowledgments}
The authors gratefully acknowledge the \emph{Gruppo TecnoFerrari S.p.A.}, and particularly Anna Bellodi, Riccardo Benedetti, and Andrea Bargi from the Software Development Office, for their support in the preparation of this paper.

\section{Declaration of Conflicting Interests}
The author(s) declared no potential conflicts of interest with respect to the research, authorship, and/or publication of this article.

\section{Funding}
The author(s) disclosed receipt of the following financial support for the research, authorship, and/or publication of this article: This work was supported by the Socially-acceptable Extended Reality Models and Systems (SERMAS) Project of the European Union’s Horizon Europe Research and Innovation Program (GA n. 101070351).

\section{ORCID iDs}
Alessandro Bonetti https://orcid.org/0009-0008-2068-6165 \\
Silvia Proia https://orcid.org/0000-0002-3801-4745 \\
Simone Guidetti https://orcid.org/0009-0002-1446-1142 \\
Lorenzo Sabattini https://orcid.org/0000-0002-2734-5549

\bibliographystyle{SageH}
\bibliography{biblio.bib}
\appendix


\section{Low-Level Planner}

\begin{algorithm} [h!]
\caption{Low-Level Planner}
\label{Low Level CBS}
\footnotesize
\KwIn{$v_z^{a}, \tau_z^a, v^{g'_,a}, \mathcal{K}^a, \mathcal{O}^{s}, \mathcal{U}^{\setminus a}, \mathcal{M}$}
\KwOut{$\gamma^a$}


$\textnormal{OPEN\_L} \gets \{(v_z
^{a}, \tau_z^a, f_z^a \gets \tau_z^a + h(v_z
^{a},v^{g'_,a}))\}$ \label{inserimento}\\
$\textnormal{CLOSED\_L} \gets \{\}$\label{closedinit}\\
$traceback \gets\{\}\label{tracebackinit}$

\While{$\textnormal{OPEN\_L} \neq \emptyset$}
{
$(v_{i}^{a}, \tau^{\rho,a}, f^{\rho,a}) \gets \text{lowest $f^{\rho,a}$ cost node from OPEN\_L}$\label{lowestcostselect}\\
\If{$v_{i}^{a} = v^{g'_,a}$\label{reconstruct1}}
{
    \Return{$GetTrajectory(traceback,v_{i}^{a}, \tau^{\rho,a})$\label{reconstruct2}}
}
$\textnormal{OPEN\_L} \gets \textnormal{OPEN\_L} \setminus (v_{i}^{a}, \tau^{\rho,a}, f^{\rho,a})$ \label{bho1}\\
$\textnormal{CLOSED\_L} \gets \textnormal{CLOSED\_L} \cup (v_{i}^{a}, \tau^{\rho,a}, f^{\rho,a})$\\
\ForEach{$v_{j}^{a} \in GetNeighbors(\mathcal{K}^a, v_{i}^{a})$ \label{bho2}}
{
$\{v_i^a, v_j^a, \tau^{\rho,a}, D_{i,j}^a\} \gets GetAction(v_i^a, v_j^a,\tau^{\rho,a},\mathcal{K}^a)$ \label{bho3}\\
$\tau^{\zeta,a} \gets \tau^{\rho,a} + D_{i,j}^{a}$\\ 
$f^{\zeta,a} \gets \tau^{\zeta,a} + h(v_j^{a},v^{g'_,a})$\label{bho4}\\

\If{$(v_{j}^{a}, \tau^{\zeta,a}, f^{\zeta,a}) \in \textnormal{CLOSED\_L}$\label{inclosed}}
{$\textnormal{continue}$}

\If{$CheckConstraints(\{v_i^a, v_j^a, \tau^{\rho,a}, D_{i,j}^a\}, \mathcal{M})$ \label{checkconstr}}
{$\textnormal{continue}$}

\If{$\textnormal{\textbf{Algorithm \ref{CheckIfSafe}}}(\{v_i^a, v_j^a, \tau^{\rho,a}, D_{i,j}^a\}, \mathcal{O}^s, \mathcal{U}^{\setminus a})$\label{checkobst}}
{$\textnormal{continue}$}

\If{ $(v_{j}^{a}, \tau^{\zeta,a}, f^{\zeta,a}) \notin \textnormal{OPEN\_L}$\label{checkifinopen}}
{
$traceback[(v_j^a, \tau^{\zeta,a})] \gets(v_i^a,\tau^{\rho,a})$ \label{lowfin1}\\
$\textnormal{OPEN\_L}\gets \textnormal{OPEN\_L} \cup (v_j^a,\tau^{\zeta,a}, f^{\zeta,a})$\label{lowfin2}\\

}
}

}
\Return{$\varnothing$}
\normalsize
\end{algorithm}


As detailed in Algorithm \ref{Low Level CBS} - Low-Level Planner, our space-time A* takes as input the current target vertex $v_z^a$, the corresponding target timestep $\tau_z^a$, the next goal vertex $v^{g'_,a}$, the path subgraph $\mathcal{K}^a$, the constraint set $\mathcal{M}$ defined by the high-level planner, and the static and dynamic obstacle sets $\mathcal{O}^s, \mathcal{U}^{\setminus a}$.
Algorithm \ref{Low Level CBS} is initialized with a start node $(v_z^{a}, \tau_z^a, f_z^a)$, where the total cost $f_z^a$ is given by the sum of $\tau_z^a$ and the heuristic value $h(v_z^a,v^{g'_,a})$. Specifically, the heuristic $h(v_i^a,v^{g'_,a})$ estimates the cost to reach $v^{g'_,a}$ from a generic vertex $v_i^a \in \pi^a$ and is defined as:
\begin{equation}
    h(v_i^a,v^{g'_,a}) = \sum_{m=i}^{L^{a}-1} D_{m,m+1}^{a}
\end{equation}
where $D_{m,m+1}^a$ represents the timestep duration of traversing the edge $e_{m,m+1}^a$. The start node is inserted into a list, referred to as the OPEN\_L list, which contains the nodes to be explored (Algorithm~\ref{Low Level CBS}, line~\ref{inserimento}). At this step, Algorithm \ref{Low Level CBS} initializes another list, referred to as the CLOSED\_L list, which is used to prevent redundant expansions by storing processed nodes (Algorithm~\ref{Low Level CBS}, line~\ref{closedinit}). In addition, a dictionary named \textit{traceback} is introduced to facilitate trajectory reconstruction upon reaching the goal vertex $v^{g'_,a}$ (Algorithm~\ref{Low Level CBS}, line~\ref{tracebackinit}).

During execution, while OPEN\_L list is not empty, Algorithm \ref{Low Level CBS} selects the current node $(v_{i}^{a}, \tau^{\rho,a}, f^{\rho,a})$ with the lowest value of $f^{\rho,a}$ from the OPEN\_L list for expansion (Algorithm~\ref{Low Level CBS}, line~\ref{lowestcostselect}). If $v_{i}^{a}$ corresponds to the goal vertex $v^{g'_,a}$, Algorithm \ref{Low Level CBS} terminates successfully invoking the \textit{GetTrajectory} function, which reconstructs the trajectory $\gamma^a$ by backtracking through the \textit{traceback} dictionary from $v^{g'_,a}$ to $v_z^a$ (Algorithm~\ref{Low Level CBS}, lines~\ref{reconstruct1}-\ref{reconstruct2}). Otherwise, Algorithm \ref{Low Level CBS} removes the current node from OPEN\_L list, adds it to the CLOSED\_L list, and retrieves the neighbour vertices $v_j^a$, with $j = i \vee i+1$, by using the \textit{GetNeighbors} function applied to the path subgraph $\mathcal{K}^a$ (Algorithm~\ref{Low Level CBS}, lines~\ref{bho1}-\ref{bho2}).

For each neighbour vertex $v_j^a$, Algorithm \ref{Low Level CBS} calculates the corresponding action $\{v_i^a, v_j^a, \tau^{\rho,a}, D_{i,j}^a\}$, the next timestep $\tau^{\zeta,a}$, and the cost $f^{\zeta,a}$ using the heuristic $h$ (Algorithm~\ref{Low Level CBS}, lines~\ref{bho3}-\ref{bho4}). Algorithm \ref{Low Level CBS} first checks whether the neighbour node $(v_{j}^{a}, \tau^{\zeta,a}, f^{\zeta,a})$ is in the CLOSED\_L list, indicating that it has already been expanded in the previous iterations (Algorithm~\ref{Low Level CBS}, line~\ref{inclosed}). If this condition is not verified, Algorithm \ref{Low Level CBS} proceeds to validate the action $\{v_i^a, v_j^a, \tau^{\rho,a}, D_{i,j}^a\}$ against the constraint set $\mathcal{M}$, using the \textit{CheckConstraints} function (Algorithm~\ref{Low Level CBS}, line~\ref{checkconstr}). If the action satisfies the constraints, Algorithm \ref{Low Level CBS} then verifies its safety by checking for collisions with static or dynamic obstacles using Algorithm \ref{CheckIfSafe} - Check Obstacles Collision (Algorithm~\ref{Low Level CBS}, line~\ref{checkobst}).

\begin{algorithm} [t!]
\caption{Check Obstacles Collision}
\label{CheckIfSafe}
\footnotesize
\KwIn{$\{v_i^a, v_j^a, \tau^a, D_{i,j}^a\}, \mathcal{O}^s, \mathcal{U}^{\setminus a}$}
\KwOut{Boolean}

\If{$v_i^a \neq v_j^a$ \label{static1}}
{
$e_{i,i+1}^{a} \gets GetEdge(v_{i}^a,v_{i+1}^a)$\\
\If{$e_{i,i+1}^a \in \mathcal{O}^s$\label{static2}}
{\Return{\textnormal{true}}}
}

\ForEach{$u^b \in \mathcal{U}^{\setminus a}$\label{dynamic1}}
{
\ForEach{$\{v_m^b, v_{m+1}^b, \tau^b, D_{m,m+1}^b\} \in u^b$}
{
\If{$\textnormal{\textbf{Algorithm \ref{checkspacetimecollisions}}}(\{v_i^a, v_j^a, \tau^a, D_{i,j}^a\},\{v_m^b, v_{m+1}^b, \tau^b, D_{m,m+1}^b\})$\label{dynamic2}}
{
\Return{\textnormal{true}}
}
}
}

\Return{\textnormal{false}}
\normalsize
\end{algorithm}

\begin{algorithm} [t!]
\caption{Check Space-Time Collision}
\label{checkspacetimecollisions}
\footnotesize
\KwIn{$\{v_i^a, v_j^a, \tau^a, D_{i,j}^a\},\{v_m^b, v_n^b, \tau^b, D_{m,n}^b\}$}
\KwOut{Boolean}
\If{$\max(\tau^{a}, \tau^{b}) \leq \min(\tau^{a} + D_{i,j}^a, \tau^{b} + D_{m,n}^b) \label{overlappingt}$}
{
\eIf{$v_i^a = v_j^a$}
{
$\mathcal{D}_{i}^a \gets GetCollisionSet(v_i^a) \label{collsionvertex}$ \\
\eIf{$v_m^b = v_n^b$}
{\label{a1}
\If{$v_m^b \in \mathcal{D}_{i}^a$}
{
\Return{\textnormal{true}}
}
}{
$e_{m,m+1}^b \gets GetEdge(v_m^b,v_{m+1}^b)$\\
\If{$e_{m,m+1}^b \in \mathcal{D}_{i}^a$}
{
\Return{\textnormal{true}}
}
\label{a2}
}

}
{
$\mathcal{D}_{i,i+1}^a \gets GetCollisionSet(v_i^a,v_{i+1}^a) \label{collsionedge}$\\
\eIf{$v_m^b = v_n^b$}
{\label{b1}
\If{$v_m^b \in \mathcal{D}_{i,i+1}^a$}
{
\Return{\textnormal{true}}
}
}{
$e_{m,m+1}^b \gets GetEdge(v_m^b,v_{m+1}^b)$\\
\If{$e_{m,m+1}^b \in \mathcal{D}_{i,i+1}^a$}
{
\Return{\textnormal{true}}
}
\label{b2}
}
}
}

\Return{\textnormal{false}}
\end{algorithm}

Specifically, Algorithm \ref{CheckIfSafe} is employed to verify whether the move or wait action (see Section \ref{PathandTrajectoryDefinition}) from the vertex $v_i^a$ to its neighbour $v_j^a$, where $j = i \vee i+1$, avoids static and dynamic obstacles. First, if the considered action is a move, Algorithm \ref{CheckIfSafe} checks if the corresponding edge $e_{i,i+1}^{a}$ belongs to the set of static obstacles $\mathcal{O}^s$ (Algorithm~\ref{CheckIfSafe}, lines~\ref{static1}-\ref{static2}). If this condition is not satisfied, the action can be safely executed without colliding with static obstructions. The check then proceeds to dynamic obstacles. In particular, if the action $\{v_i^a, v_j^a, \tau^a, D_{i,j}^a\}$ results in a space-time collision with any other allocated trajectory $u_b \in \mathcal{U}^{\setminus a}$, Algorithm \ref{CheckIfSafe} asserts that the action is unsafe (Algorithm~\ref{CheckIfSafe}, lines~\ref{dynamic1}-\ref{dynamic2}). 

The space-time collision check is performed using Algorithm \ref{checkspacetimecollisions} - Check Space-Time Collision, which evaluates whether the edges or vertices involved in two move or wait actions from different AGV trajectories are in collision with overlapping timestep intervals. Let $\{v_{i}^a, v_{j}^a, \tau^a, D_{i,j}^a\}$ be the action of the $a$-th AGV and let $\{v_m^b, v_n^b, \tau^b, D_{m,n}^b\}$ be the action of the $b$-th AGV, with $n = m \vee m + 1$. Algorithm \ref{checkspacetimecollisions} first checks if the two actions have overlapping timestep intervals (Algorithm~\ref{checkspacetimecollisions}, line~\ref{overlappingt}). Then, it retrieves the collision set $\mathcal{D}_{i}^{a}$ for the vertex $v_{i}^a$ if $\{v_{i}^a, v_{j}^a, \tau^a, D_{i,j}^a\}$ is a wait action, or $\mathcal{D}_{i,i+1}^a$ for the edge $e_{i,i+1}^a$ if it is a move action (Algorithm~\ref{checkspacetimecollisions}, line~\ref{collsionvertex} and line~\ref{collsionedge}). Next, Algorithm \ref{checkspacetimecollisions} determines whether the vertex $v_{m}^b$ or the edge $e_{m,m+1}^b$ is contained within the retrieved collision set (Algorithm~\ref{checkspacetimecollisions}, lines~\ref{a1}-\ref{a2} and lines~\ref{b1}-\ref{b2}). If this condition is met, $\{v_i^a, v_j^a, \tau^a, D_{i,j}^a\}$ and $\{v_m^b, v_n^b, \tau^b, D_{m,n}^b\}$ are in space-time collision; otherwise, they are collision-free. Hence, if the action avoids static and dynamic obstacles, Algorithm \ref{Low Level CBS} checks if the neighbour node $(v_{j}^{a}, \tau^{\zeta,a}, f^{\zeta,a})$ is absent from the OPEN\_L list (Algorithm~\ref{Low Level CBS}, line~\ref{checkifinopen}). If this condition holds, \textit{traceback} is updated and $(v_{j}^{a}, \tau^{\zeta,a}, f^{\zeta,a})$ is added to OPEN\_L list (Algorithm~\ref{Low Level CBS}, lines~\ref{lowfin1}-\ref{lowfin2}).
Algorithm \ref{Low Level CBS} ultimately returns the trajectory $\gamma^a$ for the $a$-th active AGV compliant with constraints and static and dynamic obstacles or an empty sequence if no such trajectory exists.

\color{black}
\vspace{-0.4cm}
\section{Nomenclature}

\subsection*{AGV Model}
\begin{tabular}{ll}
$N^A$ & Number of AGVs \\
$N^B$ & Number of AGV battery chargers \\
$\mathcal{A}$ & Set of active AGVs \\
$x$ & AGV x-coordinate \\
 $y$ & AGV y-coordinate \\
 $\theta$ & AGV angle \\
$q = [x,y,\theta]^\top$ & AGV pose \\
 $\mathcal{C}$ & Set of all AGV classes \\
\end{tabular}

\begin{tabular}{ll}
 $C$ & AGV class \\
 $N^{C}$ & Number of AGV classes \\
$\phi$ & Convex polygonal footprint of $C$ \\
$f(q,\dot{q})=0$ & Kinematic constraints of $C$\\
\end{tabular}

\subsection*{Environment Model}

\begin{tabular}{ll}
$\mathcal{G}^R$ & Roadmap layer graph\\
$\mathcal{V}^R$ & Set of roadmap layer vertices \\
$\mathcal{E}^R$ & Set of roadmap layer edges \\
$p$ & Location \\
$v$ & Vertex   \\
$N^{V}$ & Number of locations/vertices \\
$r$ & Segment \\
$e$ & Directed edge \\
$N^{E}$ & Number of segments/edges  \\
$\omega$ & Traversal time of edge $e$ (edge weight) \\
$D$ & Traversal timesteps duration \\
$\mathcal{G}^{C}$ & Subgraph of roadmap layer relative to $C$ \\
$g$ & Generic roadmap element (vertex or edge)  \\
$\mathcal{D}$ & Collision set of $g$ \\
$\mathcal{Y}$ & Collision set computed with Algorithm \ref{alg12} \\
$\mathcal{J}$ & Sampled poses for $g$\\
$\Phi$ & Projection of $\phi$ \\
$\mathcal{G}^{IS}$ & Topological layer graph\\
$\mathcal{V}^{IS}$ & Set of topological layer vertices\\
$\mathcal{V}^{IS}$ & Set of topological layer edges\\
$S$ & Sector \\
$N^{S}$ & Number of sectors \\
\end{tabular}

\subsection*{Paths and Trajectories}

\begin{tabular}{ll}

$v^{s}$ & Starting vertex \\
$v^{g}$ & Goal vertex \\
$v^{g'}$ & Next goal vertex \\
$\pi^{M}$ & Main path from $v_{s}$ to $v_{g}$ \\
$\pi^{N}$ & Extension path from $v_{g}$ to $v^{g'}$ \\
$\pi$ & Fixed path \\
$L^{M}$ & Length of $\pi^{M}$ \\
$L^{N}$ & Length of $\pi^{N}$\\
$L$ & Length of $\pi$ \\
$\Omega$ & Total traversal cost of $\pi$ \\
$\gamma$ & Trajectory \\
$N^{\gamma}$ & Number of actions in $\gamma$\\
\end{tabular}

\subsection*{L-MAPF Coordinator}

\begin{tabular}{ll}
$\tau$ & Timestep \\ 
$\eta$ & Replanning time \\
$\delta'$ & Base time horizon \\
$\delta''$ & Horizon increment \\
$t^\sigma$ & Timeout\\
$\mathcal{V}^z$ & Set of target vertices \\
$\mathcal{T}^{z}$ & Set of target timesteps \\
$\mathcal{V}^{g'}$ & Set of next goal vertices \\
$\mathcal{K}$ & Path subgraph of $\pi$ \\
$\mathcal{K}^{\text{tot}}$ & Set of all path subgraphs \\
$o$ & Static obstacle \\
$N^{O}$ & Number of static obstacles \\
 $\mathcal{O}$ & Set of obstructed edges by $o$  \\
 $\mathcal{O}^s$ & Set of static obstacles \\
\end{tabular}

\begin{tabular}{ll}
$\mathcal{O}^d$ & Set of dynamic obstacles \\
$\xi$ & Extended corridor\\
$\mathcal{S}$  & Set of extended corridors \\
$J$ & Objective function (Sum of Costs) \\
\end{tabular}

\subsection*{ABH-CBS}
\begin{tabular}{ll}
$t^{\iota}$ & Initial computation time \\
$\lambda$ & CBS node \\
$\mathcal{H}$ & Set of trajectories \\
$\delta$ & Time horizon \\
$\mathcal{N}$ & Set of horizons \\
$\mu$ & Constraint \\
$\mathcal{M}$ & Set of constraints
\end{tabular}

\subsection*{Path Allocator}
\begin{tabular}{ll}
$\epsilon$ & Allocation horizon \\
$w$ & Allocated queue \\
$\mathcal{W}$ & Set of allocated queues \\
$u$ & Allocated trajectory\\
$\mathcal{U}$ & Set of allocated trajectories

\end{tabular}

\subsection*{Deadlock Detector and Handler}
\begin{tabular}{ll}
$\mathcal{G}^{P}$ & Precedence graph \\
$\mathcal{V}^{P}$ & Precedence vertex set \\
$\mathcal{E}^{P}$ & Precedence edge set \\
$\mathcal{B}$ & Set of AGVs involved in deadlock \\
\end{tabular}

\subsection*{Superscripts}

\begin{tabular}{ll}
$a,b$ & AGV in $\mathcal{A}$ \\
$m,n$ & Elements in $\mathcal{G}^{R}$ \\
$b,d$ & Elements in $\mathcal{J}$ \\
$I$ & Initial value\\
$F$ & Final value \\
$s$ & Starting value \\
$g$ & Goal value\\
$z$ & Target item\\
$r$ & Root for $\lambda$ \\ 
$p$ & Parent for $\lambda$\\
$c$ & Child for $\lambda$ \\
$\star$ & Optimal value\\
\end{tabular}

\subsection*{Subscripts}
\begin{tabular}{ll}
$i,j,m,n$ & Iteration numbers for $\pi$ \\
$k$ & Iteration number for $\gamma$, $\mathcal{C}$, $\mathcal{O}$, and $\mathcal{M}$ \\
$u$ & Iteration numbers for locations \\
$h$ & Iteration numbers for segments \\
$z$ & Target index for $\pi$ \\
$l$ & Currently occupied index for $\pi$ \\
\end{tabular}

\color{black}

\end{document}